\documentclass{article}

\usepackage{microtype}
\usepackage{graphicx}
\usepackage{subfigure}
\usepackage{booktabs} 
\usepackage{mathdots}

\usepackage{hyperref}
\usepackage{multirow}
\usepackage{adjustbox}
\usepackage{arydshln}

\usepackage[accepted]{icml2025}

\usepackage{amsmath}
\usepackage{amssymb}
\usepackage{mathtools}
\usepackage{amsthm}

\usepackage[capitalize,noabbrev]{cleveref}
\usepackage{symbols}

\usepackage{amsmath,amsfonts,bm}

\def\1{\bm{1}}

\def\rvw{{\mathbf{w}}}
\def\rvx{{\mathbf{x}}}
\def\rvy{{\mathbf{y}}}

\def\vb{{\bm{b}}}

\def\ve{{\bm{e}}}

\def\vq{{\bm{q}}}

\def\vu{{\bm{u}}}
\def\vv{{\bm{v}}}

\def\vx{{\bm{x}}}
\def\vy{{\bm{y}}}

\def\mD{{\bm{D}}}

\def\mH{{\bm{H}}}
\def\mI{{\bm{I}}}

\def\mK{{\bm{K}}}

\def\mM{{\bm{M}}}

\def\mO{{\bm{O}}}

\def\mQ{{\bm{Q}}}

\def\mS{{\bm{S}}}

\def\mU{{\bm{U}}}
\def\mV{{\bm{V}}}
\def\mW{{\bm{W}}}
\def\mX{{\bm{X}}}
\def\mY{{\bm{Y}}}

\DeclareMathAlphabet{\mathsfit}{\encodingdefault}{\sfdefault}{m}{sl}
\SetMathAlphabet{\mathsfit}{bold}{\encodingdefault}{\sfdefault}{bx}{n}

\def\gD{{\mathcal{D}}}

\def\gL{{\mathcal{L}}}

\def\gR{{\mathcal{R}}}

\def\sR{{\mathbb{R}}}

\newcommand{\R}{\mathbb{R}}

\DeclareMathOperator*{\argmin}{arg\,min}

\DeclareMathOperator{\Tr}{Tr}

\newcommand{\pa}[1]{\left( #1 \right)}
\newcommand{\bracks}[1]{\left[ #1 \right]}
\newcommand{\set}[1]{\left\{ #1 \right\}}
\newcommand{\inner}[2]{\left\langle #1,\, #2 \right\rangle}
\newcommand{\abs}[1]{\lvert #1 \rvert}

\usepackage{thmtools}
\usepackage{thm-restate}
\theoremstyle{plain}
\newtheorem{theorem}{Theorem}[section]

\newtheorem{lemma}[theorem]{Lemma}

\theoremstyle{definition}

\theoremstyle{remark}

\newenvironment{proofsketch}{\textit{Proof sketch.}}{\hfill$\square$}

\makeatletter
\DeclareRobustCommand{\pdot}{\mathbin{\mathpalette\pdot@\relax}}
\newcommand{\pdot@}[2]{%
  \ooalign{%
    $\m@th#1\circ$\cr
    \hidewidth$\m@th#1\cdot$\hidewidth\cr
  }%
}
\makeatother


\usepackage{listings}
\usepackage{xcolor}
\definecolor{codegray}{rgb}{0.95,0.95,0.95}

\lstset{
  language=Python,
  backgroundcolor=\color{codegray},
  basicstyle=\scriptsize\ttfamily,
  breaklines=true,
  frame=single,
  numbersep=5pt,
  framesep=2mm
}

\definecolor{codegray}{gray}{0.95}
\definecolor{keywordblue}{rgb}{0.13,0.29,0.53}
\definecolor{stringred}{rgb}{0.58,0,0.22}
\definecolor{commentgreen}{rgb}{0,0.5,0}

\definecolor{darkred}{rgb}{0.7,0.1,0.1}
\definecolor{darkgreen}{rgb}{0.1,0.7,0.1}
\definecolor{cyan}{rgb}{0.7,0.0,0.7}
\definecolor{dblue}{rgb}{0.2,0.2,0.8}
\definecolor{maroon}{rgb}{0.76,.13,.28}
\definecolor{burntorange}{rgb}{0.81,.33,0}
\definecolor{tealblue}{rgb}{0.212,0.459, 0.533}

\definecolor{mypink}{rgb}{0.93359375, 0.62109375, 0.83984375}

\definecolor{pp}{rgb}{0.43921569, 0.18823529, 0.62745098}
\definecolor{rr}{rgb}{0.5254902 , 0.00784314, 0.12941176}
\definecolor{bb}{rgb}{0.09019608, 0.23529412, 0.37647059}
\definecolor{yy}{rgb}{0.49803922, 0.3372549 , 0.0}
\definecolor{gg}{rgb}{0.02352941, 0.3372549 , 0.17647059}

\newcommand{\bsla}{\textit{$\gR_{\tt ill}$~Only}\xspace}
\newcommand{\bslc}{\textit{Opt $\kappa$}\xspace}
\newcommand{\bslb}{\textit{IMMA}\xspace}

\usepackage{thmtools}
\usepackage{thm-restate}

\definecolor{fail}{RGB}{236, 120, 115}
\definecolor{succ}{RGB}{119, 205, 255}
\newcommand{\myparagraph}[1]{\vspace*{0pt}{\bf\noindent #1}}

\RequirePackage[shortlabels,inline]{enumitem}

\icmltitlerunning{Model Immunization from a Condition Number Perspective}

\begin{document}

\twocolumn[
\icmltitle{Model Immunization from a Condition Number Perspective}

\icmlsetsymbol{equal}{*}

\begin{icmlauthorlist}
\icmlauthor{Amber Yijia Zheng}{equal,purdue}
\icmlauthor{Cedar Site Bai}{equal,purdue}
\icmlauthor{Brian Bullins}{purdue}
\icmlauthor{Raymond A. Yeh}{purdue}
\end{icmlauthorlist}

\icmlaffiliation{purdue}{Department of Computer Science, Purdue University}

\icmlcorrespondingauthor{Raymond A. Yeh}{rayyeh@purdue.edu}

\vskip 0.3in
]

\printAffiliationsAndNotice{\icmlEqualContribution}

\begin{abstract}
Model immunization aims to pre-train models that are difficult to fine-tune on harmful tasks while retaining their utility on other non-harmful tasks. Though prior work has shown empirical evidence for immunizing text-to-image models, the key understanding of when immunization is possible and a precise definition of an immunized model remain unclear. In this work, we propose a framework, based on the condition number of a Hessian matrix, to analyze model immunization for linear models. Building on this framework, we design an algorithm with regularization terms to control the resulting condition numbers after pre-training. Empirical results on linear models and non-linear deep-nets demonstrate the effectiveness of the proposed algorithm on model immunization. The code is available at~\url{https://github.com/amberyzheng/model-immunization-cond-num}.
\end{abstract}

\section{Introduction}
Model immunization, recently proposed by~\citet{zheng2024imma}, studies how to pre-train a model that is more difficult to fine-tune on harmful content, but not others. The aim is to mitigate the risk of misuse~\cite{brundage2018malicious,marchal2024generative} associated with open-sourced models by immunizing them before they are released to the public.

\citet{zheng2024imma} focus on immunizing text-to-image models, where they formulate immunization as a bi-level optimization. Empirically, they show that pre-trained diffusion models that undergo immunization are more difficult to fine-tune on a given harmful concept dataset. To quantify this difficulty, they compare the generation quality of models with and without immunization after a fixed number of fine-tuning iterations.
While the empirical results are promising, a definition of an immunized model and the circumstances that make immunization possible remain unclear.

To tackle this issue, we propose a framework to study model immunization using the condition number~\cite{gloub1996matrix}. The effectiveness of immunization can be characterized by the condition number of the Hessian matrix. When using gradient-based methods during fine-tuning, a condition number closer to one indicates faster convergence~\cite{boyd2004convex}, \ie, easier to fine-tune. With this perspective, we observe that the existence of an effective immunization for linear models is related to the angle between the singular vectors of the harmful fine-tuning dataset's covariance matrix and the pre-training dataset's covariance matrix.

From this condition number perspective, we propose an immunization algorithm to find such a model. In detail, we propose two additional terms to regularize the condition number during pre-training. Each of the introduced regularization terms can be shown to ensure a monotonic increase/decrease of the condition number under gradient updates. 

Beyond the theoretical results, we empirically validate the proposed algorithm on linear models for regression and image classification tasks. Lastly, we conduct experiments using the proposed algorithm on non-linear models, \ie, deep-nets. Despite the gap in theory, we observe that the proposed approach remains effective at model immunization across ResNet~\cite{he2016deep} and ViT~\cite{dosovitskiy2020image}.

{\bf\noindent Our contributions are summarized as follows:} %
\begin{itemize}[noitemsep,leftmargin=*,topsep=0em]
    \item We introduce a framework based on the condition number to study the task of model immunization. This framework leads to a concrete definition of an immunized model along with a novel experiment setup and evaluation metric to compare the quality of different immunization techniques.
    \item We propose regularizers to maximize/minimize the condition number, with a guaranteed monotonic increase/decrease when updated with the gradient-based method.
    \item Together with the task objective and regularizers, we demonstrate that the proposed algorithm effectively immunizes linear models and deep-nets on regression/image classification tasks.
\end{itemize}

\section{Preliminaries}
This section provides the background of the condition number and its connection to gradient descent. Additionally, we briefly review transfer learning~\cite{zhuang2020comprehensive}, as it can be a technique for misusing open-source models.

\myparagraph{Condition number and convergence of gradient descent.}
Given a general matrix $\mS$, the condition number~\cite{gloub1996matrix} is defined as 
\bea\label{eq:steepest_descent}
\kappa(\mS) \triangleq \norm{\mS}_2\norm{\mS^{\dagger}}_2 = \sigma_\mS^{\tt max} / \sigma_\mS^{\tt min},
\eea
where $\dagger$ is the pseudoinverse and $\sigma_\mS$ corresponds to the max/min singular value of $\mS$. The condition number is related to the convergence rate of gradient-based algorithms.

Consider an optimization problem $\min_\rvw \gL(\rvw)$ where $\gL$ is strongly convex and has a Hessian $\nabla^2 \gL$ with max/min singular values denoted as $\sigma^{\tt max/min}$. In this case, 
the constant step-size steepest descent algorithm has a convergence rate~\cite{bubeck2015convex} of the following  
\bea
    \|\rvw_t - \rvw^\ast\|^2 \leq \pa{ 1 - \frac{\sigma^{\tt min}}{\sigma^{\tt max}} }^t \|\rvw_0 - \rvw^\ast\|^2,
\eea
where $\rvw^\ast$ denotes the optimal solution, and $\rvw^t$ denotes the steepest descent iterate at step $t$. We can observe that a larger condition number corresponds to a slower convergence.

\myparagraph{Condition number regularization.}
\citet{Nenov2024smoothsailing} proposed a regularizer for minimizing the condition number of some general matrix $\mS$
\bea\label{eq:r_well}
\gR_{\tt well}(\mS) = \frac{1}{2}\norm{\mS}_2^2 - \frac{1}{2p} \norm{\mS}^2_F,
\eea
in which $p$ is the minimum dimension of $\mS$, 
and the norms correspond to the spectral norm and Frobenius norm. They showed that $\gR_{\tt well}(\mS)$ is a valid regularizer by proving its nonnegativity, and is an upper bound on $\log\pa{\kappa\pa{\mS}}$. In addition, they showed that $\gR_{\tt well}(\mS)$ is differentiable under some mild conditions, and if updated with gradient descent, it is guaranteed to decrease the condition number monotonically. %
See Appendix \ref{app:r1thm} for the exact statements. 

Different from~\citet{Nenov2024smoothsailing}, we propose a differentiable regularizer that is guaranteed to {\bf increase} the condition number as an upper bound on $1/{\log\pa{\kappa\pa{\mS}}}$.  For model immunization, instead of a general matrix $\mS$, we need to consider the regularization of the Hessian of linear models composed of a feature extractor and a classifier, while preserving their differentiability and monotonicity guarantees during gradient updates to the feature extractor.

\myparagraph{Transfer learning via linear probing.} In this work, we focus on the transfer learning method of linear probing. Given a pre-trained feature extractor $f_{\theta}: \sR^{D_{\tt in}} \rightarrow \sR^{D_{\tt hid}}$, linear probing learns an a linear classifier $h_\rvw: \sR^{D_{\tt hid}} \rightarrow \sR^{D_{\tt out}}$ over the target dataset $ \gD =\{(\vx, \vy)\}$ using the frozen feature extractor $f_{\theta}$. This model learning is formulated as the following optimization problem
\bea\label{eq:linear_prob}
\min_\rvw \gL(\gD,\rvw, \theta) \triangleq \min_\rvw \sum_{(\vx, \vy) \in \gD} \ell(h_\rvw \circ f_{\theta}(\vx), \vy)
\eea
where $\ell$ denotes a suitable loss function, \eg, cross-entropy.
By keeping $\theta$ fixed, the model leverages features learned from pre-training task and transfers them to the target task. 
This approach is %
effective when the target dataset is too small to train a model from scratch.

\section{Immunization with Condition Number}\label{sec:immu_cond_def}
The goal of model immunization is to learn a pre-trained model $g_\omega \circ f_{\theta^{\tt I}}$, consisting of a classifier $g_\omega$ and an immunized feature extractor $f_{\theta^{\tt I}}$, such that fine-tuning $f_{\theta^{\tt I}}$ on a harmful task is difficult, but not for other tasks.  The model should also maintain a good pre-training task performance. Specifically, we study the setting when a bad actor uses linear probing on a pre-trained linear feature extractor with gradient descent.

\myparagraph{Immunization setting.} We denote a pre-training dataset as $\gD_{\tt P} = \{(\rvx, \rvy)\}$ and a harmful dataset as $\gD_{\tt H} = \{(\rvx, \tilde\rvy)\}$ where $\rvx \in \sR^{\tt D_{\tt in}}$. The bad actor performs linear probing using $\gD_{\tt H}$ following~\equref{eq:linear_prob} with an $\ell_2$ loss. We will focus our analysis on linear pre-trained feature extractor without dimensionality reduction, \ie,  $f_{\theta} \triangleq \rvx^\top\theta$  with $\theta \in \sR^{D_{\tt in} \times D_{\tt in}}$. 

\begin{restatable}{definition}{immudef}
\label{def:immu}
Under this setting, a model is said to be \textit{immunized} if it satisfies the following:

    {\bf (a)} It is more difficult to apply linear probing on the harmful task $\gD_{\tt H}$ using the immunized feature extractor $f_{\theta^{\tt I}}$ than directly on the input data, \ie, 
    \bea \kappa(\nabla^2_\rvw\gL(\gD_{\tt H}, \rvw, \theta^{\tt I})) \gg \kappa(\nabla^2_\rvw\gL(\gD_{\tt H}, \rvw, \mI)), 
    \eea
     where $\mI$ denotes the identity matrix.

{\bf (b)} It is not more difficult to apply linear probing on other tasks. As there is only one other task $\gD_{\tt P}$, an immunized feature extractor should have 
    \bea\kappa(\nabla^2_\omega\gL(\gD_{\tt P}, \omega, \theta^{\tt I})) \leq  \kappa(\nabla^2_\omega\gL(\gD_{\tt P}, \omega, \mI)).
    \eea
Note: we use $\omega$ to denote the classifier parameters of the pre-training task and $\rvw$ for the harmful task.

{\bf (c)} The immunized model should maintain a competitive task performance on the pre-training dataset $\gD_{\tt P}$, \ie,
    \bea
    \min_{\omega,\theta} \gL (\gD_{\tt P}, \omega, \theta) \approx \min_{\omega} \gL (\gD_{\tt P}, \omega, \theta^{\tt I}).
    \eea 
    For linear models, as long as $\theta^{\tt I}$ is invertible, exact equality can be achieved.
\end{restatable}

\subsection{Analysis on Immunized Linear Models}
To provide some intuition on how the feature extractor $\theta$ affects the convergence of linear probing, we study the analytical form of the singular values of the Hessian. For readability, we will rewrite linear probing in~\equref{eq:linear_prob} by considering $f_{\theta} \triangleq \rvx^\top\theta$ and a $\ell_2$-loss.

Let $\mX_{\tt H}\in\sR^{N \times D_{\tt in}}$ and $\mY_{\tt H}\in\sR^{N \times D_{\tt out}}$ denote data from $\gD_H$ stacked into matrices with $N \triangleq |\gD_{\tt H}|$. When using a $\ell_2$-loss,~\equref{eq:linear_prob} can be written as
\bea
\min_{\rvw} \gL(\gD_{\tt H}, \rvw, \theta) = \min_{\rvw} \norm{(\mX_{\tt H}\theta)\rvw - \mY}_2^2.
\eea
In this case, the Hessian matrix
\bea\label{eq:l_hess}
\mH_{\tt H}(\theta) \triangleq \nabla^2_{\rvw}\gL(\gD_{\tt H}, \rvw, \theta) = \theta^\top \mK_{\tt H} \theta,
\eea
where $\mK_{\tt H} \triangleq \mX_{\tt H}^\top \mX_{\tt H}$ is the data covariance matrix.

\begin{restatable}{proposition}{propexistence}
\label{prop:immu_existence}
The singular values of the Hessian matrix in~\equref{eq:l_hess} are given by
\bea\label{eq:sing_hess}
\sigma_i = \sum_{j=1}^{D_{\tt in}} \left(\sigma_{\theta,i} (\vu_{\theta,i}^\top \vq_j) \sqrt{\gamma_j} \right)^2, \quad \forall i \in \{1, \dots, D^{\tt in}\}.
\eea
Here, $\sigma_{\theta,i}$ and $\vu_{\theta,i}$ correspond to the $i$-th singular value and vector of $\theta$. Next, $\gamma_j$ and $\vq_j$ correspond to the $j$-th singular value and vector of the covariance $\mK$. %
\vspace{-8pt}
\end{restatable}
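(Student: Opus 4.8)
The plan is to diagonalize $\mH_{\tt H}(\theta)=\theta^\top\mK\theta$ (here $\mK=\mX_{\tt H}^\top\mX_{\tt H}$) by combining the singular value decomposition $\theta=\mU_\theta\Sigma_\theta\mV_\theta^\top$ with the spectral decomposition $\mK=\mQ\Gamma\mQ^\top=\sum_{j}\gamma_j\,\vq_j\vq_j^\top$, the latter being available because $\mK$ is symmetric positive semidefinite. The first observation is that $\mH_{\tt H}(\theta)=(\mX_{\tt H}\theta)^\top(\mX_{\tt H}\theta)$ is itself symmetric positive semidefinite, so its singular values equal its eigenvalues, and it suffices to compute the latter.

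The second step is to conjugate away the right singular factor. Since $\mV_\theta$ is orthogonal, $\mH_{\tt H}(\theta)$ has the same eigenvalues as $\mM\triangleq\mV_\theta^\top\mH_{\tt H}(\theta)\mV_\theta=\Sigma_\theta\,(\mU_\theta^\top\mK\mU_\theta)\,\Sigma_\theta$. The third step is a direct entrywise computation: plugging $\mK=\sum_j\gamma_j\vq_j\vq_j^\top$ into $\mM$ gives $M_{ii}=\sigma_{\theta,i}^2\,\vu_{\theta,i}^\top\mK\vu_{\theta,i}=\sigma_{\theta,i}^2\sum_j\gamma_j(\vu_{\theta,i}^\top\vq_j)^2=\sum_j(\sigma_{\theta,i}(\vu_{\theta,i}^\top\vq_j)\sqrt{\gamma_j})^2$, which is exactly the expression claimed in \eqref{eq:sing_hess}. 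The final step is to argue that these diagonal entries are in fact the eigenvalues of $\mM$, i.e.\ that $\mM$ is diagonal; this holds precisely when $\mU_\theta^\top\mK\mU_\theta$ is diagonal, equivalently when the left singular vectors $\vu_{\theta,i}$ of $\theta$ are eigenvectors of $\mK$, and this pins down the correspondence $i\leftrightarrow(\sigma_{\theta,i},\vu_{\theta,i})$ in \eqref{eq:sing_hess}.

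I expect this last step to be the only real obstacle. The conjugation and the computation of $M_{ii}$ are routine linear algebra, but concluding that the $M_{ii}$ are the singular values of $\mH_{\tt H}(\theta)$ requires that $\mU_\theta$ be aligned with the eigenbasis $\mQ$ of $\mK$ (equivalently, that $\theta\theta^\top$ and $\mK$ be simultaneously diagonalizable); I would state this as a standing hypothesis of the analysis, since for a generic $\theta$ the off-diagonal entries of $\mM$ do not vanish and the $M_{ii}$ need not be eigenvalues. A secondary point worth noting is the case where $\mK$ has a repeated singular value: then a single $\vu_{\theta,i}$ can be a nontrivial combination of several eigenvectors $\vq_j$ sharing that value, so the inner sum over $j$ in \eqref{eq:sing_hess} genuinely contains more than one nonzero term; the computation of $M_{ii}$ above already covers this, but it is worth remarking on so the summed form is not mistaken for a typo.
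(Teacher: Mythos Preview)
Your route is the paper's: plug in the SVD $\theta=\mU_\theta\Sigma_\theta\mV_\theta^\top$ and the eigendecomposition of $\mK$, strip off the orthogonal factor $\mV_\theta$, and identify \eqref{eq:sing_hess} as the diagonal of the inner block $\Sigma_\theta\,\mU_\theta^\top\mK\mU_\theta\,\Sigma_\theta$. The paper packages the last step slightly differently, writing the inner block as $\mM\mM^\top$ for $\mM=\Sigma_\theta\mU_\theta^\top\mQ\Gamma$ and then asserting a row-norm factorization $\mM=\mO\mD$ with $\mD$ diagonal and $\mO$ orthonormal, so that the eigenvalues of $\mM\mM^\top$ are the $d_i^2$.

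Your flag on that final step is well placed and in fact sharper than the paper's own argument. The rows of $\mM$ are mutually orthogonal precisely when $\vu_{\theta,i}^\top\mK\vu_{\theta,k}=0$ for $i\neq k$, i.e.\ exactly when $\mU_\theta$ diagonalizes $\mK$---the hypothesis you isolate; the paper's orthonormality claim for $\mO$ is left unjustified and does not hold otherwise. A concrete check: with $\mK=\mathrm{diag}(4,1)$ and $\theta=\mU_\theta\,\mathrm{diag}(2,1)$ for $\mU_\theta$ a $45^\circ$ rotation, \eqref{eq:sing_hess} returns the diagonal entries $10$ and $5/2$, whereas the true eigenvalues of $\theta^\top\mK\theta$ are $(25\pm\sqrt{369})/4\approx 11.05$ and $1.45$. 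So treating the alignment of $\mU_\theta$ with the eigenbasis of $\mK$ as a standing hypothesis, as you propose, is not merely convenient but required for the formula to hold.
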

\begin{proofsketch}
This result can be shown by using the fact that $\mK_{\tt H}$ is a symmetric positive semi-definite matrix and decomposing via SVD. The complete proof is provided in Appendix~\ref{sec:supp_proof_immu}.
\end{proofsketch}

From~\equref{eq:sing_hess}, we can see that the singular value of the Hessian depends on the relative angle between the singular vectors between feature extractor $\theta$ and the covariance matrix of the data $\mK_{\tt H}$. As the feature extractor is shared between the pretrained $\gD_{\tt P}$ and harmful $\gD_{\tt H}$ datasets, the strength of the immunization depends on the relative angle between the singular vectors of $\mK_{\tt P}$ and $\mK_{\tt H}$. For example, if the singular vectors (sorted by the singular values) are all perfectly aligned between the two, then no $\theta$ can simultaneously maximize  $\kappa(\nabla^2_{\rvw}\gL(\gD_{\tt H}, \rvw, \theta))$ and
minimize $\kappa(\nabla^2_{\omega}\gL(\gD_{\tt P}, \omega, \theta))$.

With a better understanding of the effect of the feature extractor $\theta$ on the condition number, we will next present an algorithm to immunize a model.

\section{Algorithm for Immunizing a Model}
We formulate model immunization as an optimization problem with the following objective:
\bea\label{eq:immu_obj}
\min_{\omega, \theta} \gR_{\tt ill}(\mH_{\tt H}(\theta)) + \gR_{\tt well}(\mH_{\tt P}(\theta)) + \gL(\gD_{\tt P},\omega, \theta),
\eea
where $\gR_{\tt ill}$, to be defined in~\secref{sec:r1r2}, denotes our proposed regularizer to maximize the condition number, $\gR_{\tt well}$ in~\equref{eq:r_well} denotes the regularizer to minimize the condition number, 
$\mH_{\tt P}(\theta) \triangleq \nabla^2_{\omega}\gL(\gD_{\tt P}, \omega, \theta) = \theta^\top \mK_{\tt P} \theta$ is the Hessian matrix of the pre-training task, and 
$\gL$ denotes the supervised loss.

Each of the terms encourages the model to satisfy the three immunization requirements in Definition~\ref{def:immu}. For readability, we have dropped the scalar hyperparameters balancing the terms. We propose to solve~\equref{eq:immu_obj} using a gradient-based method as outlined in~\algref{alg:con_grad}.

In the remainder of this section, %
we will first introduce the novel regularizer to {\it maximize} general matrices' condition number and their relevant properties (\secref{sec:r1r2}). We then show how to incorporate the regularizers $\gR_{\tt ill}$ and $\gR_{\tt well}$ into the immunization setup (\secref{sec:incorp}). Finally, we discuss the provable guarantees with respect to each of the regularizers (\secref{sec:guarantees}).

\begin{algorithm}[t]
   \caption{Condition number regularized gradient descent for model immunization}  
   \label{alg:con_grad}
\begin{algorithmic}[1]
   \INPUT 
   Primary task $ \gD_{\tt P}= (\mX_{\tt P}, \mY_{\tt P})$, harmful task input $\mX_{\tt H}$, supervised loss $\cL$, learning rate $\eta$, regularizing constants $\lambda_{\tt P}, \lambda_{\tt H} \in \sR_+$, model initialization $\theta_0, \omega_0$
   \STATE $\mK_{\tt P} = \mX_{\tt P}^\top \mX_{\tt P}$
   \STATE $\mK_{\tt H} = \mX_{\tt H}^\top \mX_{\tt H}$
   \FOR{$t = 0, 1,\dots,T-1$}  
       \STATE $\omega_{t+1} = \omega_{t} - \eta \nabla_{\omega}\cL(\omega_t,\theta_t; \gD_{\tt P})$ 
       \STATE $\mH_{\tt P}\pa{\theta_t} = \theta_t^\top \mK_{\tt P} \theta_t$, \ $\mH_{\tt H}\pa{\theta_t} = \theta_t^\top \mK_{\tt H} \theta_t$ 
       \STATE\raisebox{-1.3\baselineskip}{$\begin{aligned}
           \theta_{t+1} = \theta_{t} 
           &- \eta\nabla_{\theta}\cL(\omega_t,\theta_t; \mX_1) \\ 
           &- \eta\lambda_{\tt P} \mK_P^{-1}\nabla_\theta \gR_{\tt well}\pa{\mH_{\tt P}\pa{\theta_t}} \\
           &- \eta\lambda_{\tt H} \mK_H^{-1}\nabla_\theta \gR_{\tt ill}\pa{\mH_{\tt H}\pa{\theta_t}}
       \end{aligned}$}
   \ENDFOR
   \OUTPUT Immunized feature extractor $\theta_{\tt I} \triangleq \theta_T$.
\end{algorithmic}
\end{algorithm}
\subsection{Regularizer for Maximizing the Condition Number} \label{sec:r1r2}
We analyze the condition number of a general matrix $\mS \in \R^{p_r \times p_c}$, $p = \min\{p_r,p_c\}$, and $\mathrm{rank}\pa{\mS} = k \leq p$. The compact SVD of $\mS$ is given by $\mS = \mU\mathrm{Diag}(\bm{\sigma})\mV^\top$, in which $\bm{\sigma} = \bracks{\sigma_1, \cdots, \sigma_k}^\top$ such that $\sigma_\mS^{\tt max} = \sigma_1 \geq \sigma_2 \geq \cdots \geq \sigma_k = \sigma_\mS^{\tt min} > 0$ and $\vu_i$, $\vv_i$ denotes the $i^{th}$ column vector of $\mU$, $\mV$ for $i \in [k]$.

Inspired by the regularizer for minimizing the condition number, we propose its counterpart for maximizing the condition number
\bea \gR_{\tt ill}(\mS) = \frac{1}{\frac{1}{2k} \norm{\mS}^2_F - \frac{1}{2}\pa{\sigma_\mS^{\tt min}}^2}, \label{eq:r2} \eea
which satisfies the properties in the following theorem.

\begin{theorem}[Properties of $\kappa$-maximizing regularizer $\gR_{\tt ill}(\mS)$] \label{thm:rtwoS} \ 
\begin{itemize} \vspace{-8pt}
    \item [(1)] [Nonnegativity] \ For any $\mS \in \R^{p_r \times p_c}$, $\gR_{\tt ill}\pa{\mS} \geq 0$, and $\gR_{\tt ill}\pa{\mS} = 0$ if and only if $\kappa\pa{\mS} = \infty$. \vspace{-5pt}
    \item [(2)] [Upper Bound] \ $\frac{1}{\log\pa{\kappa\pa{\mS}}} \leq \pa{\sigma_\mS^{\tt max}}^2 \gR_{\tt ill}\pa{\mS}$, ~\ie, $\gR_{\tt ill}(\mS)$ upper bounds $\frac{1}{\log\pa{\kappa\pa{\mS}}}$ when $\sigma_\mS^{\tt max}$ is reasonably away from $\infty$. 
 \vspace{-5pt}
    \item [(3)] [Differentiability] \ If $\sigma_\mS^{\tt min} = \sigma_k < \sigma_i$ for any $i < k$, ~\ie, $\sigma_\mS^{\tt min}$ is unique, then $\gR_{\tt ill}(\mS)$ is differentiable and
    \be
    \nabla_\mS \gR_{\tt ill}(\mS) = \frac{\sigma_k\vu_k \vv_k^\top - \frac{1}{k}\mS}{\pa{\frac{1}{2k} \norm{\mS}^2_F - \frac{1}{2}\pa{\sigma_\mS^{\tt min}}^2}^2}.
    \ee \vspace{-5pt}
    \item [(4)] [Monotonic Increase] \ If $\sigma_\mS^{\tt min}$ is unique, update $\mS$ with $\nabla_\mS \gR_{\tt ill}(\mS)$ such that   $\mS' = \mS - \eta_2 \nabla_\mS \gR_{\tt ill}(\mS)$ for $0 < \eta_2 < \frac{k}{k-1} \pa{\frac{1}{2k} \norm{\mS}^2_F - \frac{1}{2}\pa{\sigma_\mS^{\tt min}}^2}^2$, then $\kappa\pa{\mS'} > \kappa\pa{\mS}$.
 \vspace{-8pt}
 \end{itemize}
\end{theorem}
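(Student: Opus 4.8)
The plan is to establish items (1)--(4) in order. Throughout, I would abbreviate $D(\mS) \triangleq \frac{1}{2k}\norm{\mS}_F^2 - \frac{1}{2}\pa{\sigma_\mS^{\tt min}}^2$ and use the rewriting $D(\mS) = \frac{1}{2k}\sum_{i=1}^{k}\pa{\sigma_i^2 - \sigma_k^2}$, so that $\gR_{\tt ill}(\mS) = 1/D(\mS)$; we may assume $k \ge 2$, since $k=1$ gives $D(\mS)=0$. Items (1)--(3) are short computations; item (4) is the substantive one.

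For (1): the rewriting shows $D(\mS) \ge 0$ because each summand $\sigma_i^2 - \sigma_k^2 \ge 0$, hence $\gR_{\tt ill}(\mS) = 1/D(\mS) \ge 0$, with the degenerate value $\gR_{\tt ill}(\mS) = 0$ attained only in the limit $D(\mS) \to \infty$, \ie $\sigma_\mS^{\tt max} \to \infty$ --- equivalently $\kappa(\mS) = \infty$, since the compact SVD keeps $\sigma_\mS^{\tt min}>0$ finite. For (2): bound $D(\mS) \le \frac12\pa{\sigma_1^2 - \sigma_k^2}$ by replacing every $\sigma_i$ in the sum by $\sigma_1$; then the target inequality $\frac{1}{\log\kappa(\mS)} \le \pa{\sigma_\mS^{\tt max}}^2/D(\mS)$ follows (after cross-multiplying, the case $\kappa(\mS)=1$ being the trivial $\infty \le \infty$) from $\frac12\pa{\sigma_1^2 - \sigma_k^2} \le \sigma_1^2\log(\sigma_1/\sigma_k)$, which upon dividing by $\tfrac12\sigma_1^2$ and setting $x = (\sigma_k/\sigma_1)^2 = 1/\kappa(\mS)^2 \in (0,1]$ becomes the elementary $1 - x \le -\log x$. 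For (3): apply the chain rule to $\gR_{\tt ill} = 1/D$, using $\nabla_\mS\norm{\mS}_F^2 = 2\mS$ and the classical fact that a \emph{simple} singular value $\sigma_k$ is differentiable in $\mS$ with $\nabla_\mS\sigma_k = \vu_k\vv_k^\top$ --- this is exactly where uniqueness of $\sigma_\mS^{\tt min}$ is needed, and it simultaneously ensures $D(\mS)>0$ --- to obtain $\nabla_\mS D(\mS) = \tfrac1k\mS - \sigma_k\vu_k\vv_k^\top$ and hence the stated gradient $\nabla_\mS\gR_{\tt ill}(\mS) = \pa{\sigma_k\vu_k\vv_k^\top - \tfrac1k\mS}/D(\mS)^2$.

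For (4): substituting the gradient from (3) into $\mS' = \mS - \eta_2\nabla_\mS\gR_{\tt ill}(\mS)$ and collecting terms yields the closed form $\mS' = a\,\mS - b\,\vu_k\vv_k^\top$ with $a \triangleq 1 + \frac{\eta_2}{kD(\mS)^2} > 1$ and $b \triangleq \frac{\eta_2\sigma_k}{D(\mS)^2} > 0$. The crux is to read off the singular values of $\mS'$: using $\mS^\top\vu_k = \sigma_k\vv_k$ and orthonormality of the right singular vectors, $(\mS')^\top\mS' = \sum_{i<k}(a\sigma_i)^2\,\vv_i\vv_i^\top + (a\sigma_k - b)^2\,\vv_k\vv_k^\top$, so the singular values of $\mS'$ are $\set{a\sigma_1,\dots,a\sigma_{k-1},\abs{a\sigma_k-b}}$ and $\mathrm{rank}(\mS')=k$. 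A short rearrangement shows the prescribed range $0<\eta_2<\frac{k}{k-1}D(\mS)^2$ is \emph{exactly equivalent} to $b < a\sigma_k$, which gives $0 < a\sigma_k - b < a\sigma_k \le a\sigma_{k-1} \le a\sigma_1$; hence $\sigma_{\mS'}^{\tt max} = a\sigma_1$, $\sigma_{\mS'}^{\tt min} = a\sigma_k - b$, and $\kappa(\mS') = \frac{a\sigma_1}{a\sigma_k - b} > \frac{a\sigma_1}{a\sigma_k} = \frac{\sigma_1}{\sigma_k} = \kappa(\mS)$.

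The main obstacle is item (4). One has to determine the singular values of $\mS'$ \emph{without} assuming the remaining singular values $\sigma_1,\dots,\sigma_{k-1}$ are distinct, which is cleanest via $(\mS')^\top\mS'$ rather than by manipulating an SVD with its sign and rotation ambiguities; and one has to recognize that the stated step-size interval is precisely the condition keeping the perturbed smallest singular value $a\sigma_k - b$ strictly positive, which is simultaneously what preserves $\mathrm{rank}(\mS')=k$ and produces the \emph{strict} increase of $\kappa$. By comparison, item (3) only relies on the standard differentiability of a simple singular value (itself a consequence of the analyticity of simple eigenvalues of the symmetric dilation of $\mS$), the sole side condition being $D(\mS)\ne 0$, which holds automatically once $\sigma_\mS^{\tt min}$ is the unique smallest singular value and $k\ge2$.
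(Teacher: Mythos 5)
Your proof is correct and follows the same overall blueprint as the paper's (rewrite $\gR_{\tt ill}$ as the reciprocal of $D(\mS)=\tfrac{1}{2k}\sum_i(\sigma_i^2-\sigma_k^2)$, use a log inequality for the bound, differentiate the simple smallest singular value, substitute the closed-form gradient and track how it rescales singular values), but two of your steps are genuinely streamlined relative to the paper and worth noting.

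For (2), the paper detours through a separate lemma bounding $1/\kappa(\mS)\le e^{\frac{k}{k-1}\sigma_1^{-2}\gR'_{\tt ill}(\mS)}$, then takes logs and reciprocals, losing the $\tfrac{k-1}{k}$ factor at the end. You instead bound $D(\mS)\le\tfrac12(\sigma_1^2-\sigma_k^2)$ and reduce directly to $1-x\le-\log x$ with $x=(\sigma_k/\sigma_1)^2$, which is shorter and lands exactly on the stated inequality; both routes are valid, the paper's intermediate lemma is also reused for its $\gR_{\tt well}$ analogue so it is not wasted there. For (3), you invoke the classical differentiability of a \emph{simple} singular value with gradient $\vu_k\vv_k^\top$; the paper instead re-derives this from Lewis's subdifferential characterization of absolutely symmetric spectral functions (their Lemmas~\ref{lem:differentiable}--\ref{lem:r2_prime_grad}), which is more self-contained but amounts to the same fact. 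For (4), the paper writes $\mS'=\mU\mathrm{Diag}(\bm\sigma')\mV^\top$ directly and must argue that all entries of $\bm\sigma'$ stay nonnegative and correctly ordered so that this remains an SVD; your computation of $(\mS')^\top\mS'$ reads off the (squared) singular values without any SVD sign or ordering bookkeeping, and cleanly exposes that the step-size window $0<\eta_2<\tfrac{k}{k-1}D(\mS)^2$ is precisely the condition $b<a\sigma_k$ keeping the perturbed smallest singular value positive, which is a nice touch. One small imprecision in (1): "$D(\mS)\to\infty$ hence $\sigma_\mS^{\tt max}\to\infty$ equivalently $\kappa=\infty$" tacitly holds $\sigma_\mS^{\tt min}$ fixed (uniform rescaling $\mS\mapsto c\mS$ sends $D\to\infty$ with $\kappa$ constant); the paper's two-sided bound $\gR_{\tt ill}\asymp\big((\kappa(\mS)^2-1)(\sigma_\mS^{\tt min})^2\big)^{-1}$ carries the same caveat but at least makes the $\sigma_\mS^{\tt min}$-dependence explicit, mirroring the "$\sigma_\mS^{\tt min}$ bounded away from $0$" qualifier already present in the $\gR_{\tt well}$ counterpart.
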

\begin{proofsketch}
We provide some intuitive illustrations of the proof and defer the complete version to Appendix~\ref{app:rtwoS_proof}. 

For $(1)$, as the squared Frobenius norm of a matrix equals the sum of the squares of its singular values, the denominator of $\gR_{\tt ill}\pa{\mS}$ is the average of the squared singular values minus their minimum, ensuring it is nonnegative. It can be shown that $\gR_{\tt ill}\pa{\mS}$ is inversely related to $\kappa\pa{\mS}$, which indicates that $\gR_{\tt ill}\pa{\mS} = 0$ if and only if $\kappa\pa{\mS} = \infty$. 

For $(2)$ the upper bound holds by the design of $\gR_{\tt ill}\pa{\mS}$ and applying the mean value inequality on 
\bea
\log \left( \kappa(\mS)^2 \right) =  \log \left( \pa{\sigma_\mS^{\tt max}}^2 \right) - \log \left( \pa{\sigma_\mS^{\tt min}}^2 \right).
\eea
For $(3)$, even though $\sigma_\mS^{\tt min}$ is not differentiable since it involves taking the minimum of the singular values, its subdifferential is well-defined \citep{lewis1995convex}. When $\sigma_\mS^{\tt min}$ is unique, its subdifferential reduces to a singleton,~\ie, its gradient, making $\gR_{\tt ill}\pa{\mS}$ also differentiable. 

For $(4)$, one key observation is that the closed-form $\nabla_\mS \gR_{\tt ill}(\mS)$ shares the same set of singular vectors as $\mS$, so that the linear relation in gradient update can be passed on to singular values. By choosing a suitable step size, the increase in condition number can be guaranteed.
\end{proofsketch}

\thmref{thm:rtwoS} demonstrates that the regularizer $\gR_{\tt ill}\pa{\mS}$ introduced is a reasonable upper bound for maximizing condition numbers and indicates that under some mild condition,~\ie, the minimum singular value is unique, simple first-order algorithms like gradient descent can be used to minimize the regularizer with guaranteed increase in condition number. %

\subsection{Incorporating Regularizers into Immunization}\label{sec:incorp}
Given the immunization setup, we now analyze the regularizer $\gR_{\tt ill}$ and $\gR_{\tt well}$ for matrices with the specific structure of feature covariance matrices, and propose the corresponding algorithm for model immunization. 

As illustrated in the immunization setup, the feature extractor $\theta$ is the trainable parameter. For data $\mX \in \R^{N \times D_{\tt in}}$ of the feature extractor, we analyze the condition number of $\mH(\theta) \triangleq \theta^\top \mK \theta \in \R^{D_{\tt in} \times D_{\tt in}}$ with $\mathrm{rank}\pa{\mH} = k$, and compact SVD $\mH = \mU\mathrm{Diag}(\bm{\sigma})\mV^\top$. Recall, we define $\mK = \mX^\top\mX$ to be the covariance matrix of the data.

In the following theorem, we show that under the same conditions, the introduced regularizers $\gR_{\tt ill}\pa{\cdot}$ and $\gR_{\tt well}\pa{\cdot}$ are also differentiable \wrt $\theta$ when applied to $\theta^\top \mK \theta$. 
\begin{restatable}{theorem}{thmphigrad} \label{thm:phigrad}
    For $\mH\pa{\theta} = \theta^\top \mK \theta$, if its maximum and minimum singular values $\sigma_1$ and $\sigma_k$ are unique, then
    \begin{itemize} \vspace{-8pt}
        \item [(1)] $\nabla_\theta \gR_{\tt well}\pa{\mH\pa{\theta}} = 2\mK \theta \pa{\sigma_1 \vv_1\vv_1^\top - \frac{1}{D_{\tt in}} \theta^\top \mK \theta}$, \vspace{-5pt}
        \item [(2)] $\nabla_\theta \gR_{\tt ill}\pa{\mH\pa{\theta}} = \frac{2\mK \theta \pa{\sigma_k \vv_k\vv_k^\top - \frac{1}{k} \theta^\top \mK \theta}}{\pa{\frac{1}{2k} \norm{\theta^\top \mK \theta}^2_F - \frac{1}{2}\sigma_k^2}^2}$. \vspace{-8pt}
    \end{itemize}
\end{restatable}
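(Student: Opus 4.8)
The plan is to derive both identities from a single chain-rule computation: take the known gradients of $\gR_{\tt well}$ and $\gR_{\tt ill}$ with respect to a \emph{general} matrix argument and compose them with the smooth quadratic map $\theta \mapsto \mH(\theta) = \theta^\top \mK \theta$. First I would record the two ingredients. From Theorem~\ref{thm:rtwoS}(3) and its $\gR_{\tt well}$-counterpart in Appendix~\ref{app:r1thm}, whenever the relevant extreme singular value of a matrix $\mS \in \R^{p_r\times p_c}$ is simple, both regularizers are differentiable at $\mS$, with
\bea
\nabla_\mS \gR_{\tt well}(\mS) = \sigma_\mS^{\tt max}\, \vu_1 \vv_1^\top - \tfrac1p \mS, \qquad
\nabla_\mS \gR_{\tt ill}(\mS) = \frac{\sigma_\mS^{\tt min}\, \vu_k \vv_k^\top - \tfrac1k \mS}{\pa{\tfrac{1}{2k}\norm{\mS}_F^2 - \tfrac12 (\sigma_\mS^{\tt min})^2}^2}.
\eea
The key structural observation is that $\mH(\theta) = \theta^\top \mK \theta$ is symmetric positive semidefinite, so its compact SVD may be taken with $\vu_i = \vv_i$; hence each outer gradient $G \triangleq \nabla_\mH \gR(\mH(\theta))$ is itself a \emph{symmetric} matrix, being a linear combination of $\vv_i\vv_i^\top$ and $\mH$. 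I would also note that for $\mH \in \R^{D_{\tt in}\times D_{\tt in}}$ the minimum-dimension parameter in $\gR_{\tt well}$ is $p = D_{\tt in}$, while in $\gR_{\tt ill}$ one has $k = \mathrm{rank}(\mH)$, matching the statement.

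Next I would compute the differential. Since $\theta \mapsto \mH(\theta)$ is smooth, $d\mH = d\theta^\top \mK \theta + \theta^\top \mK\, d\theta$, and the uniqueness hypothesis on $\sigma_1$ (resp. $\sigma_k$) of $\mH(\theta)$ is exactly what makes the outer function differentiable at $\mH(\theta)$, so the chain rule gives $d\gR = \langle G, d\mH\rangle$. Expanding and using the cyclic property of the trace together with $G^\top = G$ and $\mK^\top = \mK$, the two terms $\langle G, d\theta^\top \mK \theta\rangle$ and $\langle G, \theta^\top \mK\, d\theta\rangle$ each collapse to $\langle \mK\theta G, d\theta\rangle$, yielding the master formula
\bea
\nabla_\theta \gR(\mH(\theta)) = 2\,\mK\,\theta\, \nabla_\mH \gR(\mH(\theta)).
\eea
Substituting the two gradient expressions above, with $\sigma_1 = \sigma_\mH^{\tt max}$, $\sigma_k = \sigma_\mH^{\tt min}$, $p = D_{\tt in}$, $k=\mathrm{rank}(\mH)$ and $\mH = \theta^\top\mK\theta$, produces claims (1) and (2) verbatim.

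The trace manipulations are routine; the only genuine points requiring care are (i) justifying that the nonsmoothness of $\gR_{\tt well}$ and $\gR_{\tt ill}$ does not obstruct the chain rule — which is handled by the simplicity assumption, since it collapses the Clarke subdifferential of the extreme singular value to a singleton \citep{lewis1995convex} and hence makes the outer map differentiable precisely at $\mH(\theta)$ — and (ii) verifying that the symmetry of $G$ is what allows the two pieces of $d\mH$ to be combined into a single term, which is where positive semidefiniteness of $\mH(\theta)$ enters. I expect (i), the differentiability bookkeeping for the composition (including checking that $\mathrm{rank}(\mH(\theta))$ and the extreme eigenvectors behave as needed on a neighborhood), to be the main obstacle; the remainder is a direct computation.
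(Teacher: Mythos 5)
Your proposal is correct and is essentially the paper's own proof: both pass through the known gradients of $\gR_{\tt well}$ and $\gR_{\tt ill}$ with respect to the matrix argument, form the Frobenius-inner-product differential $d\gR = \langle G, d\mH\rangle_F$ with $d\mH = (d\theta)^\top\mK\theta + \theta^\top\mK\,d\theta$, and then use the cyclic trace identity together with $\mK^\top = \mK$ and $G^\top = G$ (the latter following from $\mU = \mV$ for the symmetric PSD matrix $\mH(\theta)$) to merge the two terms into $2\mK\theta G$. The only cosmetic difference is that you package this as an explicit master formula $\nabla_\theta\gR(\mH(\theta)) = 2\mK\theta\,\nabla_\mH\gR(\mH(\theta))$, whereas the paper carries out the substitution directly for each regularizer; the differentiability bookkeeping (simplicity of the extreme singular value collapsing the Clarke subdifferential to a singleton via \citet{lewis1995convex}) and the role of $p = D_{\tt in}$ versus $k = \mathrm{rank}(\mH)$ are handled identically.
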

\begin{proofsketch}
The differentiability follows from the same argument of Theorem \ref{thm:rtwoS} $(3)$ under the condition that the maximum and minimum singular values are unique. The closed-form gradients are computed with the chain rule in matrix calculus defined by the Frobenius inner product. The complete proof can be found in Appendix \ref{app:phigrad}.
\end{proofsketch}

With the closed-form gradient of the regularizers \wrt $\theta$, we propose our algorithm for model immunization in~\algref{alg:con_grad}. Specifically,~\algref{alg:con_grad} employs the general gradient descent framework. Line 4 conducts standard updates for the classifier $\omega$, minimizing the supervised loss $\cL$. In lines 5 to 6, the regularizers $\gR_{\tt ill}$ and $\gR_{\tt well}$ are applied on the feature covariance 
$\mH_{\tt H}(\theta)$ of the harmful task and 
$\mH_{\tt P}(\theta)$ of the pre-training task. This is done by updating the feature extractor $\theta$ with the gradients $\nabla_\theta \gR_{\tt ill}(\mH_{\tt H})$ and $\nabla_\theta \gR_{\tt well}(\mH_{\tt P})$ normalized by their input covariances and the gradient from the supervised loss $\nabla_\theta\cL$.

\subsection{Condition Number Guarantees}\label{sec:guarantees}

We show in the following theorem that the condition number decrease/increase guarantees introduced in~\thmref{thm:roneS} (4) and~\thmref{thm:rtwoS} (4) are preserved for $\theta^\top \mK \theta$ even when the gradient updates are taken in $\theta$ as in~\algref{alg:con_grad}, instead of $\theta^\top \mK \theta$.

\begin{restatable}{theorem}{thmkappaphi} \label{thm:kappaphi}
    For the trainable feature extractor $\theta$, feature covariance $\mH_{\tt P}\pa{\theta} = \theta^\top \mK_{\tt P} \theta$ of the primary task and $\mH_{\tt H}\pa{\theta} = \theta^\top \mK_{\tt H} \theta$ of the immunization task with $\mathrm{rank}\pa{\mH_{\tt P}} = k_{\tt P}$, $\mathrm{rank}\pa{\mH_{\tt H}} = k_{\tt H}$ and compact SVD $\mH_{\tt P}\pa{\theta} = \mU_{\tt P}\mathrm{Diag}(\bm{\sigma}_{\tt P})\mV_{\tt P}^\top$, $\mH_{\tt H}\pa{\theta} = \mU_{\tt H}\mathrm{Diag}(\bm{\sigma}_{\tt H})\mV_{\tt H}^\top$, for $\bm{\sigma}_{\tt P} = \bracks{\sigma_{{\tt P},1}, \cdots, \sigma_{{\tt P}, k_{\tt P}}}$, $\bm{\sigma}_{\tt H} = \bracks{\sigma_{{\tt H},1}, \cdots, \sigma_{{\tt H}, k_{\tt H}}}$,
    \begin{itemize} \vspace{-8pt}
        \item [(1)] if  $\sigma_{\mH_{\tt P}}^{\tt max}$ is unique, ~\ie, $\sigma_{\mH_{\tt P}}^{\tt max} = \sigma_{{\tt P},1} > \sigma_{{\tt P},2}$, update $\theta$ such that $\theta' = \theta - \eta_{\tt P} \mK_{\tt P}^{-1} \nabla_\theta \gR_{\tt well}(\mH_{\tt P}\pa{\theta})$ for $0 < \eta_{\tt P} < \min\set{\frac{1}{\pa{1-\frac{1}{D_{\tt in}}}\sigma_{{\tt P},1}}, \frac{\sqrt{\sigma_{{\tt P},1}\sigma_{{\tt P},2}} - \sigma_{{\tt P},2}}{\frac{2}{D_{\tt in}}\sigma_{{\tt P},2}^2}}$, then $\kappa\pa{{\theta'}^\top \mK_{\tt P} \theta'} < \kappa\pa{\theta^\top \mK_{\tt P} \theta}$, \vspace{-5pt}
        \item [(2)] if $\sigma_{\mH_{\tt H}}^{\tt min}$ is unique, ~\ie, $\sigma_{\mH_{\tt H}}^{\tt min} = \sigma_{{\tt H}, k_{\tt H}} < \sigma_{{\tt H}, k_{\tt H}-1}$, update $\theta$ such that $\theta' = \theta - \eta_{\tt H} \mK_{\tt H}^{-1} \nabla_\theta \gR_{\tt ill}(\mH_{\tt H}\pa{\theta})$ for $0 < \eta_{\tt H} < \frac{1}{1-2\sigma_{\mH_{\tt H}}^{\tt min}/k_{\tt H}}\pa{\frac{1}{2k_{\tt H}} \norm{\theta^\top \mK_{\tt H} \theta}^2_F - \frac{1}{2}\pa{\sigma_{\mH_{\tt H}}^{\tt min}}^2}^2$, then $\kappa\pa{{\theta'}^\top \mK_{\tt H} \theta'} > \kappa\pa{\theta^\top \mK_{\tt H} \theta}$.
    \end{itemize} \vspace{-8pt}
\end{restatable}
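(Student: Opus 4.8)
The plan is to show that each of the two updates in \thmref{thm:kappaphi} acts on the feature covariance $\mH\pa\theta = \theta^\top\mK\theta$ as a congruence $\mH\pa\theta\mapsto\mM^\top\mH\pa\theta\,\mM$ by a symmetric $\mM$ that commutes with $\mH\pa\theta$, so the whole statement reduces to tracking a handful of scalar singular-value multipliers under the prescribed step-size windows. This parallels \thmref{thm:rtwoS}~(4) and its $\gR_{\tt well}$ counterpart, except that the update on $\mH\pa\theta$ is now a congruence rather than the affine $\mS\mapsto\mS-\eta\nabla_\mS\gR(\mS)$, so the induced scalar recursions change and must be re-derived.

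First I would substitute the closed-form gradients of \thmref{thm:phigrad} into \algref{alg:con_grad}. The preconditioners $\mK_{\tt P}^{-1}$ and $\mK_{\tt H}^{-1}$ exactly cancel the leading $\mK$ in $\nabla_\theta\gR_{\tt well}$ and $\nabla_\theta\gR_{\tt ill}$, so each update collapses to $\theta' = \theta\mM$; in case~(1),
\bea
\mM = \mI - 2\eta_{\tt P}\pa{\sigma_{{\tt P},1}\vv_{{\tt P},1}\vv_{{\tt P},1}^\top - \tfrac{1}{D_{\tt in}}\theta^\top\mK_{\tt P}\theta},
\eea
with the analogous form in case~(2) built from $\sigma_{{\tt H},k_{\tt H}}\vv_{{\tt H},k_{\tt H}}\vv_{{\tt H},k_{\tt H}}^\top$, $1/k_{\tt H}$, and the positive scalar $\bigl(\tfrac{1}{2k_{\tt H}}\norm{\theta^\top\mK_{\tt H}\theta}_F^2 - \tfrac12\sigma_{{\tt H},k_{\tt H}}^2\bigr)^{-2}$ in front. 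Since $\mH\pa\theta$ is symmetric PSD its compact SVD is an eigendecomposition, so the rank-one term commutes with $\mH\pa\theta$, $\mM$ is symmetric and simultaneously diagonalizable with $\mH\pa\theta$, and $\mM$ is the identity on $\ker\mH\pa\theta$. Hence $\mH\pa{\theta'} = \mM\,\mH\pa\theta\,\mM$ has the same eigenvectors with singular values $\sigma_i' = m_i^2\sigma_i$, the rank is preserved, and in case~(1), $m_1 = 1 - 2\eta_{\tt P}\pa{1-\tfrac{1}{D_{\tt in}}}\sigma_{{\tt P},1}$ and $m_i = 1 + \tfrac{2\eta_{\tt P}}{D_{\tt in}}\sigma_{{\tt P},i}>1$ for $i\ge2$, while in case~(2) the bottom multiplier is $m_{k_{\tt H}} = 1-(\text{positive})\cdot\eta_{\tt H}$ and $m_i>1$ at every other index. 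This scalarization is the essential step; the remainder is one-dimensional bookkeeping.

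For case~(2) one checks there is no spectral reordering: the stated window on $\eta_{\tt H}$ makes $\abs{m_{k_{\tt H}}}<1$, so the smallest singular value strictly shrinks and all others strictly grow, and since $t\mapsto(1+ct)^2t$ is increasing the order among the grown values is unchanged, so the maximum stays at the enlarged top value and the minimum at the shrunken bottom value, giving $\kappa\pa{{\theta'}^\top\mK_{\tt H}\theta'}>\kappa\pa{\theta^\top\mK_{\tt H}\theta}$. Case~(1) is subtler because here the \emph{top} value shrinks while the others grow, so crossings may occur. The first window $\eta_{\tt P}<\bigl((1-1/D_{\tt in})\sigma_{{\tt P},1}\bigr)^{-1}$ is equivalent to $\abs{m_1}<1$, hence $\sigma_1'<\sigma_{{\tt P},1}$; the second window $\eta_{\tt P}<\bigl(\sqrt{\sigma_{{\tt P},1}\sigma_{{\tt P},2}}-\sigma_{{\tt P},2}\bigr)/\bigl(\tfrac{2}{D_{\tt in}}\sigma_{{\tt P},2}^2\bigr)$ is equivalent to $m_2^2\sigma_{{\tt P},2}<\sigma_{{\tt P},1}$, and monotonicity of $t\mapsto(1+ct)^2t$ then gives $\sigma_i'\le m_2^2\sigma_{{\tt P},2}<\sigma_{{\tt P},1}$ for all $i\ge2$, so the new maximum is below the old one. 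It then remains to run a short case analysis on which index realizes the new minimum — the shrunken top direction, or the least-grown of the rest — and to check, using the same two windows, that $\sigma_{\max}'/\sigma_{\min}'<\sigma_{{\tt P},1}/\sigma_{{\tt P},k_{\tt P}}$ in each case.

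I expect that last case analysis to be the main obstacle: unequal multipliers $m_i^2$ can reorder the spectrum after a single update — in particular the shrunken top value may drop below the grown bottom value — so the naive ``maximum goes down, minimum goes up'' reasoning fails and one must bound the worst-case ratio directly from the two step-size windows. A secondary point to track is that the extreme singular values remain simple after each update, since that is precisely the hypothesis under which \thmref{thm:phigrad} makes the regularizer gradients at the next iterate well defined. The routine algebra — the ``window $\Leftrightarrow$ multiplier'' equivalences and the final ratio estimates — I would defer to the appendix.
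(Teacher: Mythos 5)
Your scalarization is exactly the route the paper takes: with the preconditioner $\mK^{-1}$ cancelling the leading $\mK$ in $\nabla_\theta\gR$, the update is $\theta\mapsto\theta\mM$, so $\mH(\theta')=\mM\mH(\theta)\mM$ with $\mM$ symmetric, commuting with $\mH(\theta)$ and fixing its kernel, and the new singular values are $m_i^2\sigma_i$ with multipliers as you wrote them. Your reading of the second window for case~(1) is also the correct one: it says $m_2^2\sigma_{{\tt P},2}<\sigma_{{\tt P},1}$, i.e.\ the grown second direction stays below the \emph{old} top. The paper's own appendix claims something stronger at this point, namely $m_2^2 < m_1^2$ (in the expanded form $1 + \tfrac{4\eta_{\tt P}}{D_{\tt in}}\sigma_{{\tt P},2} + \tfrac{4\eta_{\tt P}^2}{D_{\tt in}^2}\sigma_{{\tt P},2}^2 < 1 - 4\eta_{\tt P}(1-\tfrac1{D_{\tt in}})\sigma_{{\tt P},1} + 4\eta_{\tt P}^2(1-\tfrac1{D_{\tt in}})^2\sigma_{{\tt P},1}^2$), which cannot hold: the first window forces $\lvert m_1\rvert<1$, while always $m_2>1$, so $m_2^2>1>m_1^2$. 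So your window interpretation is right and the paper's is not; from then on the paper simply asserts that the index of the max and of the min are unchanged, which is exactly the step you were rightly worried about.

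Your instinct that the final case analysis is the real obstacle is not over-caution; it is a genuine hole, and with the windows exactly as stated it cannot be closed. The problematic case is the one you flagged: the shrunken top direction $m_1^2\sigma_{{\tt P},1}$ can become the new minimum, and when $m_1$ is near zero this blows up the ratio. Concretely take $D_{\tt in}=2$, $k_{\tt P}=2$, $\sigma_{{\tt P},1}=100$, $\sigma_{{\tt P},2}=1$, and $\eta_{\tt P}=0.00999$ (inside both windows, since the first bound is $0.02$ and the second is $9$). Then $m_1=1-2\eta_{\tt P}\cdot\tfrac12\cdot 100 = 0.001$ and $m_2=1.00999$, giving new singular values $\approx\{10^{-4},\ 1.02\}$ and new condition number $\approx 1.02\times10^4$, versus the old $\kappa=100$. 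So the spectrum reorders and the condition number \emph{increases}. The windows would need to be tightened — roughly, one also needs $\eta_{\tt P}$ small enough that $\lvert m_1\rvert\sqrt{\sigma_{{\tt P},1}}\ge m_{k_{\tt P}}\sqrt{\sigma_{{\tt P},k_{\tt P}}}$, preventing the top direction from crossing below the bottom — before your case analysis can go through. For part~(2) your reasoning is fine and matches the paper: the shrinking multiplier sits on what is already the smallest singular value, the growing map $t\mapsto(1+ct)^2 t$ preserves order among the rest, and so no reordering occurs. The net assessment is that you took the same decomposition as the paper, got the window-to-multiplier translation right where the paper did not, and correctly localized the missing step; what you treated as ``routine algebra to defer'' is in fact where the theorem as stated breaks.
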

\begin{proofsketch}
There is a mismatch between the gradient update on $\theta$ and the condition number update, which is observed for $\mH\pa{\theta}$. To address this, we carefully leverage the structure of the problem, noting that $\mH\pa{\theta}$, unlike a general matrix, is symmetric and positive semidefinite, with identical left and right singular vectors. 
Exploiting this property, along with our algorithm design, ensures that the linearity in singular value updates is preserved when expanding $\mH\pa{\theta'}$ using the closed-form gradient in Theorem~\ref{thm:phigrad}. Consequently, a monotonic increase or decrease in the condition number can be guaranteed by appropriately selecting the step size. The full proof is provided in Appendix~\ref{app:kappaphi}.
\end{proofsketch}

\subsection{Additional Discussion}
\label{sec:add_discussion}
{\bf\noindent Implementation considerations.} 
At a glance, it may seem that to implement~\algref{alg:con_grad} using automatic differentiation packages,~\eg, Pytorch~\cite{paszke2019pytorch}, one would have to implement a custom optimizer and involve multiple update steps. Instead, we observe that by directly modifying the computation graph, it would only involve a single backward pass. This is done by introducing a ``dummy layer'' with an identified function as its forward pass and its backward pass multiplies the gradient by the inverse feature covariance matrix. The ``dummy layer'' implementation is inspired by prior works in gradient estimator~\cite{bengio2013estimating,roeder2017sticking}. Pseudo-code is provided in Appendix~\ref{sec:pseudocode}.

{\bf\noindent Limitations.} The monotonicity guarantees in Theorem~\ref{thm:kappaphi} serve as a theoretical justification for our proposed algorithm, albeit a partial reflection of the application setup. Note that the feature extractor is updated with the gradients of the two regularizers jointly together with that of the supervised loss and the guarantees may not linearly combine as such. In practice, maintaining the balance between $\kappa\pa{\mH_{\tt P}\pa{\theta}}$ and $\kappa\pa{\mH_{\tt H}\pa{\theta}}$ requires a proper choice of hyperparameters.

Next, the current framework we analyzed focuses on linear feature extractors and using linear probing for transfer learning. We are aware of the practical limitations of this setting. To address this, in the experiments, we empirically study the effect of the proposed method on non-linear models, \ie, deep-nets, and demonstrate our method's potential despite the theoretical gap.

\begin{table}[t]
    \centering
    \setlength{\tabcolsep}{3pt}
    \renewcommand{\arraystretch}{1.2}
    \caption{Quantitative results of immunization in House Price dataset~\cite{house-prices}, computed over 5 random seeds.
    }
    \label{tab:tabular}
    \vspace{6pt}
    \resizebox{\linewidth}{!}{
    \begin{tabular}{cccc}
    \specialrule{.15em}{.05em}{.05em}
       Method  
       & %
       \equref{eq:ir}~(i)$\uparrow$ &  \equref{eq:ir}~(ii) $\downarrow$  & \texttt{RIR} $\uparrow$  \\ \hline
        \bsla   & $ \bf 90.02 \pm \scriptstyle{3.773}$ & $72.415 \pm \scriptstyle{3.545}$ & $1.244 \pm \scriptstyle{0.021}$  \\
        \bslb  & $  7.053 \pm \scriptstyle{1.662}$ & $3.545 \pm \scriptstyle{0.880}$ & $2.001 \pm \scriptstyle{0.187}$  \\
        \bslc  & $ 1.518 \pm \scriptstyle{0.027}$ & $\bf 0.016 \pm \scriptstyle{0.001}$ & $92.58 \pm \scriptstyle{4.492}$  \\
        \bf Ours  & $18.92 \pm \scriptstyle{2.056}$ &   $ 0.053 \pm \scriptstyle{0.002}$ &  $\bf 356.20 \pm \scriptstyle{5.491}$  \\
    \specialrule{.15em}{.05em}{.05em}
    \end{tabular}
    }
\end{table}

\section{Experiments}
We evaluate the proposed~\algref{alg:con_grad} on regression and image classification tasks using linear models, and also explored immunizing non-linear models, \ie, deep-nets. Experiment and implementation details are provided in Appendix~\ref{sec:supp_exp}.

\myparagraph{Evaluation metrics.}
We introduce the \textit{relative immunization ratio (RIR)} to quantify the effectiveness of the immunization based on the ratio of the condition number of Hessian, defined as follows: %
\bea
\label{eq:ir}
\texttt{RIR} \triangleq \underbrace{\left(\frac{\kappa(\mH_{\tt H}(\theta_{\tt I}))}{  \kappa(\mH_{\tt H}(\mI))}\right)}_{\textbf{(i)}} 
\bigg/ \underbrace{\left(\frac{\kappa(\mH_{\tt P}(\theta_{\tt I}))}{  \kappa(\mH_{\tt P}(\mI))}\right)}_{\textbf{(ii)}}
\eea
where $\mI$ denotes the identity matrix.
Each term here measures the ratio between condition numbers with and without the pre-trained feature extractor on the (i) harmful task or (ii) on the pre-training task.

A successful immunization is characterized by:
\begin{enumerate}[(i), noitemsep,leftmargin=*,topsep=0em]
\item a large ratio 
$\frac{\kappa(\mH_{\tt H}(\theta_{\tt I}))}{\kappa(\mH_{\tt H}(\mI))}$, \ie, using the immunized feature extractor makes the optimization of linear probing more difficult on the harmful task.
\item a small ratio  $\frac{\kappa(\mH_{\tt P}(\theta_{\tt I}))}{\kappa(\mH_{\tt P}(\mI))})$, \ie, using the pre-trained extractor \textit{do not} make optimization more difficult on the pre-training task.
\end{enumerate}

To obtain a single metric, we compare (i) and (ii) relative to each other. In other words, an effective immunized model should have a relative immunization ratio $\texttt{RIR} \gg 1$. %

\begin{figure*}[t]
    \centering
    \small
    \renewcommand{\arraystretch}{0.1}
    \setlength{\tabcolsep}{6pt}
    \begin{tabular}{cc}
    $\quad\quad$ Norm ratio curve on $\gD_{\tt P}$ & $\quad\quad$ Norm ratio curve on $\gD_{\tt H}$ \\
        \includegraphics[width=0.475\linewidth]{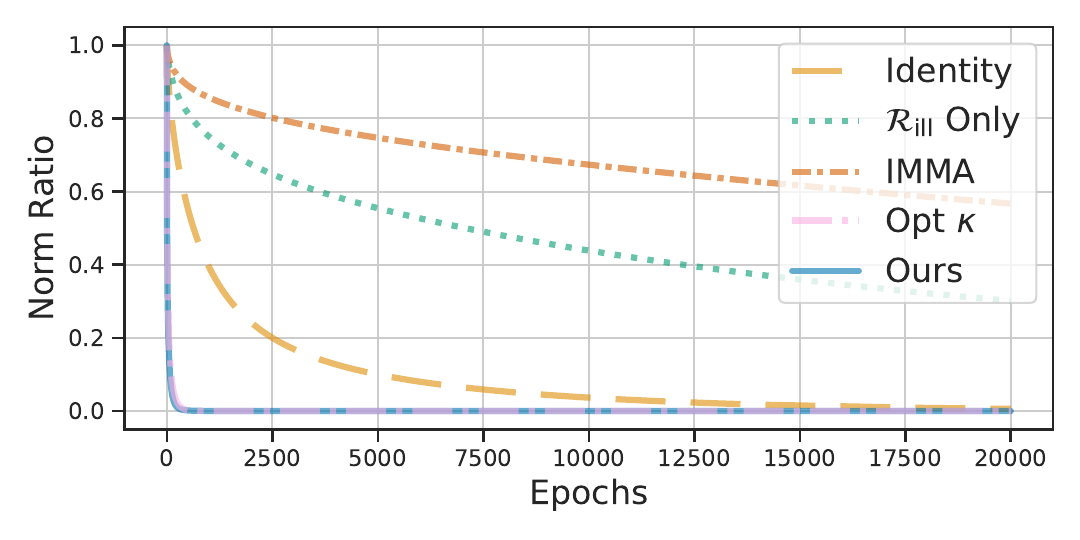} &
        \includegraphics[width=0.475\linewidth]{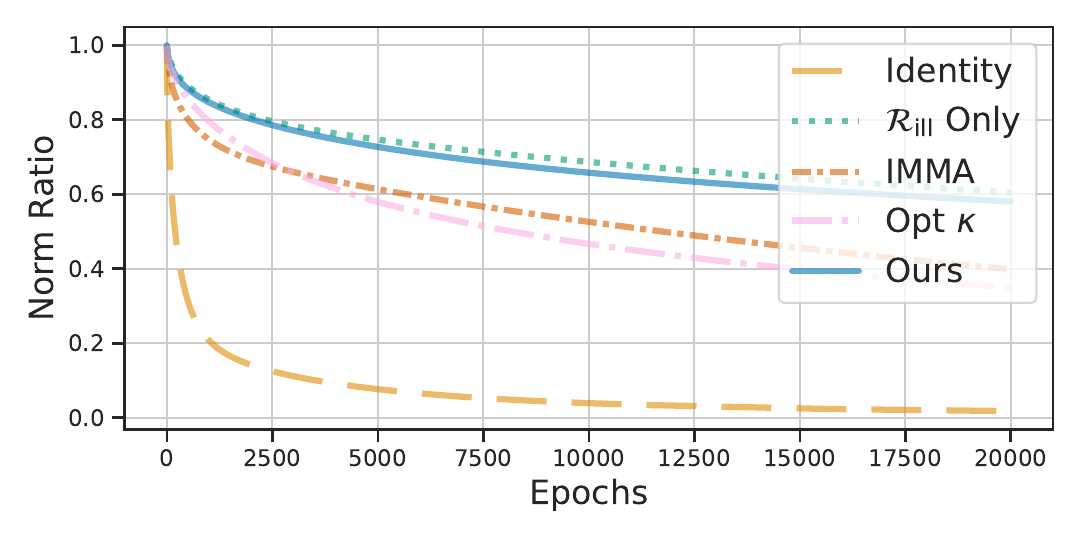}
    \end{tabular}
    \caption{Norm ratio~\equref{eq:norm_ratio} {\it vs.} Epochs. We visualize the convergence of linear probing of different immunized models using gradient descent with an exact line search. Here, \textit{Identity} corresponds to not using a feature extractor, \ie, $\theta_{\tt I} = \mI$. Observe that Ours made the convergence faster on $\gD_{\tt P}$ while slower in $\gD_{\tt H}$ when compared to the other baselines; consistent with the results in~\tabref{tab:tabular}.
    }
    \label{fig:tabular_curve}
\end{figure*}

\myparagraph{Baselines.} We consider three baselines for comparisons:\vspace{-3pt}
\begin{itemize}[noitemsep,leftmargin=*,topsep=0em]
    \item \bsla immunizes the model by minimizing only the regularizer $\gR_{\tt ill}(\mH_{\tt H})$ as defined in~\equref{eq:r2} using gradient descent.   
    \item \bslb~\cite{zheng2024imma} is formulated as a bi-level optimization program
    where both lower and upper tasks are solved via gradient descent. 
    In the lower-level, it minimizes $\gL(\gD_{\tt H}, \rvw, \theta)$ \wrt $\theta$ to obtain $\theta^\star$, and in the upper-level, it maximizes $\gL(\gD_{\tt H}, \rvw, \theta^\star) - \gL(\gD_{\tt P}, \omega, \theta^\star)$ \wrt $\theta$ by backpropagating through $\theta^\star$. %
    \item \bslc directly minimizes $\kappa(\mH_{\tt P}(\theta))-\kappa(\mH_{\tt H}(\theta))$ \wrt $\theta$ via gradient descent instead of using our proposed regularizers.
\end{itemize}

\subsection{Experiments on Immunizing Linear Models}

{\bf\noindent Linear regression task.} We use the regression task from the House prices dataset~\cite{house-prices}. We split the data into $\gD_{\tt P}$ and $\gD_{\tt H}$ based on the feature $\tt{MSZoning}$. For the pre-training task, we use the target of $\tt{LotArea}$ and for the harmful task we use the target of $\tt{SalePrice}$. Both $\gD_{\tt P}$ and $\gD_{\tt H}$ contain input vectors of dimension 79. We immunized the model by running~\algref{alg:con_grad} for 100 epochs with $\eta = 0.005$. We choose $\lambda_{\tt P}$ and $\lambda_{\tt H}$ by balancing the gradient norm of $\gR_{\tt well}$ and $\gR_{\tt ill}$. The implementation details can be found in Appendix~\ref{sec:app_training}.

In~\tabref{tab:tabular}, we present the empirical results of immunizing a linear feature extractor $\theta$. We observe that only \bslc and our method successfully immunize the model achieving an {\tt RIR} that's much greater than 1. For \bsla and \bslb, while they successfully made the harmful task more ill-conditioned,~\ie, $\equref{eq:ir}$~(i) went up, however, this is at the cost of making the other task ill-conditioned as well,~\ie, $\equref{eq:ir}$~(ii) went up.

Next, we demonstrate how a large condition number slows down the convergence of linear probing on the harmful task by analyzing the norm ratio defined as
\bea
\label{eq:norm_ratio}
\|\rvw_t - \rvw^\star\|_2^2 / \|\rvw_0 - \rvw^\star\|_2^2,
\eea
which measures how the classifier weights $\rvw_t$ at step $t$ approach the optimal weights $\rvw^\star$ during fine-tuning. Note, naively choosing a step size will not reflect the difference in condition number. Hence, we use the exact line search~\cite{boyd2004convex} which chooses the step size that minimizes the loss at each iteration. 

As illustrated in~\figref{fig:tabular_curve}, both our method and \bslc slow down convergence in $\gD_{\tt H}$ compared to Identity while accelerating convergence in $\gD_{\tt P}$. Furthermore, our method achieves a stronger immunization effect than \bslc. In contrast, \bsla and \bslb slowed the convergence on both the harmful task $\gD_{\tt H}$ and the pre-training task $\gD_{\tt P}$.

{\bf\noindent Image classification task.} For image classification, we conduct experiments using MNIST~\cite{lecun1998mnist}. The MNIST dataset consists of images over 10-digit classes, which can be formulated into 10 independent binary classification tasks. Across all pairs of tasks, we choose one to be the harmful task $\gD_{\tt H}$ and the other the pre-training $\gD_{\tt P}$ resulting in a total of 90 experiments. We ran~\algref{alg:con_grad} for 30 epochs with $\eta = 0.005$ for these experiments. The implementation details can be found in Appendix~\ref{sec:app_training}.

\begin{table}[t]   
    \centering
    \setlength{\tabcolsep}{3pt}
    \renewcommand{\arraystretch}{1.2}
    \caption{Quantitative results of immunization in MNIST~\cite{lecun1998mnist}, computed over 3 random seeds and averaged over all digit pairs. Note that \bslc has large STD in \texttt{RIR}, resulting in the deviation between \texttt{RIR} and the ratio of the averaged values.
    }
    \vspace{6pt}
    \resizebox{\linewidth}{!}{
    \begin{tabular}{cccc}
    \specialrule{.15em}{.05em}{.05em}
       Method & \equref{eq:ir}~(i)$\uparrow$ &  \equref{eq:ir}~(ii) $\downarrow$  & \texttt{RIR} $\uparrow$  \\ \hline
       \bsla  &  $ \bf 14.832 \pm \scriptstyle{1.039}$ & $8.654 \pm \scriptstyle{0.606}$ & $1.933 \pm \scriptstyle{0.046}$  \\
       \bslb   & $ 4.522 \pm \scriptstyle{0.139}$ & $2.774 \pm \scriptstyle{0.094}$ & $1.774 \pm \scriptstyle{0.041}$  \\
      \bslc   & $ 3.196 \pm \scriptstyle{1.225}$ & $0.756 \pm \scriptstyle{1.171}$ & $69.73 \pm \scriptstyle{54.00}$  \\
     \bf Ours  & $ 6.345 \pm \scriptstyle{0.188}$ & $\bf 0.149 \pm \scriptstyle{0.009}$ & $\bf 70.04 \pm \scriptstyle{3.280}$  \\
    \specialrule{.15em}{.05em}{.05em}
    \end{tabular}
    }
    \label{tab:mnist}
\end{table}

\begin{figure*}[t]
    \centering
    \renewcommand{\arraystretch}{1.5} 
    \setlength{\tabcolsep}{5pt} 
    \begin{tabular}{cccc} 
     \bsla & \bslb & \bslc & \bf Ours \\
        \includegraphics[width=0.232\linewidth]{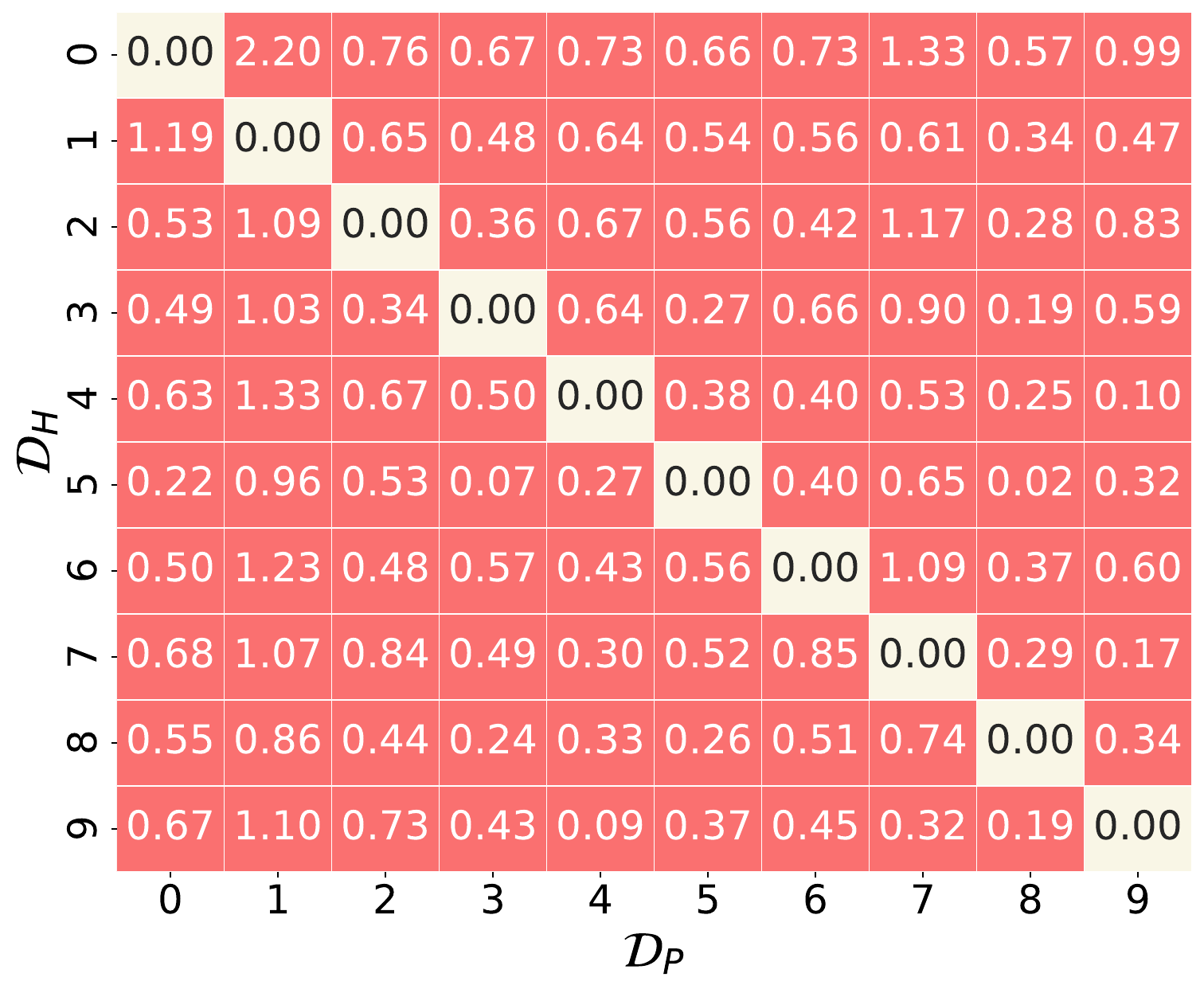} &
        \includegraphics[width=0.232\linewidth]{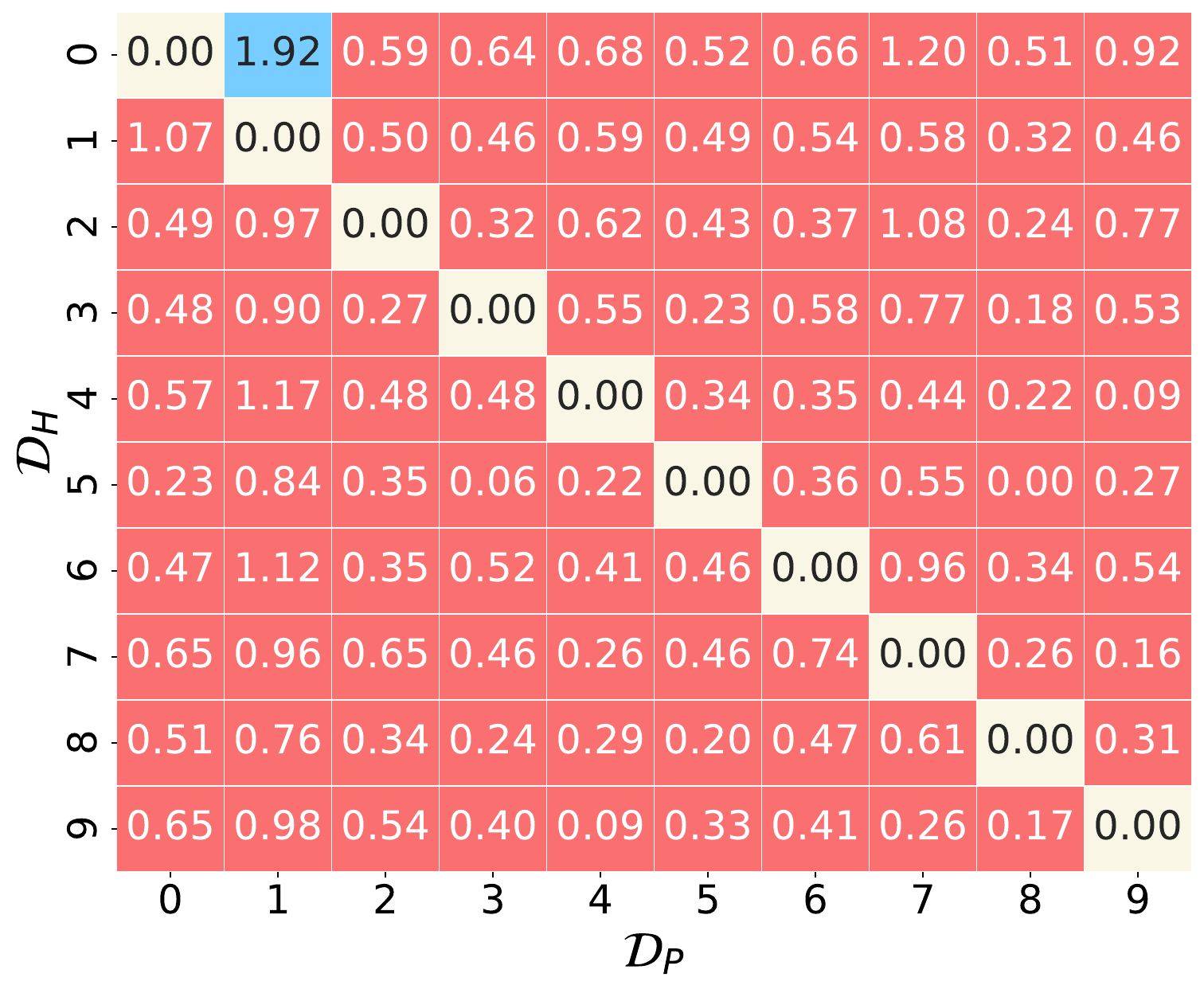} &
        \includegraphics[width=0.232\linewidth]{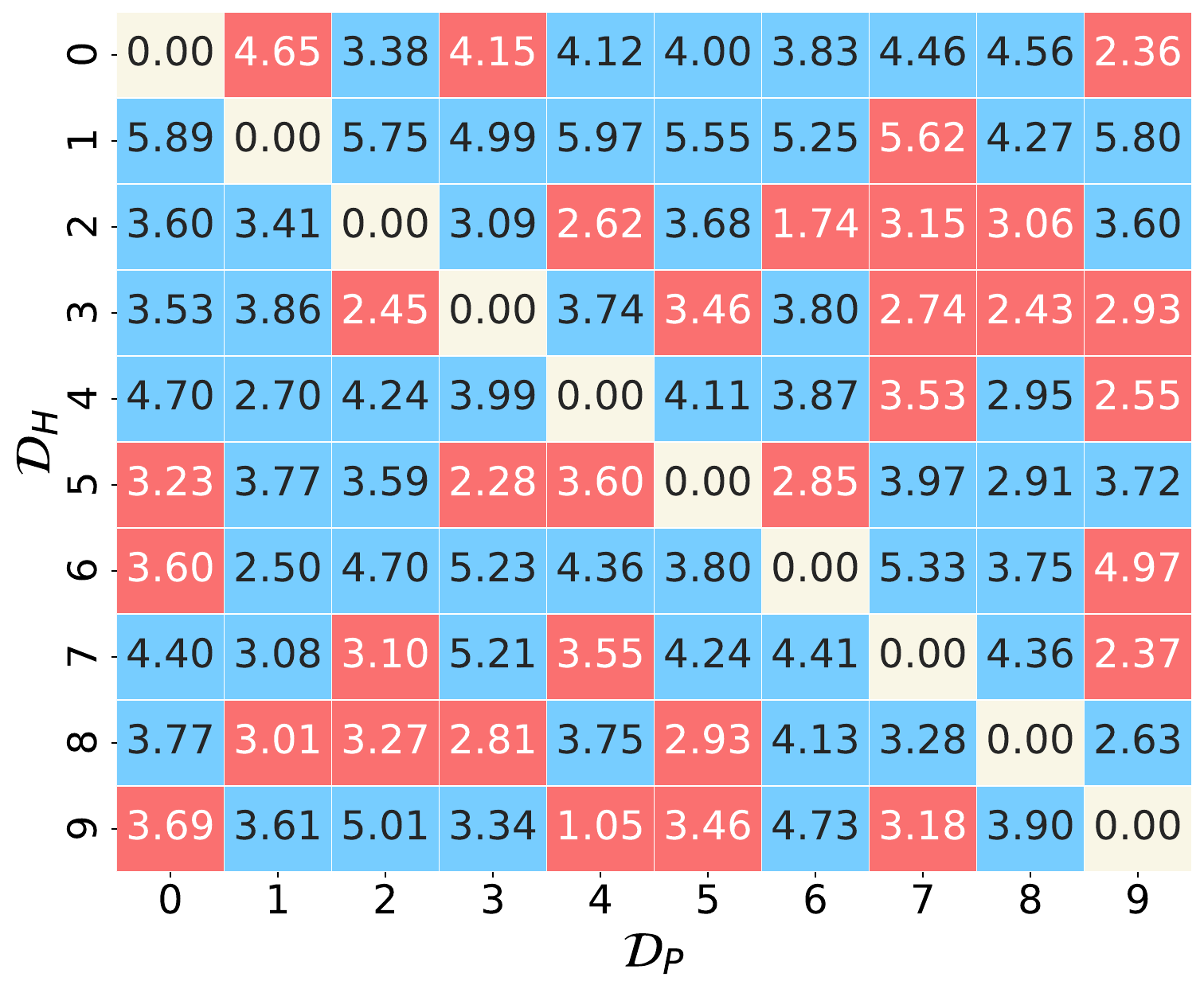} &
        \includegraphics[width=0.232\linewidth]{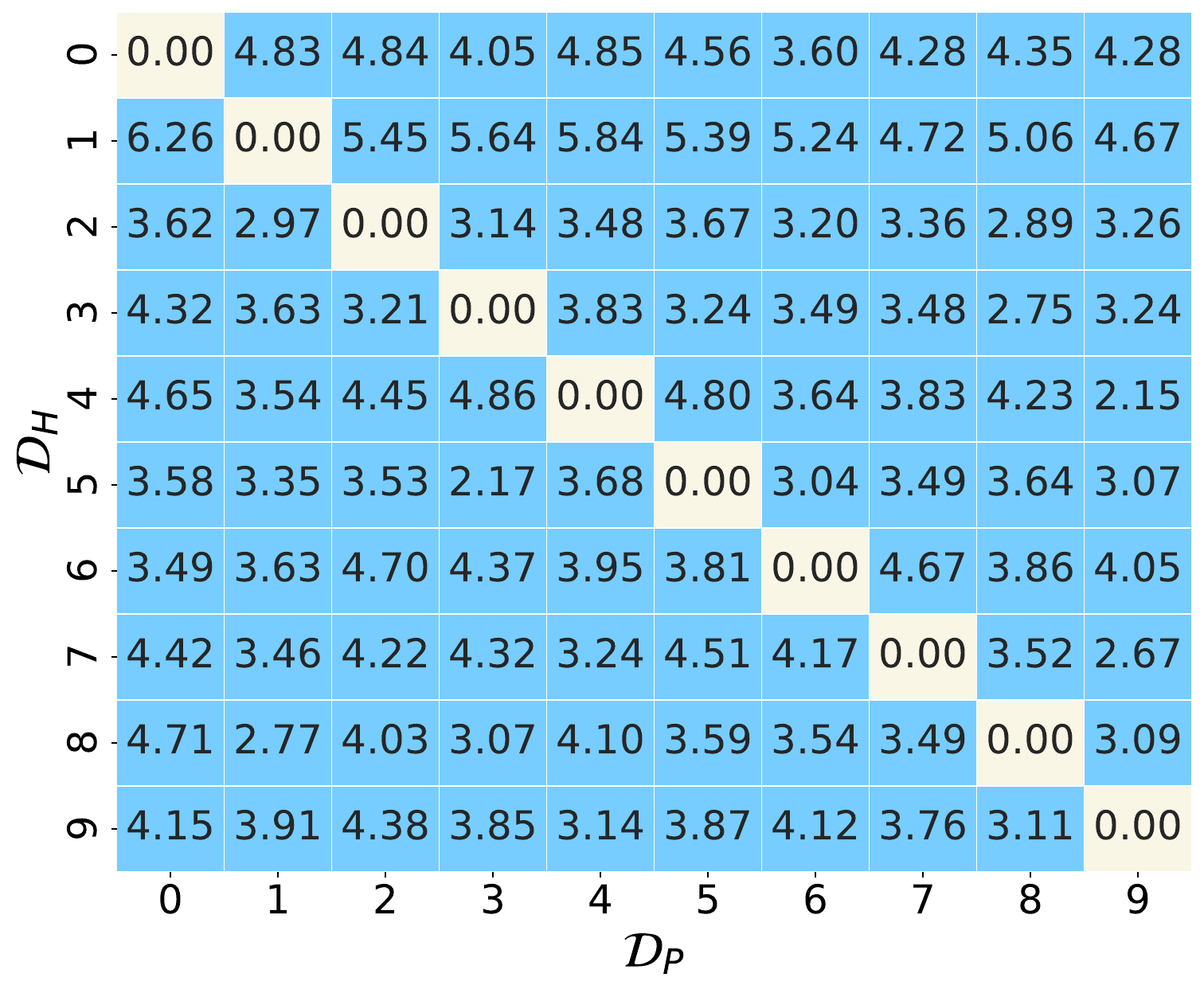} 
    \end{tabular}
    \caption{Visualization of {\bf $\log(\texttt{RIR})$ of binary classification tasks created from MNIST.} Each element in the figure corresponds to the $\log(\texttt{RIR})$ of a model immunized against $\gD_{\tt H}$ from the pre-training task of $\gD_{\tt P}$. We color the block \textcolor{succ}{blue} if $\texttt{RIR} \gg 1$, and \textcolor{fail}{red} otherwise.
    Our method \textcolor{succ}{succeeds} in immunizing the model across all digit pairs, while the baselines \textcolor{fail}{failed} in most pairs. 
    }
    \label{fig:mnist_heatmap}
\end{figure*}

In~\tabref{tab:mnist}, we present the quantitative results on these binary task pairs. For each entry, the values are averaged over all 90 pairs. Based on the averaged results, we observe that our method effectively immunizes the linear feature extractor $\theta$ on $\gD_{\tt H}$ without compromising performance on $\gD_{\tt P}$. Although \bslc achieves comparable $\tt RIR$ with our method, the variances of the metric values are relatively large. This indicates that \bslc is sensitive to random initialization while our method is robust. 

In~\figref{fig:mnist_heatmap} we further analyze the results by visualizing the $\log (\tt RIR)$ for each digit pair. A \textcolor{succ}{blue block} indicates successful immunization, while a \textcolor{fail}{red block} indicates failure. It can be observed that \bsla fails for all digit pairs, \bslb only succeeds in one pair, and \bslc fails for 32 out of 90 pairs. In contrast, our method achieves success across all digit pairs demonstrating its effectiveness for immunization. 

Thus far, we have conducted experiments strictly following the immunization setting that we have proposed in~\secref{sec:immu_cond_def}. However, one limitation of the setting is that the feature extractor is assumed to be linear, which limits its real-world potential. To further study the practicality of our method, despite the theoretical gap, we conduct experiments with non-linear models, \ie, deep-nets, on a larger-scale image classification dataset of ImageNet.

\subsection{Experiments on Immunizing Deep-Nets}
\myparagraph{Immunization task.}
In this experiment, we consider a common setup of linear probing on models pre-trained on ImageNet~\cite{deng2009imagenet}, \ie, ImageNet serves as $\gD_{\tt P}$. For $\gD_{\tt H}$ we experiment with the Stanford Cars Dataset~\cite{krause20133d} and Country211 Dataset~\cite{radford2021learning}. 
These datasets have been previously used for studying transfer learning~\cite{radford2021learning} for image classification.
More dataset details are deferred to Appendix~\ref{sec: app_dataset}.

\myparagraph{Experiment setup.} 
For non-linear models, we experiment with the architecture of ResNet18~\cite{he2016deep} and ViT~\cite{dosovitskiy2020image}. Here we study a practical setting where a given model with parameters $\theta_0$ has already been trained on $\gD_{\tt P}$ and would undergo immunization to obtain $\theta_{\tt I}$ to be released to the public.

Note that as we are now using an initialization of $\theta_0$ and a non-linear feature extractor $f_\theta$, we extend the {\tt RIR} metric to consider those changes. Specifically, we propose
\bea
\label{eq:irp}
\texttt{RIR}_{\theta_0} \triangleq \underbrace{\left(\frac{\kappa(\tilde\mH_{\tt H}(\theta_{\tt I}))}{  \kappa(\tilde\mH_{\tt H}(\theta_0))}\right)}_{\textbf{(i)}}
\bigg/ \underbrace{\left(\frac{\kappa(\tilde\mH_{\tt P}(\theta_{\tt I}))}{  \kappa(\tilde\mH_{\tt P}(\theta_0))}\right)}_{\textbf{(ii)}}
\eea
where we compare the immunized model $\theta_{\tt I}$ relative to the initialization model $\theta_{0}$. Here, $\tilde{\mH}(\theta)$ denotes the Hessian for linear probing on $\gD_{\tt H}$ with a non-linear $f_\theta$, \ie,
\bea
\tilde{\mH_{\tt H}}(\theta) = \nabla^2_{\rvw}\gL(\gD_{\tt H}, \rvw, \theta) = \tilde{\mX}_{\tt H}(\theta)^\top\tilde{\mX}_{\tt H}(\theta).
\eea
Here, $\tilde{\mX}_{\tt H}(\theta) \triangleq [f_\theta(\vx); \forall \vx \in \gD_{\tt H}] \in \sR^{N \times D_{\tt hid}}$ denotes the concatenation of the features, with dimensions $D_{\tt hid}$, extracted from the input data.  Due to memory constraints, we approximate~\equref{eq:irp} by
randomly sampling 20 groups of training data, each containing 100 samples, and reporting the average values. 

Finally, we also report the task performance after immunization. This is because, as the feature extractor is non-linear we are no longer guaranteed to retain the task performance. For ResNet18, we immunize only the last two convolutional blocks of the trained feature extractor and keep the rest of the parameters frozen as in $\theta_0$. For ViT, we only immunize the final transformer block. 
We optimize~\equref{eq:immu_obj} using SGD with momentum, the default optimizer on ImageNet. Further details are provided in Appendix~\ref{sec:app_training}.

\begin{table*}[t]
    \centering
    \renewcommand{\arraystretch}{1.2}
    \caption{Quantitative results of immunization of model pre-trained on ImageNet~\cite{deng2009imagenet}, computed over 3 random seeds. The $\gD_{\tt P}$ test accuracy for the off-the-shelf model initialization of $\theta_0$ on ResNet18 is 68.24\% and that of ViT is 81.78\%. We report $\tt RIR_{\theta_0}$ to measure the quality of immunization. Test accuracy of $\gD_{\tt P}$ is reported to ensure the performance on the pre-training task is maintained.
    }
    \vspace{6pt}
    \resizebox{1\textwidth}{!}{
    \begin{tabular}{cc@{\hspace{0.5cm}}cccc@{\hspace{0.5cm}}c@{\hspace{0.5cm}}cccc}
    \specialrule{.15em}{.05em}{.05em}
       \multirow{2}{*}{$\gD_{\tt H}$} & \multirow{2}{*}{Method}  & \multicolumn{4}{c}{ResNet18}  && \multicolumn{4}{c}{ViT}  \\ \cline{3-6} 
       \cline{8-11}
       && \equref{eq:irp}~(i)$\uparrow$ &  \equref{eq:irp}~(ii) $\downarrow$  & $\texttt{RIR}_{\theta_0}$ $\uparrow$ & $\gD_{\tt P}$ Test Acc. (\%) $\uparrow$ && \equref{eq:irp}~(i)$\uparrow$ &  \equref{eq:irp}~(ii) $\downarrow$  & $\texttt{RIR}_{\theta_0}$ $\uparrow$ & $\gD_{\tt P}$ Test Acc. (\%) $\uparrow$ \\ \hline
        \multirow{6}{*}{\rotatebox{90}{Cars}} 
       & Init. $\theta_0$ & $1.0$ & $1.0$ & $1.0$ & $68.24$ &  & $1.0$ & $1.0$ & $1.0$  & $81.78$ \\
       \hline
       & \bsla   & $ 1.878 \pm \scriptstyle{0.034}$ & $ 1.786 \pm \scriptstyle{0.025}$ & $ 1.057 \pm \scriptstyle{0.026}$ & $ \bf 63.84 \pm \scriptstyle{0.292}$ &  & $ \bf 13.121 \pm \scriptstyle{0.038}$ & $ 4.097 \pm \scriptstyle{0.098}$ & $ 3.342 \pm \scriptstyle{0.048}$  & $ 82.21 \pm \scriptstyle{0.035}$\\
       & \bslb  & $ 0.866 \pm \scriptstyle{0.002}$ & $ 0.889 \pm \scriptstyle{0.001}$ & $ 0.974 \pm \scriptstyle{0.002}$  & $ 63.57 \pm \scriptstyle{0.234}$ & & $ 1.422 \pm \scriptstyle{0.006}$ & $ 2.090 \pm \scriptstyle{0.043}$ & $ 0.702 \pm \scriptstyle{0.007}$ & $ 81.89 \pm \scriptstyle{0.010}$ \\
      & \bslc   & $ 1.217 \pm \scriptstyle{0.021}$ & $ 0.798 \pm \scriptstyle{0.005}$ & $ 1.527 \pm \scriptstyle{0.019}$ & $ 63.65 \pm \scriptstyle{0.148}$ & & $ 3.598 \pm \scriptstyle{0.510}$ &  $ \bf 0.171 \pm \scriptstyle{0.033}$ & $ 26.369 \pm \scriptstyle{2.814}$ & $ 82.51 \pm \scriptstyle{0.085}$\\
      & \bf Ours & $ \bf 2.386 \pm \scriptstyle{0.442}$ & $  \bf 0.699 \pm \scriptstyle{0.062}$  & $  \bf 3.467 \pm \scriptstyle{0.358}$ & $ 62.36 \pm \scriptstyle{0.173}$ & & $ 7.945 \pm \scriptstyle{0.247}$ & $ 0.323 \pm \scriptstyle{0.086}$ & $ \bf 34.517 \pm \scriptstyle{0.886}$ & $ \bf 82.79 \pm \scriptstyle{0.200}$\\ \hdashline
      \multirow{4}{*}{\rotatebox{90}{Country211}} 
       & \bsla   & $ \bf 20.727 \pm \scriptstyle{0.791}$ & $ 20.675 \pm \scriptstyle{1.685}$ & $ 1.038 \pm \scriptstyle{0.05}$ & $ 62.17 \pm \scriptstyle{1.599}$ && $\bf 69.291 \pm \scriptstyle{1.198}$ & $ 63.519 \pm \scriptstyle{6.62}$ & $ 1.122 \pm \scriptstyle{0.097}$ & $ 80.73 \pm \scriptstyle{0.129}$\\
       & \bslb  & $ 0.791 \pm \scriptstyle{0.005}$ & $0.814 \pm \scriptstyle{0.006}$ & $ 0.972 \pm \scriptstyle{0.007}$ & $ \bf 67.03 \pm \scriptstyle{0.146}$ & & $ 6.242 \pm \scriptstyle{0.203}$ & $ 7.599 \pm \scriptstyle{0.717}$ & $ 0.845 \pm \scriptstyle{0.048}$ & $ 82.47 \pm \scriptstyle{0.036}$\\
      & \bslc   & $ 1.538 \pm \scriptstyle{0.155}$ & $ 1.053 \pm \scriptstyle{0.091}$ & $ 1.472 \pm \scriptstyle{0.043}$ & $ 66.81 \pm \scriptstyle{0.115}$ && $ 4.589 \pm \scriptstyle{0.079}$  & $\bf 0.300 \pm \scriptstyle{0.106}$ & $ 16.498 \pm \scriptstyle{5.183}$ & $ 82.79 \pm \scriptstyle{0.023}$ \\ 
      & \bf Ours  & $ 3.287 \bf  \pm \scriptstyle{0.33}$ & $ \bf 0.399 \pm \scriptstyle{0.034}$ & $ \bf 8.714 \pm \scriptstyle{0.672}$ & $ 65.01 \pm \scriptstyle{0.143}$ & & $ 20.894 \pm \scriptstyle{1.425}$ & $ 0.700 \pm \scriptstyle{0.082}$ & $\bf 41.341 \pm \scriptstyle{0.967}$ & $\bf 83.17 \pm \scriptstyle{0.075}$\\  
    \specialrule{.15em}{.05em}{.05em}
    \end{tabular}
    }
    \label{tab:imagenet}
\end{table*}

\myparagraph{Results.}
We present the quantitative results of immunizing deep-nets in~\tabref{tab:imagenet}. On both Cars and Country211 datasets, our method demonstrates strong performance when applied to ResNetg18 and ViT, as indicated by $\texttt{RIR}_{\theta_0} \gg 1$. In comparison, \bsla and \bslb did not effectively immunize the models in all evaluated settings. Next, \bslc also succeeds in immunizing the models but our proposed method outperforms it in $\texttt{RIR}_{\theta_0}$.

Next, we report the test accuracy of the immunized models on $\gD_{\tt P}$, \ie, ImageNet1K. On the ResNet18 architecture, we observe a reduction in test-accuracy from the initialization model $\theta_0$ of 68.24\% to 62.36\% when $\gD_{\tt H}$ is Cars and 65.01\% when $\gD_{\tt H}$ is Country211. Interestingly, on the ViT architecture the test-accuracy \textit{increased} from 81.78\% to 82.79\% for Cars, and 83.17\% for Country211. These results suggested that it is possible to immunize a non-linear model against the harmful task without losing the effectiveness of the other task. 

To further show a larger~\equref{eq:irp}~(i) indicating that a model is better immunized, we report the linear probed (fine-tuned) results on different feature extractors and provide the test accuracy on $\gD_{\tt H}$, where $\gD_{\tt H}$ is the Stanford Cars dataset. As shown in~\figref{fig:accuracy_curve}, our method exhibits the slowest convergence rate on both ResNet18 and ViT, indicated by the lowest test accuracy compared with baselines. In summary, our method remains effective on deep-nets, producing models that satisfy the requirements of an immunized model as in Definition~\ref{def:immu}.

\begin{figure*}[t]
    \centering
    \small
    \renewcommand{\arraystretch}{0.1}
    \setlength{\tabcolsep}{6pt}
    \begin{tabular}{cc}
    $\quad\quad$ Fine-tuning accuracy with ResNet-18 & $\quad\quad$ Fine-tuning accuracy with ViT \\
        \includegraphics[width=0.475\linewidth]{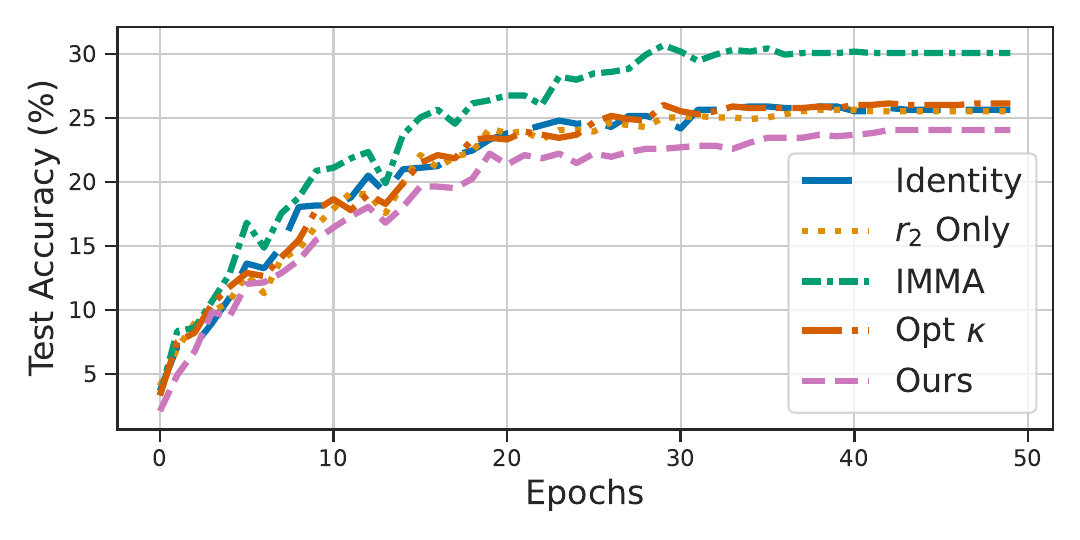} &
        \includegraphics[width=0.475\linewidth]{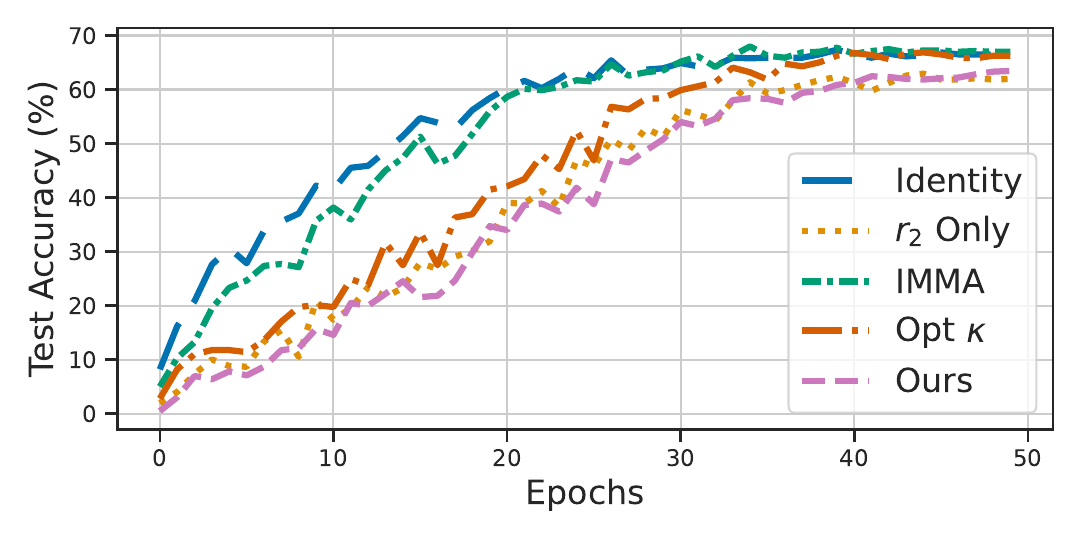}
    \end{tabular}
    \caption{Test accuracy {\it vs.} Fine-tuning Epochs on $\gD_{\tt H}$. We visualize the test accuracy of linear probing on ImageNet of different immunized models using gradient descent. Here $\gD_{\tt H}$ is the Stanford Cars dataset.
    }
    \label{fig:accuracy_curve}
\end{figure*}

\section{Related Work}
\vspace{0.5pt}
We briefly discuss related research on AI safety and the condition number.

\myparagraph{AI safety, model un/re-learning, and immunization.} 
AI safety has received attention lately, specifically in generative AI, due to the impressive progress. We refer the reader to~\citet{brundage2018malicious, marchal2024generative,ISRSAA2025} for a more in-depth discussion on this topic. 
In the following, we will discuss model unlearning, one of the ways to mitigate the potential of misuse, followed by model immunization, which protects a model against relearning.

Machine unlearning was first introduced by~\citet{cao2015towards} to remove a user’s private information from a model. Approximate unlearning aims to achieve this by modifying the pre-trained model directly using the specific data samples to erase, without requiring full retraining~\cite{nguyen2020variational,wu2022puma,guo2019certified,sekhari2021remember,neel2021descent}.
In the context of text-to-image models, several methods for concept erasure have been proposed. These include inference-time approaches~\cite{brack2023sega,schramowski2023safe}, fine-tuning of diffusion models~\cite{gandikota2023erasing,kim2023towards,kumari2023ablating}, and direct model editing~\cite{zhang2024forget,gandikota2024unified}. 

While promising, these works still face potential risks of the re-emergence/re-learning of harmful data~\cite{zheng2024imma,zheng2024learning,zhan2023removing,bertran2024reconstruction,xu2025unlearning}. To avoid relearning or further fine-tuning on harmful data,~\citet{zheng2024imma} propose to immunize the text-to-image models against malicious fine-tuning and~\citet{zheng2025multi} extend model immunization to multi-concept settings. Recent work highlights the importance of preventing re-finetuning or distillation on harmful tasks in language models~\cite{huang2024harmful,savani2025antidistillation} and encoder probing~\cite{ding2025probe}, which is closely related to our goal. While we also study the task of model immunization, different from~\citet{zheng2024imma} that primarily focuses on empirical applications on generative tasks, our work aims to provide a more principled understanding of model immunization by analyzing it through the lens of the condition number.

\myparagraph{Minimizing Condition Number.} 
Condition number has been a key factor in the convergence rates and accuracies of iterative methods,~\eg, Jacobi method~\cite{arioli1985relations}, steepest descent~\cite{luenberger1984linear}, conjugate gradient~\cite{hestenes1952methods}, for solving optimization problems from classic linear systems~\cite{saad2003iterative} to those with general nonlinear objectives~\cite{nesterov2018lectures} concerning modern machine learning applications. It is widely observed that a small condition number tends to speed up convergence and improve accuracy whereas a large condition number could lead to an unstable optimization procedure~\cite{saarinen1993ill, kress2012numerical, bengio2017deep,guille2021study}. 

\vspace{2pt}
As a result, methods to minimize the condition number in various contexts have been proposed. Preconditioning~\cite{evans1968use}, a technique that involves finding a matrix, \ie, the preconditioner, to multiply with the original matrix, resulting in a new matrix with a significantly smaller condition number, is widely used for solving linear systems. The preconditioner can be constructed using methods such as semidefinite programming~\cite{jambulapati2020fast, jambulapati2023structured, qu2024optimal} or matrix equilibration~\cite{van1969condition}, and has recently found applications in deep learning \cite{saratchandran2024weight}. 

\vspace{2pt}
Most related to this work, \citet{balazs2024trainable} propose to regularize the condition number of weight matrices by directly adding the condition number term into the optimization objective and applying (sub)gradient descent. Observing that the condition number is discontinuous and nonconvex, \citet{Nenov2024smoothsailing} proposed a differentiable regularizer that minimizes the matrix condition number with a monotonic decrease guarantee if optimized with gradient descent. To the best of our knowledge, no notable effort has been made to increase or maximize the condition number.

\vspace{2pt}
\section{Conclusion}
We propose a framework for studying model immunization through the condition number of the Hessian matrix. We show that immunization can be achieved by increasing the condition number of harmful datasets while keeping it stable for the pre-training task. To achieve this, we introduce two differentiable regularizers and propose an algorithm that incorporates these regularizers into a gradient-based optimization algorithm. 
Empirical results on both linear and deep models demonstrate the effectiveness of our approach to model immunization. We believe that our proposed framework 
is a first step towards a more principled understanding of model immunization and will ultimately make open-sourced models safer.

\section*{Acknowledgements}
This project is supported in part by an NSF Award \#2420724 and the Ross-Lynn Research Scholar Grant.

\section*{Impact Statement}
This paper presents work whose goal is to advance the field of 
Machine Learning and Optimization. While there are many potential societal consequences 
of our work, we believe that the benefits outweigh the harms. Specifically, the topic of model immunization is towards making AI safer.

\bibliographystyle{icml2025}
\bibliography{ref}

\clearpage
\onecolumn
\appendix

{\bf \Large Appendix}

\noindent The appendix is organized as follows:
\begin{itemize}[noitemsep,leftmargin=*,topsep=0em]
    \item In~\secref{app:r1thm}, we provide the complete statements of the properties of $\gR_{\tt well}(\mS)$ for minimizing the condition number.
    \item In~\secref{sec:supp_proof}, we provide the complete proof for the Theorems stated in the main paper.
    \item In~\secref{sec:supp_exp}, we provide additional experiment details. The code will be open-sourced upon the acceptance of this paper.
\end{itemize}

\section{Properties of the Condition Number Minimizing Regularizer} \label{app:r1thm}
\begin{theorem} [Properties of $\kappa$-minimizing regularizer $\gR_{\tt well}(\mS)$, Theorem 2.1, 2.2, 3.1, 3.2 in~\citet{Nenov2024smoothsailing}]  \label{thm:roneS} \ 
\begin{itemize} \vspace{-8pt}
    \item [(1)] [Nonnegativity] \ $\forall \ \mS \in \R^{p_r \times p_c}$, $\gR_{\tt well}(\mS) \geq 0$. If $\mS \neq \bm{0}$, $\gR_{\tt well}(\mS) = 0$ if and only if $\mS$ has full rank and $\kappa(\mS) = 1$. \hspace{-10pt}\vspace{-5pt}
    \item [(2)] [Upper Bound] \ $\kappa(\mS) \leq e^{p \pa{\sigma_\mS^{\tt min}}^{-2} \gR_{\tt well}(\mS)}$, i.e., $r(\mS)$ is an upper bound of $\log (\kappa(\mS))$ as long as $\sigma_\mS^{\tt min}$ is bounded away from 0. \vspace{-5pt}
    \item [(3)] [Differentiability] \ If $\sigma_\mS^{\tt max} = \sigma_1 > \sigma_i$ for any $i > 1$, i.e., $\sigma_\mS^{\tt max}$ is unique, then $\gR_{\tt well}(\mS)$ is differentiable and its gradient is given by 
    $
    \nabla_\mS \gR_{\tt well}(\mS) = \sigma_1\vu_1 \vv_1^\top - \frac{1}{p}\mS
    $. \vspace{-5pt}
    \item [(4)] [Monotonic Decrease] \ If $\sigma_\mS^{\tt max}$ is unique, update $S$ with $\nabla_\mS \gR_{\tt well}(\mS)$ such that $S' = S - \eta_1 \nabla_\mS \gR_{\tt well}(\mS)$ for $0 < \eta_1 < \frac{\kappa(\mS)-1}{\pa{1-\frac{1}{p}}\kappa(\mS) + \frac{1}{p}}$, then $\kappa(\mS') < \kappa(\mS)$. \vspace{-8pt}
\end{itemize}
\end{theorem}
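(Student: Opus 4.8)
The plan is to set $D(\mS) \triangleq \tfrac{1}{2k}\norm{\mS}^2_F - \tfrac12\pa{\sigma_\mS^{\tt min}}^2$ so that $\gR_{\tt ill}(\mS) = 1/D(\mS)$, and to exploit the rewriting
\[ D(\mS) = \frac{1}{2k}\sum_{i=1}^k\pa{\sigma_i^2 - \sigma_k^2}, \]
which follows from $\norm{\mS}^2_F = \sum_{i=1}^k\sigma_i^2$ and $\sigma_\mS^{\tt min}=\sigma_k$. Parts (1) and (2) then reduce to elementary estimates on $D(\mS)$. For (1), every summand above is nonnegative since $\sigma_k$ is the smallest singular value, hence $D(\mS)\ge 0$ and $\gR_{\tt ill}(\mS)=1/D(\mS)\ge 0$; moreover $\gR_{\tt ill}(\mS)=0$ forces $D(\mS)=\infty$, which with $\sigma_k$ finite and positive forces $\sigma_\mS^{\tt max}=\infty$, i.e. $\kappa(\mS)=\infty$, and the implication $\gR_{\tt ill}(\mS)=0\Rightarrow\kappa(\mS)=\infty$ also drops out of (2). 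For (2), I would start from $2\log\kappa(\mS)=\log(\sigma_1^2)-\log(\sigma_k^2)$, apply the mean value theorem to $t\mapsto\log t$ on $[\sigma_k^2,\sigma_1^2]$ to obtain $2\log\kappa(\mS)\ge(\sigma_1^2-\sigma_k^2)/\sigma_1^2$, and bound the denominator by $2D(\mS)=\tfrac1k\sum_{i=1}^k(\sigma_i^2-\sigma_k^2)\le\sigma_1^2-\sigma_k^2$; combining gives $\log\kappa(\mS)\ge D(\mS)/\sigma_1^2$, i.e. $1/\log\kappa(\mS)\le\sigma_1^2/D(\mS)=(\sigma_\mS^{\tt max})^2\gR_{\tt ill}(\mS)$.

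For (3), differentiability of $\gR_{\tt ill}$ reduces, via the quotient rule, to that of $D$. The Frobenius term $\tfrac{1}{2k}\norm{\mS}^2_F$ is smooth with gradient $\tfrac1k\mS$. The map $\mS\mapsto\sigma_\mS^{\tt min}$ is in general nonsmooth, but by the standard characterization of the subdifferential of a singular value \citep{lewis1995convex}, when $\sigma_\mS^{\tt min}=\sigma_k$ is a simple singular value the subdifferential collapses to the singleton $\{\vu_k\vv_k^\top\}$, so $\tfrac12\pa{\sigma_\mS^{\tt min}}^2$ is differentiable with gradient $\sigma_k\vu_k\vv_k^\top$. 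Hence $\nabla_\mS D(\mS)=\tfrac1k\mS-\sigma_k\vu_k\vv_k^\top$ and $\nabla_\mS\gR_{\tt ill}(\mS)=-\nabla_\mS D(\mS)/D(\mS)^2$, which is the claimed formula.

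Part (4) is the heart of the argument, and I expect its bookkeeping to be the main obstacle. The key observation is that, after substituting the compact SVD $\mS=\sum_{i=1}^k\sigma_i\vu_i\vv_i^\top$ into the gradient from (3),
\[ \nabla_\mS\gR_{\tt ill}(\mS)=\frac{1}{D(\mS)^2}\left[\pa{1-\tfrac1k}\sigma_k\vu_k\vv_k^\top-\tfrac1k\sum_{i=1}^{k-1}\sigma_i\vu_i\vv_i^\top\right], \]
so $\nabla_\mS\gR_{\tt ill}(\mS)$ carries exactly the singular vectors of $\mS$. Consequently $\mS'=\mS-\eta_2\nabla_\mS\gR_{\tt ill}(\mS)$ keeps the pairs $(\vu_i,\vv_i)$ and merely rescales the singular values: $\sigma_i'=\sigma_i\big(1+\tfrac{\eta_2}{kD(\mS)^2}\big)$ for $i<k$ and $\sigma_k'=\sigma_k\big(1-\tfrac{(k-1)\eta_2}{kD(\mS)^2}\big)$ (here $k\ge2$, since $\gR_{\tt ill}$ requires $D(\mS)>0$). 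The stated cap $\eta_2<\tfrac{k}{k-1}D(\mS)^2$ is precisely what makes $\sigma_k'>0$, so the rank is preserved and the rescaled list is a bona fide compact SVD of $\mS'$; and because the $i<k$ factors all exceed $1$, the $i=k$ factor lies in $(0,1)$, and $\sigma_k<\sigma_i$ for $i<k$ by the uniqueness hypothesis, $\sigma_1'$ remains the largest and $\sigma_k'$ the smallest singular value of $\mS'$. Therefore
\[ \kappa(\mS')=\frac{\sigma_1'}{\sigma_k'}=\kappa(\mS)\cdot\frac{1+\eta_2/(kD(\mS)^2)}{1-(k-1)\eta_2/(kD(\mS)^2)}>\kappa(\mS), \]
since the fraction has numerator greater than $1$ and denominator in $(0,1)$. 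The genuinely delicate point is not this last computation but verifying that after the update no singular value overtakes $\sigma_1'$ or drops below $\sigma_k'$ and that positivity and ordering are maintained for the chosen step size; once the shared-singular-vector structure is in hand, the rest is routine.
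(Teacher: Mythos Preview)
Your proposal addresses the wrong theorem. The stated result concerns $\gR_{\tt well}(\mS) = \tfrac{1}{2}\norm{\mS}_2^2 - \tfrac{1}{2p}\norm{\mS}_F^2$, the $\kappa$-\emph{minimizing} regularizer built around the \emph{largest} singular value: item~(1) asks for $\gR_{\tt well}=0\Leftrightarrow\kappa=1$, item~(2) for the bound $\kappa\le e^{p(\sigma_\mS^{\tt min})^{-2}\gR_{\tt well}}$, item~(3) for differentiability under uniqueness of $\sigma_\mS^{\tt max}$ with gradient $\sigma_1\vu_1\vv_1^\top-\tfrac{1}{p}\mS$, and item~(4) for a monotonic \emph{decrease} of $\kappa$. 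Every step of your write-up instead treats $\gR_{\tt ill}(\mS)=1/D(\mS)$, the $\kappa$-\emph{maximizing} regularizer built around $\sigma_\mS^{\tt min}$: your part~(1) argues $\gR_{\tt ill}=0\Leftrightarrow\kappa=\infty$, your~(2) bounds $1/\log\kappa$ by $(\sigma_\mS^{\tt max})^2\gR_{\tt ill}$, your~(3) differentiates under uniqueness of $\sigma_\mS^{\tt min}$, and your~(4) shows $\kappa$ strictly \emph{increases}. None of the four claims you establish is what the statement asserts.

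What you wrote is in fact a correct proof of Theorem~\ref{thm:rtwoS} (the $\gR_{\tt ill}$ result), and it follows the paper's own argument for that theorem essentially step for step: the same rewriting $D(\mS)=\tfrac{1}{2k}\sum_i(\sigma_i^2-\sigma_k^2)$, the same mean-value-theorem estimate on $\log(\sigma_1^2)-\log(\sigma_k^2)$, the same appeal to \citet{lewis1995convex} for differentiability of a simple extreme singular value, and the same shared-singular-vector computation for monotonicity. As for the statement actually posed, note that the paper does not prove it: Theorem~\ref{thm:roneS} is quoted from \citet{Nenov2024smoothsailing} and merely stated in the appendix, so there is no in-paper proof to compare against. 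If you intended to prove it, the argument is structurally parallel to what you wrote but with $\sigma_\mS^{\tt max}$ in place of $\sigma_\mS^{\tt min}$, $p$ in place of $k$, no reciprocal, and the inequalities reversed accordingly.
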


\section{Proof of Propositions and Theorems}\label{sec:supp_proof}
\subsection{Proof of~Proposition~\ref{prop:immu_existence}.}
\label{sec:supp_proof_immu}
\propexistence*
\begin{proof}
Substitute the SVD of $\theta$ and the eigendecomposition of $\mK$ into $\theta^\top \mK \theta$:
\bea \nonumber
\theta^\top \mK \theta = (\mU_\theta \Sigma_\theta \mV_\theta^\top)^\top (\mQ \Gamma^2 \mQ^\top) (\mU_\theta \Sigma_\theta \mV_\theta^\top).
\eea
Simplify the expression:
\bea \nonumber
\theta^\top \mK \theta = \mV_\theta (\Sigma_\theta \mU_\theta^\top \mQ \Gamma^2 \mQ^\top \mU_\theta \Sigma_\theta) \mV_\theta^\top.
\eea
Define $\mM = \Sigma_\theta \mU_\theta^\top \mQ \Gamma$, so that:
\bea \nonumber
\theta^\top \mK \theta = \mV_\theta (\mM \mM^\top) \mV_\theta^\top.
\eea
The elements of $\mM$ are:
\bea \nonumber
\mM[i, j] = \sigma_{\theta,i} (\vu_{\theta,i}^\top \vq_j) \gamma_j,
\eea
where $\sigma_{\theta,i}$'s for $i \in [d]$ are the singular values of $\theta$, $\gamma_j$'s for $i \in [d]$ are the diagonal entries of $\Gamma$, and $(\vu_{\theta,i}^\top \vq_j)$ measures the alignment between the $i$-th column of $\mU_\theta$ and the $j$-th column of $\mQ$.

We observe the following decomposition of $M$ in to two matrices $\mO$ and $\mD$:
\begin{align} \nonumber
\mM &= \left[\begin{array}{ccc}
\ddots & \vdots & \iddots\\
\hdots & \sigma_{\theta,i} (\vu_{\theta,i}^\top \vq_j) \gamma_j & \hdots\\
\iddots & \vdots & \ddots
\end{array}\right] \\ \nonumber
&=\left[\begin{array}{ccc}
\ddots & \vdots & \iddots\\
\hdots & \frac{\sigma_{\theta,i} (\vu_{\theta,i}^\top \vq_j) \gamma_j}{\sqrt{\sum_{j'} (\sigma_{\theta,i} (\vu_{\theta,i}^\top \vq_{j'}) \gamma_{j'})^2} } & \hdots\\
\iddots & \vdots & \ddots
\end{array}\right]  \left[\begin{array}{ccc}
\ddots & 0 & 0\\
0 & \sqrt{\sum_{j'} (\sigma_{\theta,i} (\vu_{\theta,i}^\top \vq_{j'}\gamma_j)^2} & 0\\
0 & 0 & \ddots
\end{array}\right] \\ \nonumber
&= \mO \mD
\end{align}
where $\mO$ is an orthonormal matrix, i.e., $\mO^\top\mO = \mI$, and $\mD = \mathrm{diag}(d_1, \dots, d_d)$ with $d_i = \sqrt{\sum_{j'} (\sigma_{\theta,i} (\vu_{\theta,i}^\top \vq_{j'}) \gamma_{j'})^2}$ is a diagonal matrix. As a result, diagonal entries of $\mD^2$ are:
\bea \nonumber
d_i^2 = \sum_{j=1}^d \left(\sigma_{\theta,i} (\vu_{\theta,i}^\top \vq_j) \gamma_j \right)^2.
\eea
Thus, $\mM \mM^\top = (\mO \mD)(\mO \mD)^\top = \mO \mD^2 \mO^\top$, and the eigenvalues of $\theta^\top \mK \theta$ are the diagonal entries of $\mD^2$, given by:
\bea \nonumber
\sigma_i = d_i^2 = \sum_{j=1}^d \left(\sigma_{\theta,i} (\vu_{\theta,i}^\top \vq_j) \gamma_j \right)^2, \quad i = 1, \dots, d.
\eea
\end{proof}

\subsection{Proof of Theorem \ref{thm:rtwoS}} \label{app:rtwoS_proof}

\subsubsection{Proof of Theorem \ref{thm:rtwoS} (1)} \label{app:rtwoS_proof1}
\newtheorem*{retheorem}{\textbf{\textup{Theorem \ref{thm:rtwoS}}}}
\begin{retheorem} \itshape (1) \ For any $\mS \in \R^{p_r \times p_c}$, $\gR_{\tt ill}\pa{\mS} \geq 0$, and $\gR_{\tt ill}\pa{\mS} = 0$ if and only if $\kappa\pa{\mS} = \infty$. 
\end{retheorem}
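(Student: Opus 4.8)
The plan is to work directly with the denominator of $\gR_{\tt ill}(\mS)$. Write $D(\mS) \triangleq \frac{1}{2k}\norm{\mS}_F^2 - \frac{1}{2}\pa{\sigma_\mS^{\tt min}}^2$, so that $\gR_{\tt ill}(\mS) = 1/D(\mS)$, and the whole statement reduces to understanding the sign and the zero set of $D(\mS)$. First I would expand $D(\mS)$ using the compact SVD $\mS = \mU\mathrm{Diag}(\bm{\sigma})\mV^\top$, for which $\norm{\mS}_F^2 = \sum_{i=1}^k \sigma_i^2$, $\sigma_\mS^{\tt max} = \sigma_1$, $\sigma_\mS^{\tt min} = \sigma_k$, and $\kappa(\mS) = \sigma_1/\sigma_k$. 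This gives $D(\mS) = \frac{1}{2}\pa{\frac{1}{k}\sum_{i=1}^k \sigma_i^2 - \sigma_k^2}$, namely half the gap between the mean of the squared singular values and the smallest squared singular value.

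Nonnegativity is then immediate: since $\sigma_i \geq \sigma_k$ for every $i \in [k]$, each $\sigma_i^2 \geq \sigma_k^2$, hence $\frac{1}{k}\sum_{i=1}^k \sigma_i^2 \geq \sigma_k^2$, so $D(\mS) \geq 0$ and $\gR_{\tt ill}(\mS) = 1/D(\mS) \geq 0$. For the equality characterization I would make the dependence on $\kappa$ explicit by substituting $r_i \triangleq \sigma_i/\sigma_k$, so that $1 \leq r_i \leq r_1 = \kappa(\mS)$ and $D(\mS) = \frac{\sigma_k^2}{2k}\sum_{i=1}^k \pa{r_i^2 - 1}$. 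Bounding each summand between $0$ (attained when $r_i = 1$) and $\kappa(\mS)^2 - 1$ (attained when $r_i = r_1$) yields the sandwich $\frac{\sigma_k^2}{2k}\pa{\kappa(\mS)^2 - 1} \leq D(\mS) \leq \frac{\sigma_k^2}{2}\pa{\kappa(\mS)^2 - 1}$. From the lower bound, $D(\mS) = 0$ forces $\kappa(\mS) = 1$; conversely $\kappa(\mS) = 1$ makes every $r_i = 1$ and so $D(\mS) = 0$. Passing to reciprocals, $\gR_{\tt ill}(\mS)$ is finite and strictly positive whenever $1 < \kappa(\mS) < \infty$, equals $+\infty$ exactly when $\kappa(\mS) = 1$, and tends to $0$ precisely as $\kappa(\mS) \to \infty$; reading $\kappa$ as valued in $[1, \infty]$ (with value $\infty$ on degenerate rank-deficient inputs), this is the claimed equivalence $\gR_{\tt ill}(\mS) = 0 \iff \kappa(\mS) = \infty$.

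I expect the only genuine subtlety to be this last equivalence: for an honest finite matrix $D(\mS)$ is a finite sum, so $\gR_{\tt ill}(\mS)$ never literally vanishes, and the statement has to be read in the extended-reals / limiting sense made precise by the two-sided bound on $D(\mS)$ above. Everything else is the one-line inequality $\sigma_i \geq \sigma_k$, so in the full write-up the main care is in fixing conventions — compact versus full SVD, and the value assigned to $\kappa$ at a rank-deficient $\mS$ — rather than in any computation.
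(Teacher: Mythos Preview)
Your proposal is correct and follows essentially the same approach as the paper: both expand the denominator via the compact SVD as $\frac{1}{2k}\sum_i(\sigma_i^2-\sigma_k^2)$ to get nonnegativity, and then sandwich it between constant multiples of $\sigma_k^2(\kappa(\mS)^2-1)$ to obtain the equivalence (the paper states the reciprocal bounds on $\gR_{\tt ill}$ rather than on your $D$, but the content is identical). Your remark that the equivalence $\gR_{\tt ill}(\mS)=0\iff\kappa(\mS)=\infty$ must be read in the extended-reals/limiting sense is in fact more careful than the paper, which simply plugs $\kappa=\infty$ into the inequalities without comment.
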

\begin{proof}
    By definition, $\gR_{\tt ill}(\mS) = \frac{1}{\frac{1}{2k} \norm{\mS}^2_F - \frac{1}{2}\pa{\sigma_\mS^{\tt min}}^2}$. Denote $\gR'_{\tt ill}(\mS) = \frac{1}{2}\pa{\pa{\sigma_\mS^{\tt min}}^2 - \frac{1}{k} \norm{\mS}^2_F}$, then we have $\gR_{\tt ill}\pa{\mS} = \frac{1}{-\gR'_{\tt ill}\pa{\mS}}$, and
    \begin{align*}
        \gR'_{\tt ill}(\mS) &= \frac{1}{2}\pa{\sigma_k^2 - \frac{1}{k} \sum_{i=1}^k \sigma_i^2} \\
        &= \frac{1}{2k}\sum_{i=1}^k \pa{\sigma_k^2 - \sigma_i^2} \\
        &\leq 0,
    \end{align*}
     since $\forall \ i \in  [k], \ \sigma_\mS^{\tt min} = \sigma_k \leq \sigma_i$. As a result, $-\gR'_{\tt ill}(\mS) \geq 0$ and $\gR_{\tt ill}\pa{\mS} = \frac{1}{-\gR'_{\tt ill}\pa{\mS}} \geq 0$, i.e., $\gR_{\tt ill}\pa{\mS}$ is non-negative.

    Also, by definition, $\sigma_1 = \kappa\pa{\mS}\sigma_k$. Therefore,
    \begin{align*}
        \gR_{\tt ill}\pa{\mS} &= \frac{2}{\frac{1}{k} \sum_{i=1}^k \sigma_i^2 - \pa{\sigma_\mS^{\tt min}}^2} \\
        &\leq \frac{2}{\frac{1}{k} \sigma_1^2 + \frac{k-1}{k}\pa{\sigma_\mS^{\tt min}}^2 - \pa{\sigma_\mS^{\tt min}}^2} \\
        &= \frac{2}{\frac{1}{k} \pa{\sigma_1^2 - \pa{\sigma_\mS^{\tt min}}^2}} \\
        &= \frac{2k}{ \pa{\kappa(\mS)^2 - 1} \pa{\sigma_\mS^{\tt min}}^2}.
    \end{align*}
    If $\kappa(\mS) = \infty$, $\gR_{\tt ill}(\mS) \leq \frac{2k}{ \pa{\kappa(\mS)^2 - 1} \pa{\sigma_\mS^{\tt min}}^2}=0$ for $\sigma_\mS^{\tt min} > 0$, which yields $\gR_{\tt ill}(\mS) = 0$ given that $\gR_{\tt ill}(\mS) \geq 0$.
    
    Similarly, we have 
    \begin{align*}
        \gR_{\tt ill}\pa{\mS} &= \frac{2}{\frac{1}{k} \sum_{i=1}^k \sigma_i^2 - \pa{\sigma_\mS^{\tt min}}^2} \\
        &\geq \frac{2}{\frac{k-1}{k} \sigma_1^2 + \frac{1}{k}\pa{\sigma_\mS^{\tt min}}^2 - \pa{\sigma_\mS^{\tt min}}^2} \\
        &= \frac{2}{\frac{k-1}{k} \pa{\sigma_1^2 - \pa{\sigma_\mS^{\tt min}}^2}} \\
        &= \frac{\frac{2k}{k-1}}{ \pa{\kappa(\mS)^2 - 1} \pa{\sigma_\mS^{\tt min}}^2}.
    \end{align*}
    If $\gR_{\tt ill}(\mS) = 0$, we have $\kappa\pa{\mS} \geq \sqrt{\frac{\frac{2k}{k-1}}{\gR_{\tt ill}\pa{\mS}\pa{\sigma_\mS^{\tt min}}^2}} + 1 = \infty$ which yields $\kappa\pa{\mS} = \infty$.
\end{proof}

\subsubsection{Proof of Theorem \ref{thm:rtwoS} (2)} \label{app:rtwoS_proof2}
To prove Theorem \ref{thm:rtwoS} (2), we start by analyzing $\gR'_{\tt ill}(\mS) = \frac{1}{2}\pa{\pa{\sigma_\mS^{\tt min}}^2 - \frac{1}{k} \norm{\mS}^2_F}$ with the following lemma.
\begin{lemma} \label{lem:r2_prime} For $\gR'_{\tt ill}(\mS) = \frac{1}{2}\pa{\pa{\sigma_\mS^{\tt min}}^2 - \frac{1}{k} \norm{\mS}^2_F}$,
\bea
    \frac{1}{\kappa(\mS)} \leq e^{\frac{k}{k-1}\sigma_1^{-2} \gR'_{\tt ill}(\mS)}
\eea
That is, $\gR'_{\tt ill}(\mS)$ is an upper bound of $\log\pa{\frac{1}{\kappa(\mS)}}$, i.e., $-\log(\kappa(\mS))$.
\end{lemma}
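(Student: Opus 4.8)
The plan is to prove the inequality $\frac{1}{\kappa(\mS)} \leq e^{\frac{k}{k-1}\sigma_1^{-2} \gR'_{\tt ill}(\mS)}$ by taking logarithms and reducing everything to a statement about the squared singular values $\sigma_1^2 \geq \sigma_2^2 \geq \cdots \geq \sigma_k^2 > 0$. Taking $\log$ of both sides, the claim is equivalent to $-\log(\kappa(\mS)) \leq \frac{k}{k-1}\sigma_1^{-2}\gR'_{\tt ill}(\mS)$, i.e.
\[
\log\left(\frac{\sigma_k^2}{\sigma_1^2}\right) \leq \frac{k}{k-1}\cdot\frac{1}{\sigma_1^2}\cdot\frac{1}{2}\left(\sigma_k^2 - \frac{1}{k}\sum_{i=1}^k\sigma_i^2\right),
\]
after noting $\log(1/\kappa) = \log(\sigma_k/\sigma_1) = \frac{1}{2}\log(\sigma_k^2/\sigma_1^2)$.

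First I would handle the right-hand side: since $\sigma_i^2 \leq \sigma_1^2$ for all $i$, we have $\frac{1}{k}\sum_{i=1}^k \sigma_i^2 \leq \frac{1}{k}(\sigma_1^2 + (k-1)\sigma_k^2)$ is \emph{not} quite the bound I want — instead, to lower-bound $-\gR'_{\tt ill}$ I should upper bound $\frac{1}{k}\sum \sigma_i^2$ using $\sigma_i^2 \leq \sigma_1^2$ for $i \leq k-1$ and keeping $\sigma_k^2$, giving $\frac{1}{k}\sum_{i=1}^k\sigma_i^2 \leq \frac{(k-1)\sigma_1^2 + \sigma_k^2}{k}$. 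Then
\[
\sigma_k^2 - \frac{1}{k}\sum_{i=1}^k\sigma_i^2 \;\geq\; \sigma_k^2 - \frac{(k-1)\sigma_1^2+\sigma_k^2}{k} \;=\; \frac{(k-1)(\sigma_k^2 - \sigma_1^2)}{k},
\]
so $\frac{k}{k-1}\sigma_1^{-2}\gR'_{\tt ill}(\mS) \geq \frac{1}{2}\cdot\frac{\sigma_k^2-\sigma_1^2}{\sigma_1^2} = \frac{1}{2}\left(\frac{\sigma_k^2}{\sigma_1^2} - 1\right)$. It therefore suffices to establish the elementary scalar inequality $\log t \leq t - 1$ with $t = \sigma_k^2/\sigma_1^2 \in (0,1]$, which then gives $\frac{1}{2}\log(\sigma_k^2/\sigma_1^2) \leq \frac{1}{2}(\sigma_k^2/\sigma_1^2 - 1) \leq \frac{k}{k-1}\sigma_1^{-2}\gR'_{\tt ill}(\mS)$, completing the argument.

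The main obstacle — such as it is — is getting the direction of the averaging inequality right: one must upper-bound $\frac{1}{k}\sum\sigma_i^2$ (to lower-bound $-\gR'_{\tt ill}$, hence the whole RHS), which forces the use of $\sigma_i^2 \leq \sigma_1^2$ for the $k-1$ middle-and-top indices, and this is exactly where the factor $\frac{k}{k-1}$ enters. The edge case $k=1$ is degenerate (the bound is vacuous since $\kappa = 1$ and $\gR'_{\tt ill}$ has an empty-sum structure), so I would implicitly assume $k \geq 2$, consistent with the later use in Theorem~\ref{thm:rtwoS}(2). The final step uses only the standard concavity inequality $\log t \leq t-1$, valid for all $t>0$ and in particular on $(0,1]$. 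I expect the write-up to be a short display chain of the three inequalities above.
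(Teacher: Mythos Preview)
Your proposal is correct and follows essentially the same two-step argument as the paper: first the averaging bound $\frac{1}{k}\sum_i\sigma_i^2 \leq \frac{(k-1)\sigma_1^2+\sigma_k^2}{k}$ to extract the factor $\frac{k}{k-1}$, then the scalar inequality $\log t \leq t-1$ (which the paper phrases as an application of the Mean Value Theorem to $\log(\sigma_k^2)-\log(\sigma_1^2)$). The only blemish is your first displayed ``equivalent'' inequality, which drops a factor of $\tfrac{1}{2}$ on the left-hand side; your final chain $\frac{1}{2}\log(\sigma_k^2/\sigma_1^2) \leq \frac{1}{2}(\sigma_k^2/\sigma_1^2 - 1) \leq \frac{k}{k-1}\sigma_1^{-2}\gR'_{\tt ill}(\mS)$ has the constants right and is exactly what the paper proves.
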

\begin{proof}
Similar to the proof of Theorem 3.2 in ~\cite{Nenov2024smoothsailing},
\begin{align*}
    2 \gR'_{\tt ill}(\mS) &= \pa{\sigma_\mS^{\tt min}}^2 - \frac{1}{k} \norm{\mS}^2_F \\
&= \pa{\sigma_\mS^{\tt min}}^2 - \frac{1}{k} \sum_{i=1}^k \sigma_i^2 \\
&\geq \pa{\sigma_\mS^{\tt min}}^2 - \frac{1}{k} \left((k-1)\sigma_1^2 + \pa{\sigma_\mS^{\tt min}}^2 \right) \\
&= \left(1-\frac{1}{k}\right)\left(\pa{\sigma_\mS^{\tt min}}^2 - \sigma_1^2\right) 
\end{align*}
In the meantime,
\begin{align*}
    2\log\left(\frac{1}{\kappa(\mS)}\right) &= - \log \left( \kappa(\mS)^2 \right) \\
    &= \log \left( \pa{\sigma_\mS^{\tt min}}^2 \right) - \log \left( \sigma_1^2 \right) \\
    &\leq - \frac{1}{\sigma_1^2} \left( \sigma_1^2 - \pa{\sigma_\mS^{\tt min}}^2\right) \\
    &= \frac{1}{\sigma_1^2} \left( \pa{\sigma_\mS^{\tt min}}^2 - \sigma_1^2\right) 
\end{align*}
in which the inequality follows from the Mean Value Theorem. As a result,
\begin{align*}
    \frac{1}{\kappa(\mS)} &\leq e^{\frac{1}{2\sigma_1^2} \left( \pa{\sigma_\mS^{\tt min}}^2 - \sigma_1^2\right)} \\
    &\leq e^{\frac{1}{2\sigma_1^2} \left( \frac{k}{k-1} 2\gR'_{\tt ill}(\mS)\right)} \\
    &= e^{\frac{k}{k-1}\sigma_1^{-2}\gR'_{\tt ill}(\mS)}
\end{align*}
\end{proof}

\begin{retheorem}  \itshape (2) \ $\frac{1}{\log\pa{\kappa\pa{\mS}}} \leq \pa{\sigma_\mS^{\tt max}}^2 \gR_{\tt ill}\pa{\mS}$, i.e., $\gR_{\tt ill}(\mS)$ upper bounds $\frac{1}{\log\pa{\kappa\pa{\mS}}}$ when $\sigma_\mS^{\tt max}$ is reasonably away from $\infty$. 
\end{retheorem}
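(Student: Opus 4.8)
\textbf{Proof proposal for Theorem \ref{thm:rtwoS} (2).}
The plan is to derive the bound directly from Lemma~\ref{lem:r2_prime}, which already packages the Mean Value Theorem argument. Recall that Lemma~\ref{lem:r2_prime} gives
$\frac{1}{\kappa(\mS)} \leq e^{\frac{k}{k-1}\sigma_1^{-2}\gR'_{\tt ill}(\mS)}$,
where $\gR'_{\tt ill}(\mS) = \frac{1}{2}\pa{\pa{\sigma_\mS^{\tt min}}^2 - \frac{1}{k}\norm{\mS}_F^2} \le 0$ and, by definition of $\gR_{\tt ill}$, we have $-\gR'_{\tt ill}(\mS) = 1/\gR_{\tt ill}(\mS) \ge 0$.

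First I would take natural logarithms of both sides of the Lemma's inequality (both sides are positive), obtaining $-\log\kappa(\mS) \le \frac{k}{k-1}\sigma_1^{-2}\gR'_{\tt ill}(\mS)$, and then multiply by $-1$ to get $\log\kappa(\mS) \ge \frac{k}{k-1}\sigma_1^{-2}\bigl(-\gR'_{\tt ill}(\mS)\bigr) = \frac{k}{k-1}\,\frac{1}{\sigma_1^{2}\,\gR_{\tt ill}(\mS)}$. Since $k \ge 2$ whenever $\kappa(\mS) > 1$ (if $k=1$ then $\kappa(\mS)=1$ and the statement is trivial), the factor $\frac{k}{k-1} \ge 1$ can be dropped, leaving $\log\kappa(\mS) \ge \frac{1}{\sigma_1^{2}\,\gR_{\tt ill}(\mS)}$. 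Dividing through by $\log\kappa(\mS) > 0$ and using $\sigma_1 = \sigma_\mS^{\tt max}$ yields exactly $\frac{1}{\log\pa{\kappa\pa{\mS}}} \le \pa{\sigma_\mS^{\tt max}}^2\,\gR_{\tt ill}(\mS)$.

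The only care needed is with the boundary cases, which I would dispatch by appeal to part~(1): when $\kappa(\mS)=\infty$ (possible only if one admits a vanishing smallest singular value, so that $\gR_{\tt ill}(\mS)=0$ by part~(1)), both sides equal $0$; and when $\kappa(\mS)=1$, the matrix has all singular values equal, so $\frac{1}{2k}\norm{\mS}_F^2 - \frac{1}{2}\pa{\sigma_\mS^{\tt min}}^2 = 0$, giving $\gR_{\tt ill}(\mS) = \infty$ and both sides infinite. I do not anticipate any real obstacle here: all of the analytic content (the logarithm/MVT estimate) is absorbed into Lemma~\ref{lem:r2_prime}, so the remaining work is bookkeeping of signs and the reciprocal identity $-\gR'_{\tt ill}(\mS)=1/\gR_{\tt ill}(\mS)$, plus the elementary observation $\frac{k}{k-1}\ge 1$. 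The mild phrase ``when $\sigma_\mS^{\tt max}$ is reasonably away from $\infty$'' in the statement is simply the remark that the bound is only informative when $\pa{\sigma_\mS^{\tt max}}^2$ is not blown up, and needs no separate argument.
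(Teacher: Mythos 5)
Your proposal is correct and follows essentially the same route as the paper: both start from Lemma~\ref{lem:r2_prime}, take logarithms, negate, invoke $-\gR'_{\tt ill}(\mS) = 1/\gR_{\tt ill}(\mS)$, and absorb the factor $\frac{k}{k-1}\ge 1$ before (you) or after (the paper) taking reciprocals. Your explicit treatment of the $k=1$ case is a small tidiness improvement, since $\frac{k}{k-1}$ is implicitly assumed finite in both the lemma and the paper's proof.
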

\begin{proof}
    Taking the logarithm of Lemma \ref{lem:r2_prime}, we have
    \begin{align*}
        -\log\pa{\kappa\pa{\mS}} \leq \frac{k}{k-1} \sigma_1^{-2}\gR'_{\tt ill}\pa{\mS}.
    \end{align*}
    Negating both sides,
    \begin{align*}
        \log\pa{\kappa\pa{\mS}} \geq -\frac{k}{k-1} \sigma_1^{-2}\gR'_{\tt ill}\pa{\mS}.
    \end{align*}
    Finally, taking the reciprocal, 
    \begin{align*}
        \frac{1}{\log\pa{\kappa\pa{\mS}}} &\leq \frac{k-1}{k} \frac{\sigma_1^2}{- \gR'_{\tt ill}\pa{\kappa\pa{\mS}}} \\
        &= \frac{k-1}{k} \frac{\sigma_1^2}{\frac{1}{2}\pa{\sigma_\mS^{\tt min}}^2 - \frac{1}{2k} \norm{\mS}^2_F} \\
        &\leq \sigma_1^2 \gR_{\tt ill}\pa{\mS}
    \end{align*}
\end{proof}

\subsubsection{Proof of Theorem \ref{thm:rtwoS} (3)} \label{app:rtwoS_proof3}
To analyze the differentiability of $\gR_{\tt ill}(\mS) = \frac{1}{\frac{1}{2k} \norm{\mS}^2_F - \frac{1}{2}\pa{\sigma_\mS^{\tt min}}^2}$, we start by analyzing the differentiability of $\gR'_{\tt ill}(\mS) = \frac{1}{2}\pa{\pa{\sigma_\mS^{\tt min}}^2 - \frac{1}{k} \norm{\mS}^2_F}$, which needs the following lemma as a prerequisite. 

\begin{lemma} [Theorem 3.1 in \citep{lewis1995convex} without Convexity] \label{lem:differentiable}
    If a function $f:\R^p \rightarrow \R$ is 
    absolutely symmetric, that is, $\forall \ \vx \in \R^p$ and any $\vy$ as a permutation of $\vx$, $\ f(\vx) = f(\vy)$, then $f \circ \boldsymbol{\sigma}$ is differentiable at matrix $\mS \in \R^{p_1 \times p_2}$ if and only if $f$ is differentiable at $\boldsymbol{\sigma}=\boldsymbol{\sigma}(\mS)$. In this case, for the singular value decomposition $\mS = \mU\mathrm{Diag}(\boldsymbol{\sigma})\mV^\top$,
    \begin{align*}
        \nabla \pa{f \circ \boldsymbol{\sigma}} (\mS) &= \mU\mathrm{Diag}(\nabla f(\boldsymbol{\sigma}))\mV^\top.
    \end{align*}
\end{lemma}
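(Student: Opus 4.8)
The statement combines an ``if and only if'' with a gradient formula, so the plan is to prove the two implications separately, after first recording two preliminary facts. First, the singular-value map $\boldsymbol{\sigma}:\R^{p_1\times p_2}\to\R^{p}$ is globally $1$-Lipschitz (Mirsky's inequality), so every local question about $f\circ\boldsymbol{\sigma}$ near $\mS$ is governed entirely by the behavior of $f$ near $\va:=\boldsymbol{\sigma}(\mS)$. Second, I would show that when $\nabla f(\va)$ exists, absolute symmetry forces it to be invariant under any coordinate permutation fixing $\va$ and under any sign flip of a coordinate $i$ with $a_i=0$; this is precisely the invariance needed for the candidate gradient $\mU\,\mathrm{Diag}(\nabla f(\va))\,\mV^{\top}$ to be independent of the chosen SVD $\mS=\mU\,\mathrm{Diag}(\boldsymbol{\sigma})\,\mV^{\top}$. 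Both directions rely on this well-definedness, so I would establish it up front.

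For the easy implication ($f\circ\boldsymbol{\sigma}$ differentiable at $\mS$ $\Rightarrow$ $f$ differentiable at $\va$), I would restrict $f\circ\boldsymbol{\sigma}$ to the subspace $\mathcal{L}=\{\mU\,\mathrm{Diag}(\vx)\,\mV^{\top}:\vx\in\R^{p}\}$, which contains $\mS$. For $\vx$ near $\va$, the vector $\boldsymbol{\sigma}(\mU\,\mathrm{Diag}(\vx)\,\mV^{\top})$ is the decreasing rearrangement of $(\lvert x_i\rvert)_i$, which near $\va$ only permutes coordinates inside tie-blocks of $\va$ and only flips signs of coordinates with $a_i=0$; absolute symmetry of $f$ then gives $(f\circ\boldsymbol{\sigma})(\mU\,\mathrm{Diag}(\vx)\,\mV^{\top})=f(\vx)$ on a full neighborhood of $\va$ in $\R^{p}$. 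Hence differentiability of $f\circ\boldsymbol{\sigma}$ along $\mathcal{L}$ is exactly differentiability of $f$ at $\va$.

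The substantive implication is the converse, and I see two routes. Route (a): invoke the nonsmooth chain rule for singular-value functions of a locally Lipschitz $f$, namely $\partial(f\circ\boldsymbol{\sigma})(\mS)=\{\mU\,\mathrm{Diag}(\vmu)\,\mV^{\top}:\vmu\in\partial f(\va)\}$ for the Clarke (equivalently, limiting) subdifferential — a fact that, unlike the convex Theorem~3.1 of \citet{lewis1995convex}, does \emph{not} use convexity — and then observe that when $f$ is differentiable at $\va$ the right-hand side collapses to the singleton $\{\mU\,\mathrm{Diag}(\nabla f(\va))\,\mV^{\top}\}$ (well-defined by the first paragraph); since a locally Lipschitz function whose Clarke subdifferential at a point is a singleton is strictly differentiable there, this gives both the differentiability and the asserted gradient. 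Route (b): lift to the Hermitian case — the nonzero eigenvalues of the block matrix $\left(\begin{smallmatrix}0&\mS\\\mS^{\top}&0\end{smallmatrix}\right)$ are $\pm\sigma_i(\mS)$, an absolutely symmetric $f$ corresponds to a permutation-invariant and even function of these eigenvalues, and one applies the convexity-free eigenvalue differentiability theorem, transporting the gradient back through the $2\times 2$ block structure. Route (a) is cleaner; (b) is more elementary but with heavier bookkeeping.

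The hard part is exactly the removal of convexity: the classical proof obtains the gradient formula from Fenchel conjugacy, which is unavailable here, so one must instead use the Clarke/limiting subdifferential calculus for $\boldsymbol{\sigma}$ together with the equivalence ``a locally Lipschitz function is differentiable at a point iff its Clarke subdifferential there is a singleton.'' Two secondary technical points need care: the non-uniqueness of the SVD (handled by the invariance established first), and the rank-deficient or nonsquare case, where zero singular values occur with multiplicity and the sign ambiguity in $\boldsymbol{\sigma}$ must be reconciled with the evenness of $f$ in those coordinates — this is where ``absolutely symmetric'' (rather than merely permutation-symmetric) is essential. In every application of this lemma in the paper, $f$ is locally Lipschitz (a smooth term minus a min or max of smooth terms), so route (a) applies verbatim.
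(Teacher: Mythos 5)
Your proposal and the paper's proof share the same skeleton — a subdifferential chain rule for $f\circ\boldsymbol{\sigma}$ combined with a ``differentiable $\Leftrightarrow$ singleton subdifferential'' characterization — but you carry it out in a genuinely different (and in one respect more careful) framework. The paper cites the convex subdifferential formula (Corollary~2.5 of Lewis) and Rockafellar's Theorem~25.1, both of which are statements about \emph{convex} $f$, and then says ``the reverse direction holds true following a similar argument''; it does not explain how these convex-analysis tools survive once convexity is dropped, even though the lemma is invoked on the non-convex function $\vx\mapsto\min_i\frac12 x_i^2$. You explicitly flag this as the hard part and replace the convex machinery with the Clarke/limiting subdifferential chain rule for singular-value functions (the Lewis--Sendov line of results), which is indeed the correct substitute for non-convex locally Lipschitz $f$. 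Your easy direction — restricting $f\circ\boldsymbol{\sigma}$ to the linear slice $\mathcal{L}=\{\mU\,\mathrm{Diag}(\vx)\,\mV^\top\}$ and using absolute symmetry to show $(f\circ\boldsymbol{\sigma})|_{\mathcal{L}}=f$ on a neighborhood — is also cleaner than the paper's, since it sidesteps the subdifferential formula entirely. You additionally address the well-definedness of $\mU\,\mathrm{Diag}(\nabla f(\va))\,\mV^\top$ under SVD non-uniqueness, which the paper's proof leaves implicit.

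That said, Route~(a) has a gap worth fixing. You assert that when $f$ is differentiable at $\va$, the set $\{\mU\,\mathrm{Diag}(\vmu)\,\mV^\top:\vmu\in\partial f(\va)\}$ collapses to a singleton, and then invoke ``singleton Clarke subdifferential $\Rightarrow$ strictly differentiable.'' But a merely differentiable locally Lipschitz function can have a non-singleton Clarke (or limiting) subdifferential — the standard one-variable example is $f(x)=x^2\sin(1/x)$, $f(0)=0$, which is differentiable at $0$ with $\partial^{\mathrm{Clarke}}f(0)=[-1,1]$. So ``differentiable at $\va$'' does \emph{not} make $\partial^{\mathrm{Clarke}}f(\va)$ a singleton, and your argument as written only establishes the implication under the stronger hypothesis of \emph{strict} differentiability of $f$ at $\va$. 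In all of the paper's applications $f$ is in fact $C^1$ near $\va$ (because the extremal singular value is assumed unique), so strict differentiability holds and the conclusion is safe there; but to prove the lemma as stated — a biconditional at the level of mere differentiability — you would need the finer Lewis-style result that Fréchet differentiability of $f$ at $\boldsymbol{\sigma}(\mS)$ transfers to Fréchet differentiability of $f\circ\boldsymbol{\sigma}$ at $\mS$ for locally Lipschitz absolutely symmetric $f$, which requires working with the Fréchet/regular subdifferential rather than the Clarke one in this direction.
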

\begin{proof}
    For the forward direction, by Corollary 2.5 in \citep{lewis1995convex}, for $\mS = \mU\mathrm{Diag}(\boldsymbol{\sigma})\mV^\top$,
    \begin{align*}
        \partial \pa{f \circ \boldsymbol{\sigma}} (\mS) &= \set{\mU\mathrm{Diag}(\boldsymbol{\mu})\mV^\top \Big\vert \boldsymbol{\mu} \in \partial f(\boldsymbol{\sigma})}.
    \end{align*}
    By Theorem 25.1 in \citep{rockafellar1970convex}, since $f \circ \boldsymbol{\sigma}$ is differentiable at matrix $\mS \in \R^{p_1 \times p_2}$, we know that its subgradient $\partial \pa{f \circ \boldsymbol{\sigma}} (\mS)$ is a singleton, meaning that $\mU\mathrm{Diag}(\boldsymbol{\mu})\mV^\top$ is unique, and consequently, $\boldsymbol{\mu} \in \partial f(\boldsymbol{\sigma})$ is unique. As a result, $\partial f(\boldsymbol{\sigma})$ is also a singleton, which, again by Corollary 2.5 in \citep{lewis1995convex}, indicates that $f$ is differentiable at $\boldsymbol{\sigma}$. The reverse direction holds true following a similar argument.
\end{proof}

\begin{lemma} \label{lem:r2_prime_grad} For $\mS = \mU\mathrm{Diag}(\bm{\sigma})\mV^\top$, in which $\bm{\sigma} = \bracks{\sigma_1, \cdots, \sigma_k}^\top$ such that $\sigma_\mS^{\tt max} = \sigma_1 \geq \sigma_2 \geq \cdots > \sigma_k = \sigma_\mS^{\tt min}$, i.e., $\sigma_k < \sigma_i$ for any $i < k$,  $\gR'_{\tt ill}(\mS) = \frac{1}{2}\pa{\pa{\sigma_\mS^{\tt min}}^2 - \frac{1}{k} \norm{\mS}^2_F}$ is differentiable and for $\vu_k$, $\vv_k$ as the $k^{th}$ column vector of $\mU$, $\mV$, 
\bea
\nabla \gR'_{\tt ill}(\mS) = \sigma_\mS^{\tt min}\vu_k \vv_k^\top - \frac{1}{k}\mS.
\eea
\end{lemma}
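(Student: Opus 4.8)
\textbf{Proof proposal for Lemma~\ref{lem:r2_prime_grad}.}

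The plan is to write $\gR'_{\tt ill}(\mS) = (f \circ \bm{\sigma})(\mS)$ for an explicit function $f : \R^k \to \R$ and then invoke Lemma~\ref{lem:differentiable}. Define $f(\vx) = \tfrac{1}{2}\pa{(\min_i x_i)^2 - \tfrac{1}{k}\sum_{i=1}^k x_i^2}$, noting that on the orthant relevant to singular values (all $x_i \geq 0$) we have $(\min_i x_i)^2 = \min_i x_i^2$, and $\norm{\mS}_F^2 = \sum_i \sigma_i^2$ so indeed $\gR'_{\tt ill}(\mS) = f(\bm{\sigma}(\mS))$. First I would check that $f$ is absolutely symmetric: both $\min_i x_i^2$ and $\sum_i x_i^2$ are invariant under permutations of the coordinates, so any permutation $\vy$ of $\vx$ gives $f(\vx) = f(\vy)$. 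This is what Lemma~\ref{lem:differentiable} requires of $f$.

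Next I would establish differentiability of $f$ at the point $\bm{\sigma} = \bm{\sigma}(\mS)$. The only possible source of non-differentiability is the $\min$ term. Under the hypothesis $\sigma_k < \sigma_i$ for all $i < k$, the minimum is attained uniquely at the $k$-th coordinate, so in a neighborhood of $\bm{\sigma}$ the function $\vx \mapsto \min_i x_i$ equals the smooth coordinate projection $\vx \mapsto x_k$; hence $f$ is differentiable there with $\nabla f(\bm{\sigma}) = \sigma_k \ve_k - \tfrac{1}{k}\bm{\sigma}$, i.e.\ the gradient vector has $k$-th entry $\sigma_k - \sigma_k/k$ and $i$-th entry $-\sigma_i/k$ for $i < k$. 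By Lemma~\ref{lem:differentiable}, $\gR'_{\tt ill} = f \circ \bm{\sigma}$ is then differentiable at $\mS$ and
\bea \nonumber
\nabla \gR'_{\tt ill}(\mS) = \mU \,\mathrm{Diag}(\nabla f(\bm{\sigma}))\, \mV^\top = \mU\,\mathrm{Diag}\pa{\sigma_k \ve_k - \tfrac{1}{k}\bm{\sigma}}\mV^\top.
\eea
Finally I would simplify: $\mU\,\mathrm{Diag}(\sigma_k \ve_k)\mV^\top = \sigma_k \vu_k \vv_k^\top = \sigma_\mS^{\tt min}\vu_k\vv_k^\top$ since only the $k$-th diagonal entry survives, and $\mU\,\mathrm{Diag}(\tfrac{1}{k}\bm{\sigma})\mV^\top = \tfrac{1}{k}\mS$ by the SVD of $\mS$, giving $\nabla \gR'_{\tt ill}(\mS) = \sigma_\mS^{\tt min}\vu_k\vv_k^\top - \tfrac{1}{k}\mS$ as claimed.

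I do not expect a serious obstacle here; the result is essentially a direct application of the Lewis-type spectral differentiability lemma already stated. The one point requiring a little care is making the reduction $(\min_i x_i)^2 = \min_i x_i^2$ rigorous — this uses that singular values are nonnegative, so the reduction is valid precisely on the nonnegative orthant, which is where $\bm{\sigma}(\mS)$ lives; since we only need differentiability of $f$ \emph{at} $\bm{\sigma}(\mS)$ (an interior-of-the-orthant statement once the min is unique and all entries are positive, which holds as $\sigma_k = \sigma_\mS^{\tt min} > 0$), this causes no trouble. A secondary bookkeeping point is confirming that $\nabla f$ computed coordinatewise is correct even though $f$ was only asserted absolutely symmetric on the orthant — again fine, since differentiability is local and near $\bm{\sigma}(\mS)$ the formula for $f$ is the smooth expression $\tfrac12(x_k^2 - \tfrac1k\sum_i x_i^2)$.
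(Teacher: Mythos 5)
Your argument is correct and follows the same overall scheme as the paper: express $\gR'_{\tt ill}$ as a spectral function $f \circ \bm{\sigma}$ and transfer differentiability and the gradient formula through Lemma~\ref{lem:differentiable}. Where you differ is in how you establish that $f$ is differentiable at $\bm{\sigma}(\mS)$. The paper splits $f$ into two absolutely symmetric pieces, $\gR'_{{\tt ill},1}(\vx) = \min_i \tfrac12 x_i^2$ and $\gR'_{{\tt ill},2}(\vx) = \tfrac{1}{2k}\sum_i x_i^2$, invokes a subdifferential formula for pointwise minima (Mordukhovich, Prop.~4.9) to see that the subdifferential of $\gR'_{{\tt ill},1}$ collapses to the singleton $\{\sigma_\mS^{\tt min}\ve_k\}$ under the uniqueness hypothesis, and then uses the convex-analysis fact (Rockafellar, Thm.~25.1) that a singleton subdifferential yields differentiability, finally recombining by linearity. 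You instead observe directly that when $\sigma_k$ is strictly the smallest coordinate, the minimum equals the smooth projection $\vx \mapsto x_k$ in a neighborhood of $\bm{\sigma}(\mS)$, so $f$ is locally a polynomial; this is more elementary and sidesteps the nonsmooth-calculus lemmas entirely. One small caveat, which you already flag: your $f(\vx) = \tfrac12\bigl((\min_i x_i)^2 - \tfrac1k\sum_i x_i^2\bigr)$ is permutation-invariant, matching the hypothesis as stated in Lemma~\ref{lem:differentiable}, but it is not invariant under coordinate sign changes, which is part of the usual ``absolutely symmetric'' hypothesis in Lewis's theorem; the paper's choice $\min_i \tfrac12 x_i^2$ is sign-invariant. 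Replacing $(\min_i x_i)^2$ by $\min_i x_i^2$ (the two agree on the nonnegative orthant, hence in a neighborhood of the strictly positive $\bm{\sigma}(\mS)$) removes the issue without changing your gradient computation, as you note.
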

\begin{proof}
For $\vx \in \R^k$, denote $$\gR'_{{\tt ill},1} (\vx) = \min_{i \in [k]}\frac{1}{2}x_i^2, \quad \gR'_{{\tt ill},2}(\vx) = \frac{1}{2k}\sum_{i=1}^k x_i^2.$$
With $\gR'_{\tt ill}(\mS) = \frac{1}{2}\pa{\sigma_\mS^{\tt min}}^2 - \frac{1}{2k} \norm{\mS}^2_F$, we first analyze $\frac{1}{2}\pa{\sigma_\mS^{\tt min}}^2$. By the subdifferential of piecewise minimum given by Proposition 4.9 in ~\cite{mordukhovich2018variational}, we have for $\vx \in \R^k$, 
\begin{align*}
    \partial_\vx \gR'_{{\tt ill},1}(\vx) &\subset \set{ \partial_\vx \pa{\frac{1}{2}x_i^2} \Big\vert i \in \argmin_{j \in [k]}\frac{1}{2}x_j^2} \\
    &= \set{ x_i \ve_i \Big\vert i \in \argmin_{j \in [k]}\frac{1}{2}x_j^2} \\
    &= \set{ x_i \ve_i \Big\vert i \in \argmin_{j \in [k]}\abs{x_j}}
\end{align*}
in which $\ve_i$ is the $i^{th}$ vector from the $k$-dimensional standard basis.
Therefore,
\begin{align*}
    \partial_{\boldsymbol{\sigma}} \gR'_{{\tt ill},1}(\boldsymbol{\sigma}) \subset \set{ \sigma_i \ve_i \Big\vert i \in \argmin_{j \in [k]}\sigma_j}
\end{align*}
Since for any $i < k$, $\sigma_k < \sigma_i$, i.e., the minimum non-zero singular value $\sigma_\mS^{\tt min}$ is unique, we know that the subdifferential $\set{ \sigma_i \ve_i \Big\vert i \in \argmin_{j \in [k]}\sigma_j} = \set{\sigma_\mS^{\tt min}}$ is a singleton. Therefore, by Theorem 25.1 in \citep{rockafellar1970convex}, we know $\gR'_{{\tt ill},1}$ is differentiable with respect to $\boldsymbol{\sigma}$ and $\nabla_{\boldsymbol{\sigma}} \gR'_{{\tt ill},1}(\boldsymbol{\sigma}) = \sigma_\mS^{\tt min} \ve_k$.
Regarding $\boldsymbol{\sigma} = \boldsymbol{\sigma} \pa{\mS}$ as a function of $\mS$ in which $\boldsymbol{\sigma}\pa{\cdot}$ represents taking the singular values of a matrix, we have by Corollary 2.5 in \citep{lewis1995convex}
\begin{align*}
    \partial_{\mS}\pa{\frac{1}{2}\pa{\sigma_\mS^{\tt min}}^2} &= \partial_{\mS}(\gR'_{{\tt ill},1} \circ \boldsymbol{\sigma}) (\mS) \\
    &= \set{\mU\mathrm{Diag}(\boldsymbol{\mu})\mV^\top \Big\vert \boldsymbol{\mu} \in \partial_{\boldsymbol{\sigma}} \gR'_{{\tt ill},1}(\boldsymbol{\sigma})}
\end{align*}
Given that $\gR'_{{\tt ill},1}$ is differentiable and apparently also absolutely symmetric with respect to $\boldsymbol{\sigma}$, by Lemma \ref{lem:differentiable}, we know $\frac{1}{2}\pa{\sigma_\mS^{\tt min}}^2$ is also differentiable and 
\begin{align*}
    \nabla\pa{\frac{1}{2}\pa{\sigma_\mS^{\tt min}}^2} &= \mU\mathrm{Diag}(\nabla_{\boldsymbol{\sigma}} \gR'_{{\tt ill},1}(\boldsymbol{\sigma}) )\mV^\top \\
    &= \mU\mathrm{Diag}(\sigma_\mS^{\tt min} \ve_p)\mV^\top \\
    &= \sigma_\mS^{\tt min}\vu_k \vv_k^\top.
\end{align*}
In addition, we have
\begin{align*}
    \partial_\mS\pa{\frac{1}{2k} \norm{\mS}^2_F} &= \partial_\mS\pa{\frac{1}{2k}\sum_{i=1}^k \boldsymbol{\sigma}\pa{\mS}^2}\\
    &= \set{\mU\mathrm{Diag}(\boldsymbol{\mu})\mV^\top \Big\vert \boldsymbol{\mu} \in \partial_{\boldsymbol{\sigma}} \gR'_{{\tt ill},2}(\boldsymbol{\sigma})}
\end{align*}
by Corollary 2.5 in \citep{lewis1995convex}. $\gR'_{{\tt ill},2}$ is apparently differentiable with $\nabla \gR'_{{\tt ill},2}(\vx) = \frac{1}{k}\vx$. Therefore, again by Lemma \ref{lem:differentiable},
\begin{align*}
    \nabla\pa{\frac{1}{2k} \norm{\mS}^2_F} &= \mU\mathrm{Diag}(\nabla \gR'_{{\tt ill},2}(\boldsymbol{\sigma}_S))\mV^\top \\
    &= \frac{1}{k}\mU\mathrm{Diag}\pa{\boldsymbol{\sigma}_S}\mV^\top \\
    &= \frac{1}{k} \mS.
\end{align*}
By the linearity of gradients,
\begin{align*}
    \nabla \gR'_{\tt ill}(\mS) &= \nabla\pa{\frac{1}{2}\pa{\sigma_\mS^{\tt min}}^2} - \nabla\pa{\frac{1}{2k} \norm{\mS}^2_F} \\
    &=\sigma_\mS^{\tt min}\vu_k \vv_k^\top - \frac{1}{k}\mS,
\end{align*}
which completes the proof.
\end{proof}

\begin{retheorem}
    \itshape (3) \ If $\sigma_\mS^{\tt min} = \sigma_k < \sigma_i$ for any $i < k$, then $\gR_{\tt ill}(\mS)$ is differentiable and
    $
    \nabla_\mS \gR_{\tt ill}(\mS) = \frac{\sigma_k\vu_k \vv_k^\top - \frac{1}{k}\mS}{\pa{\frac{1}{2k} \norm{\mS}^2_F - \frac{1}{2}\pa{\sigma_\mS^{\tt min}}^2}^2}
    $.
\end{retheorem}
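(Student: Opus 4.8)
My plan is to reduce the statement to the chain rule applied to the scalar auxiliary map $\gR'_{\tt ill}$, whose differentiability and closed-form gradient are already in hand from Lemma~\ref{lem:r2_prime_grad}; the only new content is to differentiate a reciprocal and simplify. First I would recall, from the proof of Theorem~\ref{thm:rtwoS}~(1), the identity $\gR_{\tt ill}(\mS)=1/(-\gR'_{\tt ill}(\mS))$ together with
\[
\gR'_{\tt ill}(\mS)=\tfrac12\pa{\pa{\sigma_\mS^{\tt min}}^2-\tfrac1k\norm{\mS}_F^2}=\tfrac1{2k}\sum_{i=1}^k\pa{\sigma_k^2-\sigma_i^2}\le 0,
\]
with equality to zero only if all singular values coincide. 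The hypothesis $\sigma_\mS^{\tt min}=\sigma_k<\sigma_i$ for every $i<k$ rules this out --- taking $i=1$ forces $\sigma_k<\sigma_1$, hence $k\ge 2$ and $\gR'_{\tt ill}(\mS)<0$ --- so $\gR'_{\tt ill}(\mS)\ne 0$ and the reciprocal is well-defined in a neighborhood of $\mS$.

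Next I would invoke Lemma~\ref{lem:r2_prime_grad}, which under exactly this uniqueness hypothesis states that $\gR'_{\tt ill}$ is differentiable at $\mS$ with $\nabla_\mS\gR'_{\tt ill}(\mS)=\sigma_\mS^{\tt min}\vu_k\vv_k^\top-\tfrac1k\mS$. Since the scalar function $t\mapsto -1/t$ is $C^\infty$ away from the origin and $\gR'_{\tt ill}(\mS)\ne 0$, composing it with $\gR'_{\tt ill}$ gives a function differentiable at $\mS$, and the chain rule yields
\[
\nabla_\mS\gR_{\tt ill}(\mS)=\frac{1}{\pa{\gR'_{\tt ill}(\mS)}^2}\,\nabla_\mS\gR'_{\tt ill}(\mS).
\]
Substituting the gradient from Lemma~\ref{lem:r2_prime_grad} into the numerator and noting that squaring removes the sign, so that $\pa{\gR'_{\tt ill}(\mS)}^2=\pa{\tfrac1{2k}\norm{\mS}_F^2-\tfrac12\pa{\sigma_\mS^{\tt min}}^2}^2$, gives precisely
\[
\nabla_\mS\gR_{\tt ill}(\mS)=\frac{\sigma_k\vu_k\vv_k^\top-\tfrac1k\mS}{\pa{\tfrac1{2k}\norm{\mS}_F^2-\tfrac12\pa{\sigma_\mS^{\tt min}}^2}^2},
\]
as claimed.

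I do not expect a real obstacle here. The two points that warrant a sentence of care are: confirming $\gR'_{\tt ill}(\mS)\ne 0$ so that both the reciprocal and its derivative exist, which is handled above via uniqueness of the minimum singular value; and ensuring the chain rule is applied at the level of (full) differentiability rather than mere directional differentiability, which is legitimate because Lemma~\ref{lem:r2_prime_grad} delivers genuine differentiability of $\gR'_{\tt ill}$.
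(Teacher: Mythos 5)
Your proposal is correct and follows essentially the same route as the paper's proof: express $\gR_{\tt ill}=1/(-\gR'_{\tt ill})$, invoke Lemma~\ref{lem:r2_prime_grad} for the differentiability and closed-form gradient of $\gR'_{\tt ill}$ under the uniqueness hypothesis, and apply the reciprocal (quotient) rule, with the sign absorbed by the squared denominator. Your extra remark confirming $\gR'_{\tt ill}(\mS)\neq 0$ before dividing is a small rigor improvement the paper leaves implicit; note only that the step ``taking $i=1$'' tacitly assumes $k\ge 2$ (for $k=1$ the hypothesis is vacuous and $\gR_{\tt ill}$ is not even defined), so it is cleaner to state $k\ge 2$ as a standing assumption rather than derive it.
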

\begin{proof} 
Since $\gR_{\tt ill}(\mS) = \frac{1}{\frac{1}{2k} \norm{\mS}^2_F - \frac{1}{2}\pa{\sigma_\mS^{\tt min}}^2}$, we have
    \begin{align*}
        \partial \gR_{\tt ill}\pa{\mS} &= \frac{- \partial \pa{\frac{1}{2k} \norm{\mS}^2_F - \frac{1}{2}\pa{\sigma_\mS^{\tt min}}^2}}{\pa{\frac{1}{2k} \norm{\mS}^2_F - \frac{1}{2}\pa{\sigma_\mS^{\tt min}}^2}^2} \\
        &= \frac{\partial \pa{\frac{1}{2}\pa{\sigma_\mS^{\tt min}}^2 - \frac{1}{2k} \norm{\mS}^2_F} }{\pa{\frac{1}{2k} \norm{\mS}^2_F - \frac{1}{2}\pa{\sigma_\mS^{\tt min}}^2}^2} \\
        &= \frac{\partial \gR'_{\tt ill}\pa{\mS}}{\pa{\frac{1}{2k} \norm{\mS}^2_F - \frac{1}{2}\pa{\sigma_\mS^{\tt min}}^2}^2}
    \end{align*}
    By Lemma \ref{lem:r2_prime_grad}, we know that if $\sigma_\mS^{\tt min} = \sigma_k < \sigma_i$ for any $i < k$, $\gR'_{\tt ill}\pa{\mS}$ is differentiable and $\nabla \gR'_{\tt ill}(\mS) = \sigma_\mS^{\tt min}\vu_k \vv_k^\top - \frac{1}{k}\mS$. Consequently, $\gR_{\tt ill}(\mS)$ is differentiable and
    \begin{align*}
        \nabla \gR_{\tt ill}\pa{\mS} = \frac{\sigma_\mS^{\tt min}\vu_k \vv_k^\top - \frac{1}{k}\mS}{\pa{\frac{1}{2k} \norm{\mS}^2_F - \frac{1}{2}\pa{\sigma_\mS^{\tt min}}^2}^2}.
    \end{align*}
\end{proof}

\subsubsection{Proof of Theorem \ref{thm:rtwoS} (4)} \label{app:rtwoS_proof4}

\begin{retheorem}
\itshape (4) \ If $\sigma_\mS^{\tt min}$ is unique, update $S$ with $\nabla_\mS \gR_{\tt ill}(\mS)$ such that   $\mS' = \mS - \eta_2 \nabla_\mS \gR_{\tt ill}(\mS)$ for $0 < \eta_2 < \frac{k}{k-1} \pa{\frac{1}{2k} \norm{\mS}^2_F - \frac{1}{2}\pa{\sigma_\mS^{\tt min}}^2}^2$, then $\kappa\pa{\mS'} > \kappa\pa{\mS}$.
\end{retheorem}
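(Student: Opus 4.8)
The plan is to leverage the closed form of $\nabla_\mS \gR_{\tt ill}(\mS)$ from Theorem~\ref{thm:rtwoS}~(3) together with the observation flagged in the proof sketch: that gradient is diagonalized by the very same singular vectors as $\mS$, so a gradient step can be analyzed directly at the level of singular values. Write the compact SVD $\mS = \mU\mathrm{Diag}(\bm\sigma)\mV^\top$ with $\sigma_1 \geq \cdots \geq \sigma_k > 0$ and abbreviate $D \triangleq \tfrac{1}{2k}\norm{\mS}_F^2 - \tfrac12 (\sigma_\mS^{\tt min})^2$. Since $\sigma_\mS^{\tt min}$ is unique we have $D > 0$ (indeed $2kD = \sum_i (\sigma_i^2 - \sigma_k^2) \geq \sigma_1^2 - \sigma_k^2 > 0$), so $\gR_{\tt ill}$ and its gradient are well defined and, in particular, $k \geq 2$. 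Because $\sigma_k\vu_k\vv_k^\top = \mU\mathrm{Diag}(\sigma_k\ve_k)\mV^\top$ and $\mS = \mU\mathrm{Diag}(\bm\sigma)\mV^\top$, the step gives
\[
\mS' = \mS - \eta_2\nabla_\mS\gR_{\tt ill}(\mS) = \Bigl(1 + \tfrac{\eta_2}{kD^2}\Bigr)\mS - \tfrac{\eta_2\sigma_k}{D^2}\vu_k\vv_k^\top = \mU\mathrm{Diag}(\bm\sigma')\mV^\top ,
\]
with $\sigma_i' = \bigl(1 + \tfrac{\eta_2}{kD^2}\bigr)\sigma_i$ for $i<k$ and $\sigma_k' = \bigl(1 - \tfrac{(k-1)\eta_2}{kD^2}\bigr)\sigma_k$.

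Next I would verify that this is genuinely the ordered compact SVD of $\mS'$; this is the one step that needs care. The stated bound $\eta_2 < \tfrac{k}{k-1}D^2$ is exactly equivalent to $1 - \tfrac{(k-1)\eta_2}{kD^2} > 0$, so $\sigma_k' > 0$ and $\mS'$ still has rank $k$. All the $\sigma_i'$ with $i<k$ are obtained from the $\sigma_i$ by the common factor $1 + \tfrac{\eta_2}{kD^2} > 1$, which preserves their (weak) ordering, while $\sigma_k' < \sigma_k \leq \sigma_i < \sigma_i'$ for every $i<k$. Hence $\sigma_1' = \sigma_{\mS'}^{\tt max}$, $\sigma_k' = \sigma_{\mS'}^{\tt min}$, and the minimum of $\mS'$ is again unique, so $\kappa(\mS')$ is well defined and equals $\sigma_1'/\sigma_k'$.

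Finally the conclusion is a one-line comparison:
\[
\frac{\kappa(\mS')}{\kappa(\mS)} = \frac{\sigma_1'/\sigma_k'}{\sigma_1/\sigma_k} = \frac{1 + \tfrac{\eta_2}{kD^2}}{1 - \tfrac{(k-1)\eta_2}{kD^2}} > 1 ,
\]
since $\eta_2 > 0$ and $k \geq 2$ make the numerator strictly greater than $1$ and the denominator lie in $(0,1)$; therefore $\kappa(\mS') > \kappa(\mS)$. I do not expect a serious obstacle: the only delicate point is checking that the gradient step neither drops the rank nor changes which singular value is smallest or largest, and the prescribed step-size cap is precisely what rules both of those out.
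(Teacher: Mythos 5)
Your proof is correct and follows essentially the same route as the paper: expand the gradient step in the shared SVD basis of $\mS$, read off the per-singular-value update ($\sigma_i' = (1+\tfrac{\eta_2}{kD^2})\sigma_i$ for $i<k$, $\sigma_k' = (1-\tfrac{(k-1)\eta_2}{kD^2})\sigma_k$), confirm the step-size cap keeps $\sigma_k'>0$ and preserves which index attains the extremes, and take the ratio of condition numbers. Your check that $\sigma_k' < \sigma_k < \sigma_i < \sigma_i'$ for $i<k$ is a slightly more direct way to verify the ordering than the paper's chain through $\beta = \sigma_k/\sigma_{k-1}$, but the substance is identical.
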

\begin{proof}
    Given that $\mS' = \mS - \eta_2 \nabla \gR_{\tt ill}(\mS)$ and that $\nabla \gR_{\tt ill}(\mS) = \frac{\sigma_\mS^{\tt min}\vu_k \vv_k^\top - \frac{1}{k}\mS}{\pa{\frac{1}{2k} \norm{\mS}^2_F - \frac{1}{2}\pa{\sigma_\mS^{\tt min}}^2}^2} = \frac{1}{\gR'_{\tt ill}(\mS)^2}\pa{\sigma_k\vu_k \vv_k^\top - \frac{1}{k}\mS}$ for $\gR'_{\tt ill}(\mS) = \frac{1}{2}\pa{\sigma_k^2 - \frac{1}{k} \norm{\mS}^2_F}$,
    \begin{align*}
        \mS' &= \mS - \eta_2 \nabla \gR_{\tt ill}(\mS) \\
        &= \mS - \frac{\eta_2}{\gR'_{\tt ill}(\mS)^2} \pa{\sigma_k\vu_k \vv_k^\top - \frac{1}{k}\mS} \\
        &= \pa{1+\frac{\eta_2}{k \gR'_{\tt ill}(\mS)^2}}\mS - \frac{\eta_2}{ \gR'_{\tt ill}(\mS)^2} \sigma_k\vu_k \vv_k^\top \\
        &= \pa{1+\frac{\eta_2}{k \gR'_{\tt ill}(\mS)^2}}\sum_{i=1}^k \sigma_i\vu_i \vv_i^\top - \frac{\eta_2}{\gR'_{\tt ill}(\mS)^2} \sigma_k\vu_k \vv_k^\top \\
        &= \pa{1+\frac{\eta_2}{k \gR'_{\tt ill}(\mS)^2}}\sum_{i=1}^{k-1} \sigma_i\vu_i \vv_i^\top + \pa{1+\frac{\eta_2}{k \gR'_{\tt ill}(\mS)^2} - \frac{\eta_2}{ \gR'_{\tt ill}(\mS)^2}}\sigma_k\vu_k \vv_k^\top \\
        &= \mU\mathrm{Diag}(\boldsymbol{\sigma}_{\mS'})\mV^\top.
    \end{align*}
    where $\boldsymbol{\sigma}_{\mS'} = \bracks{\pa{1+\frac{\eta_2}{k \gR'_{\tt ill}(\mS)^2}}\sigma_1, \cdots, \pa{1+\frac{\eta_2}{k \gR'_{\tt ill}(\mS)^2}}\sigma_{k-1}, \pa{1+\frac{\eta_2}{k \gR'_{\tt ill}(\mS)^2} - \frac{\eta_2}{\gR'_{\tt ill}(\mS)^2}} \sigma_k }^\top$ is the vector formed by the singular values of $\mS'$ but not necessarily in the decreasing order. 
    
    Now we argue that $\pa{1+\frac{\eta_2}{k \gR'_{\tt ill}(\mS)^2}}\sigma_1$ remains to be the maximum singular value while $\pa{1+\frac{\eta_2}{k \gR'_{\tt ill}(\mS)^2} - \frac{\eta_2}{\gR'_{\tt ill}(\mS)^2}} \sigma_k$ the minimum. Since $\sigma_k < \sigma_i$ for any $i < k$, i.e., $\sigma_\mS^{\tt min} = \sigma_k$ is unique, we must have $0 < \beta < 1$ such that $\sigma_k = \beta \sigma_{k-1}$. Also, given that $\eta_2 < \frac{k}{k-1}\pa{\frac{1}{2k} \norm{\mS}^2_F - \frac{1}{2}\sigma_k^2}^2 = \frac{k \gR'_{\tt ill}(\mS)^2}{k-1}$, we have $1+\frac{\eta_2}{k \gR'_{\tt ill}(\mS)^2} - \frac{\eta_2}{\gR'_{\tt ill}(\mS)^2} > 0$. Therefore,
    \begin{align*}
        &\pa{1+\frac{\eta_2}{k \gR'_{\tt ill}(\mS)^2} - \frac{\eta_2}{ \gR'_{\tt ill}(\mS)^2}} \sigma_k \\
        &= \pa{1+\frac{\eta_2}{k \gR'_{\tt ill}(\mS)^2}} \frac{1+\frac{\eta_2}{k \gR'_{\tt ill}(\mS)^2} - \frac{\eta_2}{ \gR'_{\tt ill}(\mS)^2}}{1+\frac{\eta_2}{k \gR'_{\tt ill}(\mS)^2}} \sigma_k \\
        &=\pa{1+\frac{\eta_2}{k \gR'_{\tt ill}(\mS)^2}} \pa{1-\frac{\frac{\eta_2}{ \gR'_{\tt ill}(\mS)^2}}{1+\frac{\eta_2}{k \gR'_{\tt ill}(\mS)^2}}} \sigma_k \\
        &< \pa{1+\frac{\eta_2}{k \gR'_{\tt ill}(\mS)^2}} \sigma_k \\
        &< \pa{1+\frac{\eta_2}{k \gR'_{\tt ill}(\mS)^2}} \frac{1}{\beta}\sigma_k \\
        &= \pa{1+\frac{\eta_2}{k \gR'_{\tt ill}(\mS)^2}} \sigma_{k-1}.
    \end{align*}
    Since $\sigma_1 \geq \sigma_2(\mS) \geq \cdots \geq \sigma_{k-1} > \sigma_k$ and $1+\frac{\eta_2}{k \gR'_{\tt ill}(\mS)^2} > 0$, we know that $\sigma_{\mS'}^{\tt max} = \pa{1+\frac{\eta_2}{k \gR'_{\tt ill}(\mS)^2}}\sigma_1$ and $\sigma_{\mS'}^{\tt min} = \pa{1+\frac{\eta_2}{k \gR'_{\tt ill}(\mS)^2} - \frac{\eta_2}{ \gR'_{\tt ill}(\mS)^2}} \sigma_k$. Finally,
    \begin{align*}
        \kappa\pa{\mS'} &= \frac{\sigma_{\mS'}^{\tt max}}{\sigma_{\mS'}^{\tt min}} \\
        &= \frac{\pa{1+\frac{\eta_2}{k \gR'_{\tt ill}(\mS)^2}}\sigma_1}{\pa{1+\frac{\eta_2}{k \gR'_{\tt ill}(\mS)^2} - \frac{\eta_2}{\gR'_{\tt ill}(\mS)^2}} \sigma_k} \\
        &=\frac{1+\frac{\eta_2}{k \gR'_{\tt ill}(\mS)^2}}{1+\frac{\eta_2}{k \gR'_{\tt ill}(\mS)^2} - \frac{\eta_2}{ \gR'_{\tt ill}(\mS)^2}} \kappa\pa{\mS} \\
        &> \kappa\pa{\mS}.
    \end{align*}
\end{proof}

\subsection{Proof of Theorem \ref{thm:phigrad}} \label{app:phigrad}
\thmphigrad*
\begin{proof} 
Given that $\mH\pa{\theta} = \theta^\top \mK \theta$ for $\mK = \mX^\top \mX$, we know $\mH\pa{\theta}$ is symmetric and positive semidefinite. Therefore, for compact SVD $\mH\pa{\theta} = \mU \mathrm{Diag}\pa{\bm{\sigma}}\mV^\top$, we have $\mU = \mV$.
\begin{itemize} \vspace{-10pt}
    \item [(1)] When the maximum singular value $\sigma_1$ of $\mH$ is unique, we know from Theorem \ref{thm:roneS} (3) that $\gR_{\tt well}\pa{\mH\pa{\theta}}$ is differentiable with respect to $\mH$, and $\nabla_\mH \gR_{\tt well}\pa{\mH\pa{\theta}} = \sigma_1\vu_1\vv_1^\top - \frac{1}{D_{\tt in}} \mH = \sigma_1\vv_1\vv_1^\top - \frac{1}{D_{\tt in}} \mH$.  
    
    Given the form $\mH\pa{\theta} = \theta^\top \mK \theta$, we have $d\mH = \pa{d\theta}^\top\mK\theta + \theta^\top\mK \pa{d\theta}$. Furthermore,
    \begin{align*}
        \pa{d \gR_{\tt well}}\pa{\mH\pa{\theta}} &= \inner{\nabla_\mH \gR_{\tt well}\pa{\mH\pa{\theta}}}{d\mH}_F \\
        &= \Tr\pa{\pa{\sigma_1\vv_1\vv_1^\top - \frac{1}{D_{\tt in}} \mH}^\top \pa{\pa{d\theta}^\top\mK\theta + \theta^\top\mK \pa{d\theta}}} \\
        &=  \Tr\pa{\pa{\sigma_1\vv_1\vv_1^\top - \frac{1}{D_{\tt in}} \mH}^\top \pa{d\theta}^\top\mK\theta} + \Tr\pa{\pa{\sigma_1\vv_1\vv_1^\top - \frac{1}{D_{\tt in}} \mH}^\top \theta^\top\mK \pa{d\theta}} \\
        &=  \Tr\pa{\mK\theta \pa{\sigma_1\vv_1\vv_1^\top - \frac{1}{D_{\tt in}} \mH}^\top\pa{d\theta}^\top} + \Tr\pa{\pa{d\theta}\pa{\sigma_1\vv_1\vv_1^\top - \frac{1}{D_{\tt in}} \mH}^\top \theta^\top\mK},
    \end{align*}
    in which $\inner{\cdot}{\cdot}_F$ denotes the Frobenius inner product, and that last equality follows from the cyclic property of trace. As a result, following the derivatives of traces as in Eq.~(100) and Eq.~(104) in~\citet{petersen2008matrix},
    \begin{align*}
        \nabla_\theta \gR_{\tt well}(\mH\pa{\theta}) &= \frac{\partial \gR_{\tt well}(\mH\pa{\theta})}{\partial\theta} \\
        &= \mK\theta \pa{\sigma_1\vv_1\vv_1^\top - \frac{1}{D_{\tt in}} \mH}^\top + \pa{\pa{\sigma_1\vv_1\vv_1^\top - \frac{1}{D_{\tt in}} \mH}^\top \theta^\top\mK }^\top \\
        &= \mK\theta \pa{\sigma_1\pa{\vv_1\vv_1^\top}^\top - \frac{1}{D_{\tt in}} \mH^\top} + \mK^\top\theta\pa{\sigma_1\vv_1\vv_1^\top - \frac{1}{D_{\tt in}} \mH}  \\
        &= 2\mK\theta \pa{\sigma_1\vv_1\vv_1^\top - \frac{1}{D_{\tt in}} \mH}.
    \end{align*}
    \item [(2)] When the minimum singular value $\sigma_k$ of $\mH$ is unique, we know from Theorem \ref{thm:rtwoS} (3) that $\gR_{\tt ill}\pa{\mH\pa{\theta}}$ is differentiable with respect to $\mH$, and $\nabla_\mH \gR_{\tt ill}\pa{\mH\pa{\theta}} = \frac{\sigma_k\vu_k \vv_k^\top - \frac{1}{k}\mH}{\pa{\frac{1}{2k} \norm{\mH}^2_F - \frac{1}{2}\sigma_k^2}^2}$. Following similar arguments as in (1), we have $\nabla_\theta \gR_{\tt ill}\pa{\mH\pa{\theta}} = \frac{2\mK\theta\pa{\sigma_k\vu_k \vv_k^\top - \frac{1}{k}\mH}}{\pa{\frac{1}{2k} \norm{\mH}^2_F - \frac{1}{2}\sigma_k^2}^2}$.
\end{itemize} \vspace{-10pt}
\end{proof}

\subsection{Proof of Theorem \ref{thm:kappaphi}} \label{app:kappaphi}

\thmkappaphi*
\begin{proof} \vspace{-10pt}
    \begin{itemize}
        \item [(1)] By Theorem \ref{thm:phigrad} (1), we know $\nabla_\theta \gR_{\tt well}\pa{\mH_{\tt P}\pa{\theta}} = 2\mK_{\tt P}\theta \pa{\sigma_{{\tt P},1}\vv_{{\tt P},1}\vv_{{\tt P},1}^\top - \frac{1}{D_{\tt in}} \mH_{\tt P}}$. Since $\theta' = \theta - \eta_{\tt P} \mK_{\tt P}^{-1} \nabla_\theta \gR_{\tt well}(\mH_{\tt P}\pa{\theta})$, we have
        \begin{align*}
            {\theta'}^\top \mK_{\tt P} \theta' &= \pa{\theta - \eta_{\tt P} \mK_{\tt P}^{-1} \nabla_\theta \gR_{\tt well}(\mH_{\tt P}\pa{\theta})}^\top \mK_{\tt P} \pa{\theta - \eta_{\tt P} \mK_{\tt P}^{-1} \nabla_\theta \gR_{\tt well}(\mH_{\tt P}\pa{\theta})} \\
             &= \pa{\theta - 2\eta_{\tt P} \mK_{\tt P}^{-1} \mK_{\tt P}\theta \pa{\sigma_{{\tt P},1}\vv_{{\tt P},1}\vv_{{\tt P},1}^\top - \frac{1}{D_{\tt in}} \mH_{\tt P}}}^\top \mK_{\tt P} \pa{\theta - 2\eta_{\tt P} \mK_{\tt P}^{-1} \mK_{\tt P}\theta \pa{\sigma_{{\tt P},1}\vv_{{\tt P},1}\vv_{{\tt P},1}^\top - \frac{1}{D_{\tt in}} \mH_{\tt P}}} \\
            &= \pa{\theta - 2\eta_{\tt P} \theta \pa{\sigma_{{\tt P},1}\vv_{{\tt P},1}\vv_{{\tt P},1}^\top - \frac{1}{D_{\tt in}} \mH_{\tt P}}}^\top \mK_{\tt P} \pa{\theta - 2\eta_{\tt P} \theta \pa{\sigma_{{\tt P},1}\vv_{{\tt P},1}\vv_{{\tt P},1}^\top - \frac{1}{D_{\tt in}} \mH_{\tt P}}} \\
            &= \theta^\top \mK_{\tt P} \theta - 2\eta_{\tt P} \pa{\sigma_{{\tt P},1}\vv_{{\tt P},1}\vv_{{\tt P},1}^\top - \frac{1}{D_{\tt in}} \mH_{\tt P}}^\top \theta^\top \mK_{\tt P} \theta - 2\eta_{\tt P} \theta^\top \mK_{\tt P} \theta \pa{\sigma_{{\tt P},1}\vv_{{\tt P},1}\vv_{{\tt P},1} - \frac{1}{D_{\tt in}} \mH_{\tt P}} \\
            & \quad + 4\eta_{\tt P}^2 \pa{\sigma_{{\tt P},1}\vv_{{\tt P},1}\vv_{{\tt P},1}^\top - \frac{1}{D_{\tt in}} \mH_{\tt P}}^\top \theta^\top \mK_{\tt P} \theta\pa{\sigma_{{\tt P},1}\vv_{{\tt P},1}\vv_{{\tt P},1}- \frac{1}{D_{\tt in}} \mH_{\tt P}} \\
            &= \mH_{\tt P} - 2\eta_{\tt P} \pa{\sigma_{{\tt P},1}\vv_{{\tt P},1}\vv_{{\tt P},1}^\top - \frac{1}{D_{\tt in}} \mH_{\tt P}}^\top \mH_{\tt P} - 2\eta_{\tt P} \mH_{\tt P} \pa{\sigma_{{\tt P},1}\vv_{{\tt P},1}\vv_{{\tt P},1} - \frac{1}{D_{\tt in}} \mH_{\tt P}} \\
            & \quad + 4\eta_{\tt P}^2 \pa{\sigma_{{\tt P},1}\vv_{{\tt P},1}\vv_{{\tt P},1}^\top - \frac{1}{D_{\tt in}} \mH_{\tt P}}^\top \mH_{\tt P} \pa{\sigma_{{\tt P},1}\vv_{{\tt P},1}\vv_{{\tt P},1}- \frac{1}{D_{\tt in}} \mH_{\tt P}}.
        \end{align*}
        Since $\mH_{\tt P}\pa{\theta} = \theta^\top \mK_{\tt P} \theta$ for $\mK_{\tt P} = \mX_{\tt P}^\top \mX_{\tt P}$ is symmetric and positive semidefinite, we know for $\mH_{\tt P}\pa{\theta} = \mU_{\tt P} \mathrm{Diag}\pa{\bm{\sigma}_{\tt P}}\mV_{\tt P}^\top$, it holds that $\mU_{\tt P} = \mV_{\tt P}$. Furthermore,
        \begin{align*}
            \sigma_{{\tt P},1}\vv_{{\tt P},1}\vv_{{\tt P},1}^\top - \frac{1}{D_{\tt in}} \mH_{\tt P} &=  \sigma_{{\tt P},1}\vv_{{\tt P},1}\vv_{{\tt P},1}^\top - \frac{1}{D_{\tt in}}\sum_{i=1}^{k_{\tt P}}\sigma_{{\tt P},i}\vu_{{\tt P},i} \vv_{{\tt P},i}^\top\\
            &= \pa{1 - \frac{1}{D_{\tt in}}}\sigma_{{\tt P},1}\vu_{{\tt P},1} \vv_{{\tt P},1}^\top - \frac{1}{D_{\tt in}}\sum_{i=2}^{k_{\tt P}} \sigma_{{\tt P},i}\vu_{{\tt P},i} \vv_{{\tt P},i}^\top \\
            &= \mU_{\tt P} \mathrm{Diag}\pa{\tilde{\bm{\sigma}}_{\tt P}} \mV_{\tt P}^\top \\
            &= \mV_{\tt P} \mathrm{Diag}\pa{\tilde{\bm{\sigma}}_{\tt P}} \mV_{\tt P}^\top
        \end{align*}
        for $\mathrm{Diag}\pa{\tilde{\bm{\sigma}}_{\tt P}} = \bracks{\pa{1 - \frac{1}{D_{\tt in}}}\sigma_{{\tt P},1}, -\frac{1}{D_{\tt in}}\sigma_{{\tt P},2}, \cdots, -\frac{1}{D_{\tt in}}\sigma_{{\tt P},k_{\tt P}}}$. Therefore, plugging this and the SVD of $\mH_{\tt P}$ back in,
        \begin{align*}
            {\theta'}^\top \mK_{\tt P} \theta' &= \mV_{\tt P} \mathrm{Diag}\pa{\bm{\sigma}_{\tt P}}\mV_{\tt P}^\top - 2\eta_{\tt P} \pa{\mV_{\tt P} \mathrm{Diag}\pa{\tilde{\bm{\sigma}}_{\tt P}}\mV_{\tt P}^\top}^\top\mV_{\tt P} \mathrm{Diag}\pa{\bm{\sigma}_{\tt P}}\mV_{\tt P}^\top \\
            & \quad - 2\eta_{\tt P} \pa{\mV_{\tt P} \mathrm{Diag}\pa{\bm{\sigma}_{\tt P}}\mV_{\tt P}^\top}^\top\mV_{\tt P} \mathrm{Diag}\pa{\tilde{\bm{\sigma}}_{\tt P}}\mV_{\tt P}^\top \\
            & \quad + 4\eta_{\tt P}^2 \pa{\mV_{\tt P} \mathrm{Diag}\pa{\tilde{\bm{\sigma}}_{\tt P}}\mV_{\tt P}^\top}^\top\mV_{\tt P} \mathrm{Diag}\pa{\bm{\sigma}_{\tt P}}\mV_{\tt P}^\top\mV_{\tt P} \mathrm{Diag}\pa{\tilde{\bm{\sigma}}_{\tt P}}\mV_{\tt P}^\top \\
            &= \mV_{\tt P} \mathrm{Diag}\pa{\bm{\sigma}_{\tt P}}\mV_{\tt P}^\top - 2\eta_{\tt P} \mV_{\tt P} \mathrm{Diag}\pa{\tilde{\bm{\sigma}}_{\tt P}}\mathrm{Diag}\pa{\bm{\sigma}_{\tt P}}\mV_{\tt P}^\top - 2\eta_{\tt P} \mV_{\tt P} \mathrm{Diag}\pa{\bm{\sigma}_{\tt P}}\mathrm{Diag}\pa{\tilde{\bm{\sigma}}_{\tt P}}\mV_{\tt P}^\top \\
            & \quad + 4\eta_{\tt P}^2 \mV_{\tt P} \mathrm{Diag}\pa{\tilde{\bm{\sigma}}_{\tt P}}\mathrm{Diag}\pa{\bm{\sigma}_{\tt P}} \mathrm{Diag}\pa{\tilde{\bm{\sigma}}_{\tt P}}\mV_{\tt P}^\top \\
            &= \mV_{\tt P} \mathrm{Diag}\pa{\bm{\sigma}_{\tt P}}\mV_{\tt P}^\top - 2\eta_{\tt P} \mV_{\tt P} \mathrm{Diag}\pa{\tilde{\bm{\sigma}}_{\tt P} \odot \bm{\sigma}_{\tt P}}\mV_{\tt P}^\top - 2\eta_{\tt P} \mV_{\tt P} \mathrm{Diag}\pa{\bm{\sigma}_{\tt P} \odot \tilde{\bm{\sigma}}_{\tt P}}\mV_{\tt P}^\top \\
            & \quad + 4\eta_{\tt P}^2 \mV_{\tt P} \mathrm{Diag}\pa{\tilde{\bm{\sigma}}_{\tt P} \odot \bm{\sigma}_{\tt P} \odot \tilde{\bm{\sigma}}_{\tt P}}\mV_{\tt P}^\top \\
            &= \mV_{\tt P} \mathrm{Diag}\pa{\bm{\sigma}_{\tt P} - 4 \eta_{\tt P} \tilde{\bm{\sigma}}_{\tt P} \odot \bm{\sigma}_{\tt P} + 4\eta_{\tt P}^2 \tilde{\bm{\sigma}}_{\tt P} \odot \bm{\sigma}_{\tt P} \odot \tilde{\bm{\sigma}}_{\tt P} }\mV_{\tt P}^\top \\
            &= \mV_{\tt P} \mathrm{Diag}\pa{\bm{\sigma}_{\tt P}'}\mV_{\tt P}^\top,
        \end{align*}
        in which $\bm{\sigma}_{\tt P}' = \bracks{\sigma_{{\tt P},1}', \cdots, \sigma_{{\tt P},k_{\tt P}}'}^\top$ for $\sigma_{{\tt P},i}' = \begin{cases}
            \sigma_{{\tt P},1} - 4 \eta_{\tt P}\pa{1-\frac{1}{D_{\tt in}}} \sigma_{{\tt P},1}^2 + 4\eta_{\tt P}^2\pa{1-\frac{1}{D_{\tt in}}}^2\sigma_{{\tt P},1}^3 & \text{if } i=1 \\
            \sigma_{{\tt P},i} + \frac{4 \eta_{\tt P}}{D_{\tt in}} \sigma_{{\tt P},i}^2 + \frac{4\eta_{\tt P}^2}{D_{\tt in}^2}\sigma_{{\tt P},i}^3 & \text{if } i > 1
        \end{cases}$, $\odot$ denotes element-wise product and the second equality holds by the fact that $\mV_{\tt P}$ is orthonormal, i.e., $\mV_{\tt P}^\top \mV_{\tt P} = \mI$.
        
        Since $\sigma_{\mH_{\tt P}}^{\tt max}$ is unique, we know that $\exists \ \alpha > 1$ such that $\sigma_{{\tt P},1} = \alpha \sigma_{{\tt P},2}$. Therefore,
        \begin{align*}
            \sigma'_{{\tt P},2} &= \sigma_{{\tt P},2} + \frac{4 \eta_{\tt P}}{D_{\tt in}} \sigma_{{\tt P},2}^2 + \frac{4\eta_{\tt P}^2}{D_{\tt in}^2}\sigma_{{\tt P},2}^3 \\
            &= \pa{1 + \frac{4 \eta_{\tt P}}{D_{\tt in}} \sigma_{{\tt P},2} + \frac{4\eta_{\tt P}^2}{D_{\tt in}^2}\sigma_{{\tt P},2}^2} \sigma_{{\tt P},2} \\
            &= \frac{1 + \frac{4 \eta_{\tt P}}{D_{\tt in}} \sigma_{{\tt P},2} + \frac{4\eta_{\tt P}^2}{D_{\tt in}^2}\sigma_{{\tt P},2}^2}{\alpha} \sigma_{{\tt P},1}.
        \end{align*}
        
        With $\eta_{\tt P} < \frac{\sqrt{\sigma_{{\tt P},1}\sigma_{{\tt P},2}} - \sigma_{{\tt P},2}}{\frac{2}{D_{\tt in}}\sigma_{{\tt P},2}^2}$, we have $1 + \frac{4 \eta_{\tt P}}{D_{\tt in}} \sigma_{{\tt P},2} + \frac{4\eta_{\tt P}^2}{D_{\tt in}^2}\sigma_{{\tt P},2}^2 < 1 - 4 \eta_{\tt P}\pa{1-\frac{1}{D_{\tt in}}} \sigma_{{\tt P},1} + 4\eta_{\tt P}^2\pa{1-\frac{1}{D_{\tt in}}}^2\sigma_{{\tt P},1}^2$. As a result,
        \begin{align*}
            \frac{1 + \frac{4 \eta_{\tt P}}{D_{\tt in}} \sigma_{{\tt P},2} + \frac{4\eta_{\tt P}^2}{D_{\tt in}^2}\sigma_{{\tt P},2}^2}{1 - 4 \eta_{\tt P}\pa{1-\frac{1}{D_{\tt in}}} \sigma_{{\tt P},1} + 4\eta_{\tt P}^2\pa{1-\frac{1}{D_{\tt in}}}^2\sigma_{{\tt P},1}^2} < 1 < \alpha,
        \end{align*}
        that is,
        \begin{align*}
            \frac{1 + \frac{4 \eta_{\tt P}}{D_{\tt in}} \sigma_{{\tt P},2} + \frac{4\eta_{\tt P}^2}{D_{\tt in}^2}\sigma_{{\tt P},2}^2}{\alpha} < 1 - 4 \eta_{\tt P}\pa{1-\frac{1}{D_{\tt in}}} \sigma_{{\tt P},1} + 4\eta_{\tt P}^2\pa{1-\frac{1}{D_{\tt in}}}^2\sigma_{{\tt P},1}^2.
        \end{align*}
        Plugging this result back in,
        \begin{align*}
            \sigma'_{{\tt P},2} &= \frac{1 + \frac{4 \eta_{\tt P}}{D_{\tt in}} \sigma_{{\tt P},2} + \frac{4\eta_{\tt P}^2}{D_{\tt in}^2}\sigma_{{\tt P},2}^2}{\alpha} \sigma_{{\tt P},1} \\
            &< \pa{1 - 4 \eta_{\tt P}\pa{1-\frac{1}{D_{\tt in}}} \sigma_{{\tt P},1} + 4\eta_{\tt P}^2\pa{1-\frac{1}{D_{\tt in}}}^2\sigma_{{\tt P},1}^2} \sigma_{{\tt P},1} \\
            &= \sigma'_{{\tt P},1}.
        \end{align*}
        In addition, $\sigma'_{{\tt P},2} = \sigma_{{\tt P},2} + \frac{4 \eta_{\tt P}}{D_{\tt in}} \sigma_{{\tt P},2}^2 + \frac{4\eta_{\tt P}^2}{D_{\tt in}^2}\sigma_{{\tt P},2}^3 \geq \sigma_{{\tt P},i} + \frac{4 \eta_{\tt P}}{D_{\tt in}} \sigma_{{\tt P},i}^2 + \frac{4\eta_{\tt P}^2}{D_{\tt in}^2}\sigma_{{\tt P},i}^3 = \sigma'_{{\tt P},i}$ for $i = 3, \cdots, k_{\tt P}$ since $\sigma_{{\tt P},2} \geq \sigma_{{\tt P},i}$ for $i = 3, \cdots, k_{\tt P}$ by definition. Therefore, $\sigma'_{{\tt P},1}$ remains to be the maximum singular value of ${\theta'}^\top \mK_{\tt P} \theta'$, and $\sigma'_{{\tt P},k_{\tt P}}$ the minimum. Finally,
        \begin{align*}
            \kappa\pa{{\theta'}^\top \mK_{\tt P} \theta'} &= \frac{\sigma'_{{\tt P},1}}{\sigma'_{{\tt P},k_{\tt P}}} \\
            &=\frac{\sigma_{{\tt P},1} - 4 \eta_{\tt P}\pa{1-\frac{1}{D_{\tt in}}} \sigma_{{\tt P},1}^2 + 4\eta_{\tt P}^2\pa{1-\frac{1}{D_{\tt in}}}^2\sigma_{{\tt P},1}^3}{\sigma_{{\tt P},k_{\tt P}} + \frac{4 \eta_{\tt P}}{D_{\tt in}} \sigma_{{\tt P},k_{\tt P}}^2 + \frac{4\eta_{\tt P}^2}{D_{\tt in}^2}\sigma_{{\tt P},k_{\tt P}}^3} \\
            &< \frac{\sigma_{{\tt P},1} - 4 \eta_{\tt P}\pa{1-\frac{1}{D_{\tt in}}} \sigma_{{\tt P},1}^2 + 4\eta_{\tt P}^2\pa{1-\frac{1}{D_{\tt in}}}^2\sigma_{{\tt P},1}^3}{\sigma_{{\tt P},k_{\tt P}}} \\
            &< \frac{\sigma_{{\tt P},1}}{\sigma_{{\tt P},k_{\tt P}}} \\
            &= \kappa\pa{\theta^\top \mK_{\tt P} \theta}
        \end{align*}
        where the second inequality holds when $\eta_{\tt P} < \frac{1}{\pa{1-\frac{1}{D_{\tt in}}}\sigma_{{\tt P},1}}$ which indicates that $- 4 \eta_{\tt P}\pa{1-\frac{1}{D_{\tt in}}} \sigma_{{\tt P},1}^2 + 4\eta_{\tt P}^2\pa{1-\frac{1}{D_{\tt in}}}^2\sigma_{{\tt P},1}^3 < 0$.
        \item [(2)] Denote $\gR'_{\tt ill}\pa{\mH_{\tt H}} = \frac{1}{2}\sigma_{{\tt H},k_{\tt H}}^2 - \frac{1}{2k_{\tt H}} \norm{\mH_{\tt H}}^2_F$, then by Theorem \ref{thm:phigrad} (2), we know $\nabla_\theta \gR_{\tt ill}\pa{\mH_{\tt H}\pa{\theta}} = \frac{2\mK_{\tt H}\theta\pa{\sigma_{{\tt H},k_{\tt H}}\vu_{{\tt H},k_{\tt H}} \vv_{{\tt H},k_{\tt H}}^\top - \frac{1}{k_{\tt H}}\mH_{\tt H}}}{{\gR'_{\tt ill}\pa{\mH_{\tt H}}}^2}$. Since $\theta' = \theta - \eta_{\tt H} \mK_{\tt H}^{-1} \nabla_\theta \gR_{\tt ill}(\mH_{\tt H}\pa{\theta})$, we have
        \begin{align*}
            &{\theta'}^\top \mK_{\tt H} \theta' = \pa{\theta - \eta_{\tt H} \mK_{\tt H}^{-1} \nabla_\theta \gR_{\tt ill}(\mH_{\tt H}\pa{\theta})}^\top \mK_{\tt H} \pa{\theta - \eta_{\tt H} \mK_{\tt H}^{-1} \nabla_\theta \gR_{\tt ill}(\mH_{\tt H}\pa{\theta})} \\
            &= \theta^\top \mK_{\tt H} \theta - \frac{2\eta_{\tt H}}{{\gR'_{\tt ill}\pa{\mH_{\tt H}}}^2} \pa{\sigma_{{\tt H},k_{\tt H}}\vu_{{\tt H},k_{\tt H}} \vv_{{\tt H},k_{\tt H}}^\top - \frac{1}{k_{\tt H}}\mH_{\tt H}}^\top \theta^\top \mK_{\tt H} \theta - \frac{2\eta_{\tt H}}{{\gR'_{\tt ill}\pa{\mH_{\tt H}}}^2} \theta^\top \mK_{\tt H} \theta \pa{\sigma_{{\tt H},k_{\tt H}}\vu_{{\tt H},k_{\tt H}} \vv_{{\tt H},k_{\tt H}}^\top - \frac{1}{k_{\tt H}}\mH_{\tt H}} \\
            & \quad + \frac{4\eta_{\tt H}^2}{{\gR'_{\tt ill}\pa{\mH_{\tt H}}}^4} \pa{\sigma_{{\tt H},k_{\tt H}}\vu_{{\tt H},k_{\tt H}} \vv_{{\tt H},k_{\tt H}}^\top - \frac{1}{k_{\tt H}}\mH_{\tt H}}^\top \theta^\top \mK_{\tt H} \theta\pa{\sigma_{{\tt H},k_{\tt H}}\vu_{{\tt H},k_{\tt H}} \vv_{{\tt H},k_{\tt H}}^\top - \frac{1}{k_{\tt H}}\mH_{\tt H}} \\
            &= \mH_{\tt H} - \frac{2\eta_{\tt H}}{{\gR'_{\tt ill}\pa{\mH_{\tt H}}}^2} \pa{\sigma_{{\tt H},k_{\tt H}}\vu_{{\tt H},k_{\tt H}} \vv_{{\tt H},k_{\tt H}}^\top - \frac{1}{k_{\tt H}}\mH_{\tt H}}^\top \mH_{\tt H} - \frac{2\eta_{\tt H}}{{\gR'_{\tt ill}\pa{\mH_{\tt H}}}^2} \mH_{\tt H} \pa{\sigma_{{\tt H},k_{\tt H}}\vu_{{\tt H},k_{\tt H}} \vv_{{\tt H},k_{\tt H}}^\top - \frac{1}{k_{\tt H}}\mH_{\tt H}} \\
            & \quad + \frac{4\eta_{\tt H}^2}{{\gR'_{\tt ill}\pa{\mH_{\tt H}}}^4} \pa{\sigma_{{\tt H},k_{\tt H}}\vu_{{\tt H},k_{\tt H}} \vv_{{\tt H},k_{\tt H}}^\top - \frac{1}{k_{\tt H}}\mH_{\tt H}}^\top \mH_{\tt H}\pa{\sigma_{{\tt H},k_{\tt H}}\vu_{{\tt H},k_{\tt H}} \vv_{{\tt H},k_{\tt H}}^\top - \frac{1}{k_{\tt H}}\mH_{\tt H}}.
        \end{align*}
        Since $\mH_{\tt P}\pa{\theta} = \theta^\top \mK_{\tt H} \theta$ for $\mK_{\tt H} = \mX_{\tt H}^\top \mX_{\tt H}$ is also symmetric and positive semidefinite, we know for $\mH_{\tt H}\pa{\theta} = \mU_{\tt H} \mathrm{Diag}\pa{\bm{\sigma}_{\tt H}}\mV_{\tt H}^\top$, it holds that $\mU_{\tt H}= \mV_{\tt H}$. Following similar arguments as in (1),
        \begin{align*}
            \sigma_{{\tt H},k_{\tt H}}\vu_{{\tt H},k_{\tt H}} \vv_{{\tt H},k_{\tt H}}^\top - \frac{1}{k_{\tt H}}\mH_{\tt H} &=  -\frac{1}{k_{\tt H}}\sum_{i=1}^{k_{\tt H}-1} \sigma_{{\tt H},i}\vu_{{\tt H},i} \vv_{{\tt H},i}^\top + \pa{1-\frac{1}{k_{\tt H}}}\sigma_{{\tt H},k_{\tt H}}\vu_{{\tt H},k_{\tt H}} \vv_{{\tt H},k_{\tt H}}^\top\\
            &= \mV_{\tt H} \mathrm{Diag}\pa{\tilde{\bm{\sigma}}_{\tt H}} \mV_{\tt H}^\top
        \end{align*}
        for $\mathrm{Diag}\pa{\tilde{\bm{\sigma}}_{\tt H}} = \bracks{- \frac{1}{k_{\tt H}}\sigma_{{\tt H},1}, \cdots, - \frac{1}{k_{\tt H}}\sigma_{{\tt H},k_{\tt H}-1}, \pa{1-\frac{1}{k_{\tt H}}}\sigma_{{\tt H},k_{\tt H}}}$. Since $\mV_{\tt H}$ is orthonormal, i.e., $\mV_{\tt H}^\top \mV_{\tt H} = \mI$,
        \begin{align*}
            {\theta'}^\top \mK_{\tt H} \theta' &= \mV_{\tt H} \mathrm{Diag}\pa{\bm{\sigma}_{\tt H}}\mV_{\tt H}^\top - \frac{2\eta_{\tt H}}{{\gR'_{\tt ill}\pa{\mH_{\tt H}}}^2} \pa{\mV_{\tt H} \mathrm{Diag}\pa{\tilde{\bm{\sigma}}_{\tt H}}\mV_{\tt H}^\top}^\top\mV_{\tt H} \mathrm{Diag}\pa{\bm{\sigma}_{\tt H}}\mV_{\tt H}^\top \\
            & \quad - \frac{2\eta_{\tt H}}{{\gR'_{\tt ill}\pa{\mH_{\tt H}}}^2} \pa{\mV_{\tt H} \mathrm{Diag}\pa{\bm{\sigma}_{\tt H}}\mV_{\tt H}^\top}^\top\mV_{\tt H} \mathrm{Diag}\pa{\tilde{\bm{\sigma}}_{\tt P}}\mV_{\tt P}^\top \\
            & \quad + \frac{4\eta_{\tt H}^2}{{\gR'_{\tt ill}\pa{\mH_{\tt H}}}^4} \pa{\mV_{\tt H} \mathrm{Diag}\pa{\tilde{\bm{\sigma}}_{\tt H}}\mV_{\tt H}^\top}^\top\mV_{\tt H} \mathrm{Diag}\pa{\bm{\sigma}_{\tt H}}\mV_{\tt H}^\top\mV_{\tt H} \mathrm{Diag}\pa{\tilde{\bm{\sigma}}_{\tt H}}\mV_{\tt H}^\top \\
            &= \mV_{\tt H} \mathrm{Diag}\pa{\bm{\sigma}_{\tt H} - \frac{4\eta_{\tt H}}{{\gR'_{\tt ill}\pa{\mH_{\tt H}}}^2} \tilde{\bm{\sigma}}_{\tt H} \odot \bm{\sigma}_{\tt H} + \frac{4\eta_{\tt H}^2}{{\gR'_{\tt ill}\pa{\mH_{\tt H}}}^4} \tilde{\bm{\sigma}}_{\tt H} \odot \bm{\sigma}_{\tt H} \odot \tilde{\bm{\sigma}}_{\tt H} }\mV_{\tt H}^\top \\
            &= \mV_{\tt H} \mathrm{Diag}\pa{\bm{\sigma}_{\tt H}'}\mV_{\tt H}^\top,
        \end{align*}
        for $\bm{\sigma}_{\tt H}' = \bracks{\sigma_{{\tt H},1}', \cdots, \sigma_{{\tt H},k_{\tt H}}'}^\top$, $\sigma_{{\tt H},i}' = \begin{cases}
            \sigma_{{\tt H},i} + \frac{4 \eta_{\tt H}}{k_{\tt H} {\gR'_{\tt ill}\pa{\mH_{\tt H}}}^2} \sigma_{{\tt H},i}^2 + \frac{4\eta_{\tt H}^2}{k_{\tt H}^2{\gR'_{\tt ill}\pa{\mH_{\tt H}}}^4}\sigma_{{\tt H},i}^3&\text{if } i < k_{\tt H} \\
            \sigma_{{\tt H},k_{\tt H}} - \frac{4 \eta_{\tt H}}{{\gR'_{\tt ill}\pa{\mH_{\tt H}}}^2}\pa{1-\frac{1}{k_{\tt H}}} \sigma_{{\tt H},k_{\tt H}}^2 + \frac{4 \eta_{\tt H}^2}{{\gR'_{\tt ill}\pa{\mH_{\tt H}}}^4}\pa{1-\frac{1}{k_{\tt H}}}^2\sigma_{{\tt H},k_{\tt H}}^3&\text{if } i=k_{\tt H}
        \end{cases}$, and $\odot$ denotes element-wise product.
        
        Since $\sigma_{\mH_{\tt H}}^{\tt min}$ is unique, we know that $\exists \ \beta \in (0,1)$ such that $\sigma_{{\tt H},k_{\tt H}} = \beta \sigma_{{\tt H},k_{\tt H}-1}$. Then we have
        \begin{align*}
            \sigma'_{{\tt H},k_{\tt H}} &= \sigma_{{\tt H},k_{\tt H}} - \frac{4 \eta_{\tt H}}{{\gR'_{\tt ill}\pa{\mH_{\tt H}}}^2}\pa{1-\frac{1}{k_{\tt H}}} \sigma_{{\tt H},k_{\tt H}}^2 + \frac{4 \eta_{\tt H}^2}{{\gR'_{\tt ill}\pa{\mH_{\tt H}}}^4}\pa{1-\frac{1}{k_{\tt H}}}^2\sigma_{{\tt H},k_{\tt H}}^3 \\
            &= \pa{1 - \frac{4 \eta_{\tt H}}{{\gR'_{\tt ill}\pa{\mH_{\tt H}}}^2}\pa{1-\frac{1}{k_{\tt H}}} \sigma_{{\tt H},k_{\tt H}} + \frac{4 \eta_{\tt H}^2}{{\gR'_{\tt ill}\pa{\mH_{\tt H}}}^4}\pa{1-\frac{1}{k_{\tt H}}}^2\sigma_{{\tt H},k_{\tt H}}^2} \sigma_{{\tt H},k_{\tt H}} \\
            &= \pa{1 - \frac{4 \eta_{\tt H}}{{\gR'_{\tt ill}\pa{\mH_{\tt H}}}^2}\pa{1-\frac{1}{k_{\tt H}}} \sigma_{{\tt H},k_{\tt H}} + \frac{4 \eta_{\tt H}^2}{{\gR'_{\tt ill}\pa{\mH_{\tt H}}}^4}\pa{1-\frac{1}{k_{\tt H}}}^2\sigma_{{\tt H},k_{\tt H}}^2} \beta \sigma_{{\tt H},k_{\tt H}-1} \\
            &= \pa{1 + \frac{4 \eta_{\tt H} \sigma_{{\tt H},k_{\tt H}}}{{k_{\tt H} \gR'_{\tt ill}\pa{\mH_{\tt H}}}^2}  + \frac{4\eta_{\tt H}^2 \sigma_{{\tt H},k_{\tt H}}^2 }{k_{\tt H}^2{\gR'_{\tt ill}\pa{\mH_{\tt H}}}^4} - \frac{4 \eta_{\tt H} \sigma_{{\tt H},k_{\tt H}} }{{\gR'_{\tt ill}\pa{\mH_{\tt H}}}^2}  + \frac{4 \eta_{\tt H}^2 \sigma_{{\tt H},k_{\tt H}}^2 }{{\gR'_{\tt ill}\pa{\mH_{\tt H}}}^4}  - \frac{8 \eta_{\tt H}^2 \sigma_{{\tt H},k_{\tt H}}^2 }{k_{\tt H} {\gR'_{\tt ill}\pa{\mH_{\tt H}}}^4} } \beta \sigma_{{\tt H},k_{\tt H}-1}.
        \end{align*}
        Letting $0 < \eta_{\tt H} < \frac{{\gR'_{\tt ill}\pa{\mH_{\tt H}}}^2}{1-2\sigma_{{\tt H},k_{\tt H}}/k_{\tt H}} = \frac{1}{1-2\sigma_{\mH_{\tt H}}^{\tt min}/k_{\tt H}}\pa{\frac{1}{2k_{\tt H}} \norm{\theta^\top \mK_{\tt H} \theta}^2_F - \frac{1}{2}\pa{\sigma_{\mH_{\tt H}}^{\tt min}}^2}^2$, we have
        \begin{align}
            - \frac{4 \eta_{\tt H} \sigma_{{\tt H},k_{\tt H}} }{{\gR'_{\tt ill}\pa{\mH_{\tt H}}}^2}  + \frac{4 \eta_{\tt H}^2 \sigma_{{\tt H},k_{\tt H}}^2 }{{\gR'_{\tt ill}\pa{\mH_{\tt H}}}^4}  - \frac{8 \eta_{\tt H}^2 \sigma_{{\tt H},k_{\tt H}}^2 }{k_{\tt H} {\gR'_{\tt ill}\pa{\mH_{\tt H}}}^4} < 0. \label{eq:lesthan0}
        \end{align} 
        Also, $1 - \frac{4 \eta_{\tt H}}{{\gR'_{\tt ill}\pa{\mH_{\tt H}}}^2}\pa{1-\frac{1}{k_{\tt H}}} \sigma_{{\tt H},k_{\tt H}} + \frac{4 \eta_{\tt H}^2}{{\gR'_{\tt ill}\pa{\mH_{\tt H}}}^4}\pa{1-\frac{1}{k_{\tt H}}}^2\sigma_{{\tt H},k_{\tt H}}^2 = \pa{1-\frac{2 \eta_{\tt H}}{{\gR'_{\tt ill}\pa{\mH_{\tt H}}}^2}\pa{1-\frac{1}{k_{\tt H}}} \sigma_{{\tt H},k_{\tt H}}}^2 > 0$ for any $\eta_{\tt H} > 0$. Given that $\sigma_{{\tt H},k_{\tt H}-1} > \sigma_{{\tt H},k_{\tt H}}$ and Eq. \eqref{eq:lesthan0},
        \begin{align*}
        \beta &< 1 \\
        &< \frac{1 + \frac{4 \eta_{\tt H}}{k_{\tt H} {\gR'_{\tt ill}\pa{\mH_{\tt H}}}^2} \sigma_{{\tt H},k_{\tt H}-1} + \frac{4\eta_{\tt H}^2}{k_{\tt H}^2{\gR'_{\tt ill}\pa{\mH_{\tt H}}}^4}\sigma_{{\tt H},k_{\tt H}-1}^2}{1 + \frac{4 \eta_{\tt H}}{k_{\tt H} {\gR'_{\tt ill}\pa{\mH_{\tt H}}}^2} \sigma_{{\tt H},k_{\tt H}} + \frac{4\eta_{\tt H}^2}{k_{\tt H}^2{\gR'_{\tt ill}\pa{\mH_{\tt H}}}^4}\sigma_{{\tt H},k_{\tt H}}^2} \\
        &< \frac{1 + \frac{4 \eta_{\tt H}}{k_{\tt H} {\gR'_{\tt ill}\pa{\mH_{\tt H}}}^2} \sigma_{{\tt H},k_{\tt H}-1} + \frac{4\eta_{\tt H}^2}{k_{\tt H}^2{\gR'_{\tt ill}\pa{\mH_{\tt H}}}^4}\sigma_{{\tt H},k_{\tt H}-1}^2}{1 + \frac{4 \eta_{\tt H}}{{\gR'_{\tt ill}\pa{\mH_{\tt H}}}^2} \sigma_{{\tt H},k_{\tt H}} + \frac{4\eta_{\tt H}^2}{k_{\tt H}^2{\gR'_{\tt ill}\pa{\mH_{\tt H}}}^4}\sigma_{{\tt H},k_{\tt H}}^2 - \frac{4 \eta_{\tt H}}{{\gR'_{\tt ill}\pa{\mH_{\tt H}}}^2} \sigma_{{\tt H},k_{\tt H}} + \frac{4 \eta_{\tt H}^2}{{\gR'_{\tt ill}\pa{\mH_{\tt H}}}^4} \sigma_{{\tt H},k_{\tt H}}^2 - \frac{8 \eta_{\tt H}^2}{k_{\tt H} {\gR'_{\tt ill}\pa{\mH_{\tt H}}}^4} \sigma_{{\tt H},k_{\tt H}}^2},
        \end{align*}
        indicating $\pa{1 + \frac{4 \eta_{\tt H}}{{k_{\tt H} \gR'_{\tt ill}\pa{\mH_{\tt H}}}^2} \sigma_{{\tt H},k_{\tt H}} + \frac{4\eta_{\tt H}^2}{k_{\tt H}^2{\gR'_{\tt ill}\pa{\mH_{\tt H}}}^4}\sigma_{{\tt H},k_{\tt H}}^2 - \frac{4 \eta_{\tt H}}{{\gR'_{\tt ill}\pa{\mH_{\tt H}}}^2} \sigma_{{\tt H},k_{\tt H}} + \frac{4 \eta_{\tt H}^2}{{\gR'_{\tt ill}\pa{\mH_{\tt H}}}^4} \sigma_{{\tt H},k_{\tt H}}^2 - \frac{8 \eta_{\tt H}^2}{k_{\tt H} {\gR'_{\tt ill}\pa{\mH_{\tt H}}}^4} \sigma_{{\tt H},k_{\tt H}}^2}\beta < 1 + \frac{4 \eta_{\tt H}}{k_{\tt H} {\gR'_{\tt ill}\pa{\mH_{\tt H}}}^2} \sigma_{{\tt H},k_{\tt H}-1} + \frac{4\eta_{\tt H}^2}{k_{\tt H}^2{\gR'_{\tt ill}\pa{\mH_{\tt H}}}^4}\sigma_{{\tt H},k_{\tt H}-1}^2$. Therefore,
        \begin{align*}
            \sigma'_{{\tt H},k_{\tt H}} &= \pa{1 + \frac{4 \eta_{\tt H} \sigma_{{\tt H},k_{\tt H}}}{{k_{\tt H} \gR'_{\tt ill}\pa{\mH_{\tt H}}}^2}  + \frac{4\eta_{\tt H}^2 \sigma_{{\tt H},k_{\tt H}}^2 }{k_{\tt H}^2{\gR'_{\tt ill}\pa{\mH_{\tt H}}}^4} - \frac{4 \eta_{\tt H} \sigma_{{\tt H},k_{\tt H}} }{{\gR'_{\tt ill}\pa{\mH_{\tt H}}}^2}  + \frac{4 \eta_{\tt H}^2 \sigma_{{\tt H},k_{\tt H}}^2 }{{\gR'_{\tt ill}\pa{\mH_{\tt H}}}^4}  - \frac{8 \eta_{\tt H}^2 \sigma_{{\tt H},k_{\tt H}}^2 }{k_{\tt H} {\gR'_{\tt ill}\pa{\mH_{\tt H}}}^4} } \beta \sigma_{{\tt H},k_{\tt H}-1} \\
            &< \pa{1 + \frac{4 \eta_{\tt H}}{k_{\tt H} {\gR'_{\tt ill}\pa{\mH_{\tt H}}}^2} \sigma_{{\tt H},k_{\tt H}-1} + \frac{4\eta_{\tt H}^2}{k_{\tt H}^2{\gR'_{\tt ill}\pa{\mH_{\tt H}}}^4}\sigma_{{\tt H},k_{\tt H}-1}^2} \sigma_{{\tt H},k_{\tt H}-1} \\
            &= \sigma'_{{\tt H},k_{\tt H}-1}.
        \end{align*}
        In addition, $\sigma'_{{\tt H},k_{\tt H}-1} = \sigma_{{\tt H},k_{\tt H}-1} + \frac{4 \eta_{\tt H}}{k_{\tt H} {\gR'_{\tt ill}\pa{\mH_{\tt H}}}^2} \sigma_{{\tt H},k_{\tt H}-1}^2 + \frac{4\eta_{\tt H}^2}{k_{\tt H}^2{\gR'_{\tt ill}\pa{\mH_{\tt H}}}^4}\sigma_{{\tt H},k_{\tt H}-1}^3 \leq \sigma_{{\tt H},i} + \frac{4 \eta_{\tt H}}{k_{\tt H} {\gR'_{\tt ill}\pa{\mH_{\tt H}}}^2} \sigma_{{\tt H},i}^2 + \frac{4\eta_{\tt H}^2}{k_{\tt H}^2{\gR'_{\tt ill}\pa{\mH_{\tt H}}}^4}\sigma_{{\tt H},i}^3  = \sigma'_{{\tt H},i}$ for $i = 1, \cdots, k_{\tt H}-2$ since $\sigma_{{\tt H},k_{\tt H}-1} \leq \sigma_{{\tt H},i}$ for $i = 2, \cdots, k_{\tt H}-1$ by definition. That is to say, $\sigma'_{{\tt H},k_{\tt H}}$ remains to be the minimum singular value of ${\theta'}^\top \mK_{\tt H} \theta'$, and $\sigma'_{{\tt H},1}$ the maximum. Finally,
        \begin{align*}
            &\kappa\pa{{\theta'}^\top \mK_{\tt H} \theta'} \\
            &= \frac{\sigma'_{{\tt H},1}}{\sigma'_{{\tt H},k_{\tt H}}}  \\
            &= \frac{\pa{1 + \frac{4 \eta_{\tt H}}{k_{\tt H} {\gR'_{\tt ill}\pa{\mH_{\tt H}}}^2} \sigma_{{\tt H},1} + \frac{4\eta_{\tt H}^2}{k_{\tt H}^2{\gR'_{\tt ill}\pa{\mH_{\tt H}}}^4}\sigma_{{\tt H},1}^2}\sigma_{{\tt H},1}}{\pa{1 + \frac{4 \eta_{\tt H}}{{k_{\tt H}\gR'_{\tt ill}\pa{\mH_{\tt H}}}^2} \sigma_{{\tt H},k_{\tt H}} + \frac{4\eta_{\tt H}^2}{k_{\tt H}^2{\gR'_{\tt ill}\pa{\mH_{\tt H}}}^4}\sigma_{{\tt H},k_{\tt H}}^2 - \frac{4 \eta_{\tt H}}{{\gR'_{\tt ill}\pa{\mH_{\tt H}}}^2} \sigma_{{\tt H},k_{\tt H}} + \frac{4 \eta_{\tt H}^2}{{\gR'_{\tt ill}\pa{\mH_{\tt H}}}^4} \sigma_{{\tt H},k_{\tt H}}^2 - \frac{8 \eta_{\tt H}^2}{k_{\tt H} {\gR'_{\tt ill}\pa{\mH_{\tt H}}}^4} \sigma_{{\tt H},k_{\tt H}}^2}\sigma_{{\tt H},k_{\tt H}}} \\
            &> \frac{\pa{1 + \frac{4 \eta_{\tt H}}{k_{\tt H} {\gR'_{\tt ill}\pa{\mH_{\tt H}}}^2} \sigma_{{\tt H},1} + \frac{4\eta_{\tt H}^2}{k_{\tt H}^2{\gR'_{\tt ill}\pa{\mH_{\tt H}}}^4}\sigma_{{\tt H},1}^2}\sigma_{{\tt H},1}}{\pa{1 + \frac{4 \eta_{\tt H}}{k_{\tt H}{\gR'_{\tt ill}\pa{\mH_{\tt H}}}^2} \sigma_{{\tt H},k_{\tt H}} + \frac{4\eta_{\tt H}^2}{k_{\tt H}^2{\gR'_{\tt ill}\pa{\mH_{\tt H}}}^4}\sigma_{{\tt H},k_{\tt H}}^2}\sigma_{{\tt H},k_{\tt H}}} \\
            &> \frac{\sigma_{{\tt H},1}}{\sigma_{{\tt H},k_{\tt H}}} \\
            &= \kappa\pa{\theta^\top \mK_{\tt H} \theta}
        \end{align*}
        where the first inequality holds by Eq. \eqref{eq:lesthan0} and the second by $\sigma_{{\tt H},1} > \sigma_{{\tt H},k_{\tt H}}$.
    \end{itemize}
\end{proof}

\clearpage
\section{Detailed Experiment Setup}\label{sec:supp_exp}
\subsection{Datasets}
\label{sec: app_dataset}
Stanford Cars~\cite{krause20133d} contains 16,185 images of 196 car models and focuses on fine-grained image classification. Country211~\cite{radford2021learning} is a dataset used for country classification based on satellite images, comprising 211 country-level labels, each with 150 training images. This is a subset of the YFCC100M dataset~\cite{thomee2016yfcc100m} providing user-generated photos and videos, used for domain adaptation evaluation. 

\subsection{Immunization training details}
\label{sec:app_training}

We summarize the hyper-parameters of training for model immunization in~\tabref{tab:exp_detail}. We choose $\lambda_{\tt P}$ and $\lambda_{\tt H}$ by balancing the gradient norm of $\gR_{\tt well}$ and $\gR_{\tt ill}$. Specifically, we obtain the scale of $\lambda_{\tt P}$ and $\lambda_{\tt H}$ first and search over multiples of $\{1, 2, 3, 5\}$. For linear models, we search over the set of $\{0.0005, 0.001, 0.005, 0.01\}$ and report the best result. For ImageNet we followed the default learning rate $\eta = 1 \times 10^{-5}$. The number of epochs is based on early stopping using ${\tt RIR}$ and the test accuracy. 
All experiments are conducted using float64 precision to ensure numerical stability and reduce potential inaccuracies in computations.
\begin{table}[h!]
    \centering
    \caption{Hyperparameters for immunization training.}
    \label{tab:exp_detail}
    \vspace{6pt}
    \resizebox{\textwidth}{!}{
    \begin{tabular}{llccccc}
        \specialrule{.15em}{.05em}{.05em}
        Dataset & Model & $\eta$ & $\lambda_{\tt P}$ & $\lambda_{\tt H}$ & Epochs & $\gL$ \\
        \hline
        HousePrice & Linear & $0.005$ & $100$ & $1 \times 10^7$ & 1000 & Mean squared error \\
        MNIST & Linear & $0.001$ & $1$ & $5 \times 10^7$ & 30 & Binary Cross-entropy (CE) \\
        ImageNet vs. Stanford Cars & ResNet18 & $1 \times 10^{-5}$ & $5 \times 10^{-5}$ & $2 \times 10^6$ & 3 & Label-smoothing CE\\
        ImageNet vs. Country211 & ResNet18 & $1 \times 10^{-5}$ & $1 \times 10^{-4}$ & $2 \times 10^6$ & 3 & Label-smoothing CE\\
        ImageNet vs. Stanford Cars & ViT & $1 \times 10^{-5}$ & $3 \times 10^{-6}$ & $3 \times 10^8$ & 2 & Label-smoothing CE\\
        ImageNet vs. Country211 & ViT & $1 \times 10^{-5}$ & $1 \times 10^{-6}$ & $1 \times 10^8$ & 2 & Label-smoothing CE\\
        \specialrule{.15em}{.05em}{.05em}
    \end{tabular}
    }    
\end{table}

\myparagraph{Details of immunizing linear models.}
For the regression task, the linear feature extractor $\theta \in \sR^{79 \times 79}$ is a randomly initialized dummy linear layer, as discussed in~\secref{sec:add_discussion}. We handle missing values in the tabular data by filling NaNs with 0. Categorical features are converted into numerical values using \texttt{LabelEncoder}. Finally, the features and labels are normalized using their respective mean and standard deviation. To create $\gD_{\tt P}$ and $\gD_{\tt H}$, we split the House prices dataset~\cite{house-prices} by the feature $\tt{MSZoning}$. Specifically, all entries where $\tt{MSZoning}$ = `RL' are assigned to $\gD_{\tt H}$, while the remaining entries form $\gD_{\tt P}$.

For the binary classification task on MNIST, the linear feature extractor $\theta \in \sR^{784 \times 784}$ is also a randomly initialized dummy linear layer, and we construct a training dataset by selecting two specific target digits. The dataset is created using a custom BinaryDataset class, which filters the original MNIST dataset to include only the chosen digits and assigns new labels: one digit is mapped to label 0 and the other to label 1. To ensure balance in the dataset, we limit the number of samples for each digit to the smaller count between the two. For optimization, we use Adam~\cite{kingma2014adam} with $\beta = (.9, 0.999)$ and $\epsilon = 1 \times 10^{-8}$ instead of the basic gradient descent in~\algref{alg:con_grad}. For the linear model, we computed the Hessian inverse by solving a regularized least-squares system, where the Hessian is in the shape of $\sR^{D_{\tt in} \times D_{\tt in}}$. Here $\gD_{\tt in} = 79$ for the regression task and $\gD_{\tt in} = 784$ for the image classification task.

\myparagraph{Details of immunizing non-linear models.}
The pre-trained ResNet18 and ViT are loaded from Pytorch Image Models~\cite{rw2019timm} with the model name \texttt{resnet18} and \texttt{vit\_base\_patch16\_clip\_224}. We also create the dataset with the built-in function \texttt{create\_dataset} from~\citet{rw2019timm}. The feature embedding sizes for ResNet18 and ViT are 512 and 768, respectively. To facilitate balanced training when dataset sizes differ, we implement a \texttt{CombinedLoader}, which pairs batches from two data loaders. The longer dataset dictates training duration, while the shorter dataset cycles continuously using itertools. The number of epochs reported in~\tabref{tab:exp_detail} corresponds to the epochs of $\gD_{\tt H}$, \ie, the shorter loader.

For optimization, we use SGD with Nesterov momentum to optimize~\equref{eq:immu_obj}, setting an initial learning rate of $1 \times 10^{-5}$ with momentum 0.9. The trainable feature extractor parameters are optimized with zero weight decay, while the classifier parameters use a weight decay of $2 \times 10^{-5}$.

\subsection{Pseudo-code of the dummy layer}
\label{sec:pseudocode}
We provide the Pseudo-code for implementing the dummy layer in~\figref{fig:pseudocode} below. The \texttt{DummyLinear} layer extends \texttt{torch.nn.Linear} and incorporates an optional preconditioning mechanism in the backward pass using the inverse feature covariance matrix. The \texttt{LinearFunction} class defines the forward and backward computations, where the forward pass applies a standard linear transformation $\mX\mW^\top + \vb$ and stores the input, weight, and bias for gradient computation. In the backward pass, the input gradient is computed normally, while the weight gradient is modified based on whether preconditioning is enabled (\texttt{use\_precond=True}). If enabled, the weight gradient is adjusted by solving a \textit{regularized least-squares system} using the inverse of the feature covariance matrix $\mX^\top\mX + \epsilon \mI$, improving numerical stability.
\begin{figure}[ht!]
\begin{lstlisting}
class LinearFunction:
    @staticmethod
    def forward(ctx, input, weight, bias, lambda_reg, use_precond):
        ctx.save_for_backward(input, weight, bias)
        ctx.lambda_reg = lambda_reg
        ctx.use_precond = use_precond
        output = input.mm(weight.t())
        if bias is not None:
            output += bias.unsqueeze(0).expand_as(output)
        return output

    @staticmethod
    def backward(ctx, grad_output):
        input, weight, bias = ctx.saved_tensors
        lambda_reg = ctx.lambda_reg
        use_precond = ctx.use_precond
        grad_input = grad_weight = grad_bias = None
        if ctx.needs_input_grad[0]:
            grad_input = grad_output.mm(weight)
        if ctx.needs_input_grad[1]:
            base_grad_weight = grad_output.t().mm(input)
            if use_precond:
                XtX = input.t().mm(input)
                lambda_eye = lambda_reg * torch.eye(XtX.size(0), device=XtX.device)
                XtX_reg = XtX + lambda_eye
                grad_weight = torch.linalg.solve(XtX_reg, base_grad_weight)
            else:
                grad_weight = base_grad_weight
        if bias is not None and ctx.needs_input_grad[2]:
            grad_bias = grad_output.sum(0)
        return grad_input, grad_weight, grad_bias, None, None

class DummyLinear(nn.Linear):
    def forward(self, input, lambda_reg, use_precond):
        return LinearFunction.apply(input, self.weight, self.bias, lambda_reg, use_precond)
\end{lstlisting}
\vspace{-0.5cm}
\caption{Dummy layer with selective inverse feature covariance matrix in backward function.}
\label{fig:pseudocode}
\end{figure}

\twocolumn

\end{document}